\newtheorem{theorem}{Theorem}[section]
\newtheorem{corollary}[theorem]{Corollary}
\newtheorem{lemma}[theorem]{Lemma}
\newtheorem{definition}[theorem]{Definition}
\DeclareMathOperator{\poly}{poly}
\let\oldnl\nl
\newcommand{\nonl}{\renewcommand{\nl}{\let\nl\oldnl}}
\begin{document}
\title{The Role of Randomness in Stability}
\author{Max Hopkins\thanks{Princeton University. Email: \texttt{mh4067@princeton.edu}} \and Shay Moran\thanks{Technion \& Google Research.
Email: \texttt{smoran@technion.ac.il}}}

\maketitle

\begin{abstract}
Stability is a central property in learning and statistics promising the output of an algorithm $\mathcal{A}$ does not change substantially when applied to similar datasets $S$ and $S'$. It is an elementary fact that any sufficiently stable algorithm (e.g.\ one returning the same result with high probability, satisfying privacy guarantees, etc.) must be randomized. This raises a natural question: can we quantify \textit{how much} randomness is needed for algorithmic stability?

We study the randomness complexity of two influential notions of stability in learning: \textit{replicability}, which promises $\mathcal{A}$ usually outputs the same result when run over samples from the same distribution (and shared random coins), and \textit{differential privacy}, which promises the output distribution of $\mathcal{A}$ remains similar under neighboring datasets. The randomness complexity of these notions was studied recently in (Dixon, Pavan, Vander Woude, and Vinodchandran ICML 2024) and (Cannone, Su, and Vadhan ITCS 2024) for basic $d$-dimensional tasks (e.g. estimating the bias of $d$ coins), but little is known about the measures more generally or in complex settings like classification. 

Toward this end, we prove a `weak-to-strong' boosting theorem for stability: the randomness complexity of a task $\mathcal{M}$ (either under replicability or DP) is tightly controlled by the best replication probability of any \textit{deterministic} algorithm solving $\mathcal{M}$, a weak measure called $\mathcal{M}$'s `global stability' that is universally capped at $\frac{1}{2}$ (Chase, Moran, Yehudayoff FOCS 2023). Using this connection, we characterize the randomness complexity of PAC Learning: a class has bounded randomness complexity iff it has finite Littlestone dimension, and moreover scales at worst logarithmically in the excess error of the learner. This resolves a question of (Chase, Chornomaz, Moran, and Yehudayoff STOC 2024) who asked for such a characterization in the equivalent language of (error-dependent) `list-replicability'.
\end{abstract}
\section{Introduction}
Stability is a central property in learning and statistics promising the output of an algorithm $\mathcal{A}$ remains similar when applied to similar datasets $S$ and $S'$. It is an elementary fact that any sufficiently stable algorithm (e.g.\ one returning the same result with high probability, or one satisfying differential privacy) is either randomized, or constant.\footnote{Why? Roughly, if one can interpolate between any two $S$ and $S'$ via a path of `similar' samples, then any deterministic algorithm must give the same result throughout the entire path, and therefore on $S$ and $S'$ as well.} This raises a natural question: \textit{how much} randomness is needed for algorithmic stability? 

In this work, we study the role of randomness in two fundamental notions of stability: replicability, and differential privacy. Replicability is the core scientific and algorithmic tenet that an experiment, run twice on data from the same underlying distribution, should produce the same output with high probability. Given a statistical task $\mathcal{M}$ and an algorithm for the task $\mathcal{A}$ (e.g. hypothesis selection, classification), one might reasonably define the replicability of $\mathcal{A}$ to be its (worst-case) collision probability over independent samples, that is:
\[
\min_{D}\left\{\Pr_{S \sim D^n}[\mathcal{A}(S) = \mathcal{A}(S')]\right\}
\]
where $D$ ranges over the possible data distributions of $\mathcal{M}$. To achieve true replicability, we'd hope to construct an algorithm with collision probability near $1$. Unfortunately, it turns out this is unachievable. The best such parameter, called the \textit{global stability} of $\mathcal{M}$ \cite{bun2020equivalence}, is universally capped at $\frac{1}{2}$ for any non-trivial statistical task \cite{chase2023replicability}. Motivated by this fact,\footnote{More accurately, motivated by a special case of this fact known at the time.} \cite{ImpLPS22} recently proposed an elegant relaxation of global stability allowing \textit{shared internal randomness}, calling an algorithm $\mathcal{A}$ \textit{$\rho$-replicable} if over independent samples $S,S'$ and a shared random string $r$:
\[
\forall D: \Pr_{S,S' \sim D^n,r}[\mathcal{A}(S;r)=\mathcal{A}(S';r)] > 1-\rho.
\]
In \cite{ImpLPS22}'s framework, global stability then exactly measures the best replicability parameter achieved by any \textit{deterministic} algorithm solving $\mathcal{M}$.\footnote{Traditionally a globally stable algorithm isn't assumed to be deterministic, but one can always de-randomize such an algorithm and achieve the same collision probability. See \Cref{sec:global}.}

In \cite{dixon2023list}, the authors introduce \textit{certificate complexity}, the smallest number of random bits needed to achieve $\rho$-replicability. They prove tight bounds on the certificate complexity of several basic $d$-dimensional tasks, as well as showing near-matching bounds on the global stability.\footnote{Formally, \cite{dixon2023list} study a slightly different parameter called list-replicability, which is essentially equivalent to global stability \cite{chase2023replicability}.} Intuitively, it is reasonable to think there should be a general connection between the certificate complexity of a task and its global stability --- the better the global stability, the better replication we can achieve using \textit{no} random bits, so the easier it should be to achieve any particular replication threshold $\rho$. Our first main result confirms this intuition, proving a sort of `weak-to-strong' boosting theorem for replicability: the number of random bits needed to beat $\frac{1}{2}$-replication probability is (up to a single bit) exactly inverse log of the global stability. Furthermore, as one might suspect, this can be amplified to any $\rho$-replicability using an additional $\log(1/\rho)$ random bits.

While replicability and global stability are important notions in their own right, they have perhaps been most impactful in their close relation to the widely influential notion of \textit{differential privacy} \cite{dwork2006calibrating}. An algorithm for a statistical task is called differentially private if its output distribution remains similar under any two neighboring datasets. Like replicability, differentially private algorithms are inherently randomized, leading to the analogous notion of \textit{DP complexity} \cite{canonne2024randomness} measuring the smallest number of internal random bits required to achieve privacy. Our second main result is an analogous `weak-to-strong' boosting theorem for differential privacy, closely tying DP Complexity to a task's underlying global stability. However, due to privacy's multiple parameters and nuanced dependence on sample complexity, our results in this context cannot be said to give an exact equivalence as above, and it remains an interesting open problem to fully characterize the relation between them.

Boosting in hand, we turn to look at the randomness complexity of one of the best studied tasks in learning: binary classification. We focus on the classical PAC model \cite{vapnik1974theory,valiant1984theory}, where a learner, given sample access to a distribution of labeled examples~$D$, must produce a labeling from some fixed class $H$ that is close to the best possible option with high probability. The global stability of PAC learning is quite well studied \cite{bun2020equivalence,ghazi2021sample,bun2023stability,chase2023replicability}. In the `realizable setting', where the data is assumed to be consistent with some $h \in H$, it is known how to construct a $2^{-2^{O(d)}}$-globally stable algorithm \textit{independent of the learner's error}, for $d$ the Littlestone dimension of the class. Recently, \cite{chase2024local} proved this result does not extend to the general `agnostic' setting and ask whether it is instead possible to prove a bound that decays with the excess error $\alpha$. We resolve this problem: a class $H$ has bounded error-dependent global stability (and therefore also certificate complexity) if and only if it has finite Littlestone dimension. Moreover, the randomness complexity suffers only mild dependence on the error, scaling at worst as $O(\log\frac{1}{\alpha})$.

\subsection{Main Results}
We briefly overview our main setting of study. A \textbf{statistical task} $\mathcal{M}$ consists of a \textit{data domain} $\mathcal{X}$, an \textit{output domain} $\mathcal{Y}$, and, for every distribution $D$ over $\mathcal{X}$, a family $G_D \subset \mathcal{Y}$ of `accepted solutions' for the task. An algorithm $\mathcal{A}: \mathcal{X}^* \to \mathcal{Y}$ `solves' the task $\mathcal{M}$ if for any $\beta>0$, given sufficiently many samples $n(\beta)$ from any $D$ over $\mathcal{X}$, $\mathcal{A}$ outputs $y \in G_D$ with probability at least $1-\beta$. At a parametrized level, for fixed $\beta$ and $n(\beta)$, we say the algorithm solves $\mathcal{M}$ with \textit{confidence} $\beta$ and \textit{sample complexity} $n(\beta)$.

Recall an algorithm $\mathcal{A}$ is called $\rho$-\textbf{replicable} if over two independent samples and a shared random string, it returns the same output with high probability:
\[
\forall D: \Pr_{r,S,S' \sim D^n}[\mathcal{A}(S;r)=\mathcal{A}(S';r)] \geq 1-\rho.
\]
The (parameter-free) \textbf{certificate complexity} of a task $\mathcal{M}$, denoted $C_{\text{Rep}}$, is the smallest $\ell$ such that there exists a better than $\frac{1}{2}$-replicable algorithm solving $\mathcal{M}$, or, more explicitly, the smallest $\ell$ such that for every $\beta>0$ there exists $n(\beta) \in \mathbb{N}$ and a better than $\frac{1}{2}$-replicable algorithm on $n(\beta)$-samples and $\ell$-random bits solving $\mathcal{M}$ with confidence $\beta$.

The \textbf{global stability} of a task $\mathcal{M}$ is the best replicability that can be achieved using \textit{no} internal randomness. In particular, an algorithm $\mathcal{A}$ is called $\eta$-globally stable if over independent samples $S,S'$:
\[
\forall D: \Pr_{S,S' \sim D^n}[\mathcal{A}(S)=\mathcal{A}(S')] \geq \eta.
\]

The \textbf{globally-stable complexity} (g-stable complexity), denoted $C_{\text{Glob}}$, is $\log\frac{1}{\eta_{\mathcal{M}}}$, where $\eta_{\mathcal{M}}>0$ is the supremum across $\eta$ for which there is a deterministic $\eta$-globally stable algorithm solving $\mathcal{M}$. Note that any $\eta$-globally stable algorithm automatically has an output which is an \textbf{$\eta$-heavy-hitter}, that is some $y \in \mathcal{Y}$ for which $\Pr[\mathcal{A}(S)=y] \geq \eta$. The traditional notion of global stability \cite{bun2020equivalence} only requires this latter condition. In \Cref{sec:global} we show our variant is equivalent up to minor differences in sample complexity (meaning in particular the g-stable complexity \textit{does not depend} on which definition you take).

With this in mind, our first main result is a weak-to-strong boosting lemma for replicability: the number of random bits needed to achieve high probability replicability is exactly controlled by the best replication probability of any deterministic algorithm, i.e.\ g-stable and certificate complexity are (essentially) equivalent.
\begin{theorem}[Stability vs Replicability (\Cref{thm:equiv})]\label{thm:intro-parameter-free}
    Let $\mathcal{M}$ be any statistical task. Then:
    \[
    C_{\text{Glob}} \leq C_{\text{Rep}} \leq C_{\text{Glob}}+1.
    \]
    Moreover, the number of random bits required to achieve $\rho$-replicability is at most $\lceil C_{\text{Glob}}+\log(1/\rho) \rceil$.
\end{theorem}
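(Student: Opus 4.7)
The plan is to prove the two inequalities separately. The easier direction is the upper bound, handled by a shared-threshold construction on the empirical output distribution of $\mathcal{A}$, while the lower bound requires a more delicate derandomization via output-space enlargement.

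For the upper bound $C_{\text{Rep}} \leq \lceil C_{\text{Glob}} + \log(1/\rho)\rceil$ (specializing to $C_{\text{Rep}} \leq C_{\text{Glob}} + 1$ at $\rho = \tfrac12$): let $\mathcal{A}$ be a deterministic $\eta$-globally stable algorithm with $\eta = 2^{-C_{\text{Glob}}}$, so that for every $D$ the output distribution $P_D = \mathcal{A}(D^n)$ has a heavy hitter of weight $\geq \eta$ and at most $2/\eta$ elements of weight $\geq \eta/2$ (both consequences of the Cauchy--Schwarz identity $\sum_y P_D(y)^2 \geq \eta$). The replicable $\mathcal{B}$ draws $m = \tilde{\Theta}((\eta\rho)^{-2})$ independent subsamples, runs $\mathcal{A}$ on each to build an empirical distribution $\hat{P}$ with coordinate-wise error $\varepsilon = \Theta(\eta\rho)$, interprets its $\ell = \lceil \log(1/\eta) + \log(1/\rho)\rceil$ shared random bits as a uniform threshold $\tau \in [0,1)$ at precision $2^{-\ell}$, and outputs the lexicographically-first $y$ with $\hat{P}(y) \geq \tau$ (defaulting to $\bot$ if none exists). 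Two independent runs on $S, S'$ disagree only when $\tau$ lies between $\hat{P}_S(y)$ and $\hat{P}_{S'}(y)$ for some $y$; restricting to the at most $2/\eta$ elements of true weight $\geq \eta/2$, the total bad-$\tau$ measure is at most $(2/\eta)(2\varepsilon) = O(\rho)$, yielding $\rho$-replicability.

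For the lower bound $C_{\text{Glob}} \leq C_{\text{Rep}}$: given an $\ell$-bit, $\rho$-replicable $\mathcal{A}$ with $\rho < \tfrac12$, I seek a deterministic algorithm with self-collision $\geq 2^{-\ell}$ on every $D$. The key identity
\[
\sum_{r, y}\bigl(2^{-\ell}\,\Pr_S[\mathcal{A}(S;r) = y]\bigr)^2 \;=\; 2^{-\ell}\,\mathbb{E}_r[p_r(D)] \;\geq\; 2^{-\ell}(1-\rho)
\]
shows that the joint randomized output $(\mathcal{A}(S;r), r)$ has self-collision $\geq 2^{-\ell}(1-\rho)$ in the enlarged space $\mathcal{Y} \times \{0,1\}^\ell$ when $r$ is uniform. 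I then derandomize by sample-splitting: have $\mathcal{B}$ use a held-out chunk of its sample to deterministically extract a near-uniform seed $r$ (via a randomness extractor or hash function) and run $\mathcal{A}$ on the remainder with this seed. The resulting deterministic algorithm achieves collision matching $2^{-\ell}(1-\rho)$ up to extraction error, and projecting out the seed preserves the bound. To close the gap from $2^{-\ell}(1-\rho)$ to $2^{-\ell}$ in the supremum $\eta_{\mathcal{M}}$, I drive $\rho$ toward $0$ by increasing the sample size while holding $\ell$ fixed, exploiting that the randomized self-collision of $\mathcal{A}$ improves as its output distribution concentrates with more data. This yields $\eta_{\mathcal{M}} \geq 2^{-\ell}$, i.e.\ $C_{\text{Glob}} \leq C_{\text{Rep}}$.

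The main technical obstacle is the derandomization: a naive pointwise averaging only produces a good seed \emph{per} distribution $D$, while the target is a single universal deterministic algorithm working simultaneously on every $D$. Sample-splitting with a randomness extractor resolves this but requires careful handling across the spectrum of input distributions --- low-entropy distributions, at the extreme, make the randomized algorithm essentially deterministic already and so only help. The ``moreover'' clause with the $\lceil C_{\text{Glob}} + \log(1/\rho)\rceil$ random-bit bound follows immediately from the boosting construction, which was parameterized by $\rho$ from the outset.
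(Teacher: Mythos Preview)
Your upper-bound construction is close in spirit to the paper's, but the threshold placement is off. Spreading $\tau$ uniformly over $[0,1)$ means you cannot ``restrict to the at most $2/\eta$ elements of true weight $\geq \eta/2$'': when $\tau$ lands below $\eta/2$ (which happens with probability $\eta/2$, not $O(\rho)$), arbitrarily many light elements may cross it, and when $\tau$ lands above $\max_y P_D(y)$ the output is $\bot$. The paper instead places all $T \approx (2^{C_{\text{Glob}}}-1)/\rho$ thresholds in a narrow band $\{\eta-\gamma,\eta-2\gamma,\dots,\eta-T\gamma\}$ with $T\gamma \ll \eta$; this guarantees the heavy hitter always passes and that at most $1/\eta$ elements are ever relevant, so at most $1/\eta-1$ of the $T$ thresholds can be ``bad.''

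The lower bound has two genuine gaps. First, the derandomization via extracting a seed from a held-out sample chunk does not work across all $D$: a distribution with min-entropy strictly between $0$ and $\ell$ admits neither extraction of $\ell$ near-uniform bits nor the ``essentially deterministic'' escape hatch you invoke, and there is no interpolating argument offered. Second---and this is the crucial missing idea---you cannot ``drive $\rho$ toward $0$ by increasing the sample size while holding $\ell$ fixed.'' The definition of $C_{\text{Rep}}$ hands you one fixed algorithm that is merely $(\tfrac12+\gamma)$-replicable on $\ell$ bits; nothing promises its replicability improves with more data, and so your argument only yields $\eta_{\mathcal{M}} \geq 2^{-\ell}(1-\rho) > 2^{-\ell-1}$, i.e.\ $C_{\text{Glob}} < C_{\text{Rep}}+1$, not $C_{\text{Glob}} \leq C_{\text{Rep}}$. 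The paper closes this gap by a different amplification: since some string $r_D$ has replication $>\tfrac12$, there is a canonical $h_D$ with $\Pr_S[\mathcal{A}(S;r_D)=h_D] > \tfrac12$, and taking the \emph{majority} of $O(\gamma^{-2}\log(1/\tau))$ independent runs of $\mathcal{A}(\cdot;r)$ boosts this to $1-\tau$ conditioned on $r=r_D$. The resulting (randomized) algorithm therefore has $h_D$ as a $2^{-\ell}(1-\tau)$-heavy-hitter for \emph{any} $\tau>0$, and the paper's separate heavy-hitter-to-deterministic conversion (output the empirical mode, then derandomize by taking the plurality over all internal coin sequences) finishes the job without any extractor.
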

Replicability and global stability have played an important role in recent work constructing efficient differentially private algorithms in learning and statistics \cite{bun2020equivalence,GhaziKM21,bun2023stability,kalavasis2023statistical}). An algorithm $\mathcal{A}$ is said to be $(\varepsilon,\delta)$\textbf{-differentially private} if for any two datasets $S,S' \in \mathcal{X}^n$ differing in only one coordinate, the output distribution of $\mathcal{A}$ is nearly indistinguishable in the following sense. For any measurable event~$\mathcal{O} \subset \mathcal{Y}$:
\[
\Pr[\mathcal{A}(S) \in \mathcal{O}] \leq e^{\varepsilon}\Pr[\mathcal{A}(S') \in \mathcal{O}]+\delta.
\]
In recent work, \cite{canonne2024randomness} introduce the \textit{DP (randomness) complexity} of a task $\mathcal{M}$, denoted $C_{DP}(n,\beta,\varepsilon,\delta)$, measuring the minimum number of random bits required to construct an $(\varepsilon,\delta)$-DP algorithm on $n$ samples solving $\mathcal{M}$ with $\beta$-confidence. Unlike our prior notions, DP complexity is \textit{parametrized} due to the fact that there is no clear `threshold' to set for the privacy parameters $(\varepsilon,\delta)$ as in the replicability setting (i.e.\ $\rho=1/2$). Furthermore, $(\varepsilon,\delta)$-privacy is only meaningful when the parameters are taken as functions of the sample complexity $n$ (and therefore confidence $\beta$), leading to the somewhat cumbersome $C_{DP}(n,\beta,\varepsilon,\delta)$.

As a result of the above, the connection between DP complexity and global stability is somewhat more nuanced than the parameter-free equivalence in \Cref{thm:intro-parameter-free}. We will consider two variants of the connection. In the first, we compare DP complexity to analogously defined parametrized g-stable and certificate complexities $C_{\text{Glob}}(n,\beta)$ and $C_{\text{Rep}}(n,\beta)$ (see \Cref{sec:prelims-DP} for formal definition).
\begin{theorem}[Stability vs DP (Informal \Cref{thm:DP-list})]\label{thm:intro-list-DP}
    There exists a universal constant $c>0$ such that for any statistical task $\mathcal{M}$:
    \begin{enumerate}
        \item \textbf{(Stability to DP):} $C_{DP}(n,\beta,\varepsilon,\delta) \leq C_{\text{Glob}}(n',\beta')+\log(1/\varepsilon)+\log(1/\delta)+\tilde{O}(1)$
        \\
        \\
        \vspace{.2cm}
        % \noindent \hspace{-.8cm}
        for any $n \geq n'\cdot \exp(C_{\text{Glob}}(n',\beta'))\frac{\log(\frac{1}{\delta})\log(\frac{1}{\beta})}{\varepsilon}$ and $\beta \geq \beta' \cdot  \exp(C_{\text{Glob}}(n',\beta'))\frac{\log(\frac{1}{\delta})}{\varepsilon}$
        % \poly(\varepsilon,\log^{-1} \frac{1}{\delta}, \log^{-1} \frac{1}{\beta}, \exp(-C_{\text{Glob}}))$.
        \item \textbf{(DP to Stability):} $C_{\text{Glob}}(n,\beta) \leq C_{DP}(n',\beta',\varepsilon,\delta) + O(1)$,
    \end{enumerate}
    for any $(n',\varepsilon,\delta)$ satisfying $\varepsilon \leq \frac{c}{\sqrt{n'\log(n')}}$ and $\delta \leq \frac{c}{n'}$, $n \geq n' \exp(C_{\text{Glob}}(n,\beta))$, and $\beta' \geq \beta \exp(C_{\text{Glob}}(n,\beta))$
\end{theorem}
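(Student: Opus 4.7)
The plan is to combine Theorem~\ref{thm:intro-parameter-free} with existing reductions between differential privacy and replicability/global stability, tracking random bits carefully on each side.

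For the Stability-to-DP direction, I would first invoke Theorem~\ref{thm:intro-parameter-free} to boost the optimal deterministic $\eta$-globally stable algorithm (with $\eta = 2^{-C_{\text{Glob}}(n',\beta')}$) into a $\rho$-replicable algorithm using $C_{\text{Glob}}(n',\beta') + \lceil\log(1/\rho)\rceil + O(1)$ random bits. I would then privatize via a standard replicability-to-DP conversion: run $k \approx \log(1/\delta)/\varepsilon$ copies of the replicable algorithm (with a shared seed) on fresh sub-samples of size $n'$, producing a histogram whose modal bucket (the canonical replicable output) has size $\geq (1-k\rho)k$, and release it with an $(\varepsilon,\delta)$-DP stable histogram mechanism. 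Choosing $\rho \asymp \varepsilon/\log(1/\delta)$ ensures the mode is stable with overwhelming probability, and the histogram mechanism only consumes an additional $O(\log(1/\varepsilon)+\log(1/\delta))$ random bits (for the Laplace noise and threshold check). The sample inflation $n \gtrsim n' \cdot k \cdot \exp(C_{\text{Glob}})$ and the confidence inflation fall out of the sub-sampling and boosting steps.

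For the DP-to-Stability direction, I would extract a deterministic globally stable algorithm directly from the DP algorithm $\mathcal{A}$ by averaging over its internal randomness. Let $r = C_{DP}(n',\beta',\varepsilon,\delta)$. For two independent samples $S,S' \sim D^{n'}$, standard concentration gives Hamming distance $\|S-S'\|_H = O(\sqrt{n'\log n'})$ with probability $1-O(1/n')$, so by group privacy the hypotheses $\varepsilon \leq c/\sqrt{n'\log n'}$ and $\delta \leq c/n'$ force the output distributions $P_S := \mathcal{A}(S;\cdot)$ and $P_{S'}$ to be $(O(1),O(1/\sqrt{n'}))$-indistinguishable. Hence $\langle P_S, P_{S'}\rangle$ is within a constant factor of $\|P_S\|_2^2$, and since $P_S$ is supported on at most $2^r$ outcomes we have $\|P_S\|_2^2 \geq 2^{-r}$. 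Jensen's inequality then yields
\[
\mathop{\mathbb{E}}_r\!\left[\Pr_{S,S'}[\mathcal{A}(S;r)=\mathcal{A}(S';r)]\right] \;\geq\; \Omega\!\left(\mathop{\mathbb{E}}_S \|P_S\|_2^2\right) \;\geq\; \Omega(2^{-r}),
\]
so some fixed $r^*$ makes $\mathcal{A}_{r^*}$ a deterministic $\Omega(2^{-r})$-globally stable algorithm, giving $C_{\text{Glob}} \leq C_{DP} + O(1)$.

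The main technical obstacle is uniformity over $D$: the averaging in Part~2 yields, for each distribution $D$, a good $r^*_D$, but global stability demands a single random string working for \emph{all} $D$ simultaneously while also preserving the confidence parameter $\beta$. I expect to resolve this via a chunked probabilistic-method construction: partition $n \gtrsim n' \cdot \exp(C_{\text{Glob}}(n,\beta))$ samples into $\exp(C_{\text{Glob}})$ blocks of size $n'$, enumerate over random seeds, and select a good seed/block pair by a deterministic cross-validation rule defined entirely over the observed sample. This accounts for the inflation $n \geq n'\exp(C_{\text{Glob}}(n,\beta))$ and $\beta' \geq \beta\exp(C_{\text{Glob}}(n,\beta))$ appearing in the hypothesis. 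Part~1's main difficulty, by contrast, is the more mundane bookkeeping of sample complexity and confidence across the boosting and stable-histogram steps.
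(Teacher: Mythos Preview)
Your Item~2 argument contains a genuine error: the claim that two independent samples $S,S' \sim D^{n'}$ have Hamming distance $O(\sqrt{n'\log n'})$ with high probability is false. The Hamming distance $\sum_i \mathbb{1}[S_i \neq S'_i]$ has expectation $n' \cdot \Pr_{x,y\sim D}[x\neq y]$, which is $\Theta(n')$ whenever $D$ has nontrivial support (and equals $n'$ almost surely for any continuous $D$). Group privacy therefore yields only a vacuous $(n'\varepsilon,\,n'e^{n'\varepsilon}\delta)$-closeness between $P_S$ and $P_{S'}$, and the rest of your $\ell_2$ calculation collapses. No coupling or permutation-invariance trick rescues this: even after optimally matching coordinates, a continuous $D$ gives disagreement in every position.

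The paper avoids this entirely by invoking the black-box result that $(\varepsilon,\delta)$-DP with $\varepsilon \lesssim 1/\sqrt{n'\log n'}$ and $\delta \lesssim 1/n'$ implies $(0.5,0.5,0.5)$-\emph{perfect generalization} (Ghazi et al.): for at least half of all samples $S$, the conditional distribution $\mathcal{A}(S;\cdot)$ is $(0.5,0.5)$-indistinguishable from the full marginal $\mathcal{A}(\cdot,\cdot)$. Fixing one such $S$ and letting $L = \text{Supp}(\mathcal{A}(S;\cdot))$ (so $|L|\le 2^\ell$), indistinguishability applied to the event $L$ gives $\Pr_{S',r}[\mathcal{A}(S',r)\in L] \ge (1-0.5)/e^{0.5} = \Omega(1)$, hence $\mathcal{A}$ has an $\Omega(2^{-\ell})$-heavy-hitter. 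This is a statement about the \emph{marginal} output law, not about pairs of samples, and its proof does not go through Hamming distance. Your ``uniformity over $D$'' obstacle then evaporates: a heavy-hitter (possibly $D$-dependent) is exactly the hypothesis of the paper's \Cref{cor:parametrized-old-to-new}, which converts it into a single deterministic globally stable algorithm without any ad-hoc cross-validation over seeds.

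Your Item~1 route (boost to $\rho$-replicable, run $k$ shared-seed copies, release the mode via a private histogram) is different from the paper's but should work with care. The paper instead builds a bounded-support $(\varepsilon,\delta/2)$-DP algorithm directly---list-replicability plus DP-Selection, with a dummy element forced into the dataset so the support never exceeds $T=O(\log(1/\delta)/(\eta\varepsilon))$---and then compresses the randomness via the observation that any distribution supported on $T$ points is $\delta/2$-approximable using $\log T + \log(2/\delta)$ bits. Your claim that the histogram step costs only $O(\log(1/\varepsilon)+\log(1/\delta))$ random bits is not obvious without a similar compression argument; naive discrete Laplace noise over up to $k$ buckets would overspend. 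If you pursue your route, you will need to invoke the same bounded-support sampling lemma (or argue it for the exponential mechanism over $\le k$ candidates) to close that accounting.
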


Qualitatively, \Cref{thm:intro-list-DP} (Item 1) simply states that given an $\eta$-globally stable (deterministic) algorithm, we can `boost' it into an ($\varepsilon,\delta$)-DP algorithm using only $\log(1/\eta)+\log(1/\varepsilon)+\log(1/\delta)$-random bits, where the new algorithm may have somewhat higher failure probability ($\beta$ vs $\beta'$) and sample complexity ($n$ vs $n'$) than the original. This should be compared to the analogous statement in replicability (\Cref{thm:intro-parameter-free}), which roughly stated an $\eta$-globally stable algorithm can be boosted to a $\rho$-replicable one using $\log(1/\eta)+\log(1/\rho)$ random bits. \Cref{thm:intro-list-DP} (Item 2) states the (weak) converse also holds: given a good enough DP algorithm on $\log(1/\eta)$ random bits, we can `reverse-engineer' a deterministic $\eta$-globally stable algorithm from it, again paying in the sample size and confidence.

We remark the blowup in sample complexity and confidence in \Cref{thm:intro-list-DP} is often mild --- while the exponential dependence on $C_{\text{Glob}}$ looks large, $C_{\text{Glob}}$ is log-scale, so the blowup is only polynomial in stability similar to prior works in the area \cite{bun2020equivalence,bun2023stability}. Nevertheless, the core focus of our work is on one specific computational resource---randomness. As a result, we will sometimes compromise on other resources as we do here to better understand the fundamental limits of the main one.

Nevertheless, since the quantitative statement of \Cref{thm:intro-list-DP} is fairly cumbersome otherwise, it is still worth taking a minute to explain the various dependencies before moving on. Recall our goal: given an $\eta$-globally stable algorithm $\mathcal{A}_{\text{global}}$ on $n'$ samples that succeeds with probability $1-\beta'$, we wish to use $\mathcal{A}_{\text{global}}$ as a subroutine to construct an $(\varepsilon,\delta)$-DP algorithm solving the same problem with slightly worse probability $1-\beta$ and using slightly more $(n)$ samples. As in prior work, the key to doing this is to build a large database of $\mathcal{A}_{\text{global}}$'s \textit{$\eta$-heavy-hitters}, outputs of $\mathcal{A}_{\text{global}}$ that appear with probability at least $\eta$. By standard concentration bounds, this can be done by running $\mathcal{A}_{\text{global}}$ roughly $O\left(\frac{\log \frac{1}{\beta}}{\eta^2}\right)$ times and looking at empirical output probabilities. Outputting one of these heavy-hitters privately actually requires us to run this estimation process $O\left(\frac{\log \frac{1}{\delta}}{\eta\varepsilon}\right)$ times to ensure the resulting `dataset' of heavy hitters isn't too strongly affected by changing any single input sample in the process. In total, the above uses $n = n' \cdot \frac{\log(\frac{1}{\delta})\log(\frac{1}{\beta})}{\eta^3\varepsilon}$ total samples and setting $\eta$ to be the best achievable replication probability ($2^{-C_{\text{Glob}}(n',\beta')}$) gives the stated dependence. The decay of the success probability $\beta$ can similarly be thought of as coming from bounding the probability any individual subroutine $\mathcal{A}_{\text{global}}$ fails, though we give a better bound through more careful analysis in the main body.

Of course the basic procedure described above, which is similar to prior DP-to-stability reductions in the literature \cite{bun2023stability}, is not randomness-efficient (indeed it may even use an unbounded number of random bits!) As we discuss in \Cref{sec:pf-view-DP}, our randomness-efficient variant relies on carefully discretizing this type of transform to optimize the number of random bits without sacrificing correctness and privacy.

In the reverse direction (Item 2), we directly prove any sufficiently private algorithm (i.e.\ one with the stated dependencies on $(n,\varepsilon,\delta)$) that uses $k$ random bits \textit{automatically} has a heavy-hitter of weight roughly $\exp(-k)$. We then show how to transform any $\beta'$-correct randomized algorithm $\mathcal{A}$ on $n'$ samples with an $\eta$-heavy-hitter into a deterministic $\eta$-globally stable algorithm at the cost of running $\mathcal{A}$ roughly $\frac{\log(1/\beta)}{\eta^2}$ times, resulting in similar parameter blowups as in Item 1. We remark that the constraints on $(n,\varepsilon,\delta)$ are also in general fairly mild. The assumption on $\delta$ is extremely weak (an algorithm is not considered private unless $\delta \leq n^{-\omega(1)}$). The assumption on $\varepsilon$ is more restrictive, but is satisfied by many basic DP mechanisms, and, moreover, it is almost always possible to amplify a weak DP algorithm to one satisfying this constraint (though it may cost many additional random bits). We discuss this further in \Cref{sec:discussion}.

% We discuss the requirements on $(n,\varepsilon,\delta)$ further in \Cref{sec:discussion}, but for the moment simply note that the assumption on $\delta$ is extremely weak (typically an algorithm is not considered private unless $\delta \leq n^{-\omega(1)}$), and that while the assumption on $\varepsilon$ is more restrictive, it is satisfied by many popular DP mechanisms and even those that do not can almost always be transformed (albeit at the possible cost of additional randomness scaling with the sample complexity of the algorithm)

Moving on from \Cref{thm:intro-list-DP}, we'd also like a way to compare randomness complexity in DP to our original \textit{parameter-free} variants of $C_{\text{Glob}}$ and $C_{\text{Rep}}$. It turns out this is possible by considering a slight generalization of vanilla differential privacy called \textit{user-level} DP. In user-level DP there are $T$ `users', each of whom contributes a (sub)-dataset $S_i$. Neighboring datasets are defined with respect to swapping an entire user rather than a single example. In other words, in user-level DP we view the total size-$n$ dataset $S$ as being comprised of $T$ components (`users') $(S_1,\ldots,S_{T}) \in (\mathcal{X}^{n/T})^T$, and must maintain $(\varepsilon,\delta)$-privacy under swapping out an entire $S_i$ subsample. Critically, in this setting $(\varepsilon,\delta)$ are now \textit{functions of $T$} rather than of~$n$. This allows us to define the user-level DP complexity, $C_{DP}(T,\varepsilon,\delta)$, in a way that is independent of sample complexity and confidence as the smallest number of random bits such that there exists a $T$-user $(\varepsilon,\delta)$-DP algorithm solving $\mathcal{M}$. We then get the following cleaner `parameter-free' version of \Cref{thm:intro-list-DP}:
\begin{theorem}[Stability vs User-Level DP (Informal \Cref{thm:list-DP-user})]\label{thm:intro-list-DP-user}
    There exist universal constants $c_1,c_2>0$ such that for any statistical task $\mathcal{M}$:
    \begin{enumerate}
        \item \textbf{(Stability to DP):} $C_{DP}\left(2^{C_{\text{Glob}}}\frac{c_1\log \frac{1}{\delta}}{\varepsilon},\varepsilon,\delta\right) \leq C_{\text{Glob}}+\log(1/\varepsilon)+\log(1/\delta)+\tilde{O}(1)$.
        \item \textbf{(DP to Stability):} $C_{\text{Glob}} \leq C_{DP}(T,\varepsilon,\delta) + O(1)$,
    \end{enumerate}
    where the latter holds for any $(T,\varepsilon,\delta)$ satisfying $\varepsilon \leq \frac{c_2}{\sqrt{T\log(T)}}$ and $\delta \leq \frac{c_2}{T}$.
\end{theorem}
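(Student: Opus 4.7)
My plan is to prove \Cref{thm:intro-list-DP-user} by adapting the parameterized reductions underlying \Cref{thm:intro-list-DP} to the user-level regime, where the clean separation between the number of users $T$ (which controls the DP guarantee) and the per-user sample complexity (which controls the quality of each subroutine call) makes the parameter-free form natural.

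For the forward direction (Stability to DP), I start with an $\eta$-globally stable deterministic algorithm $\mathcal{A}_{\mathrm{gs}}$ for $\mathcal{M}$, where $\eta = 2^{-C_{\text{Glob}}}$, letting each user contribute a subsample large enough for $\mathcal{A}_{\mathrm{gs}}$ to succeed with small constant confidence. The $T$-user algorithm is: each user $i$ independently runs $\mathcal{A}_{\mathrm{gs}}(S_i)$ to produce an output $y_i$, and then a DP heavy-hitter mechanism is applied to the multiset $\{y_1,\ldots,y_T\}$. By global stability, in expectation at least an $\eta$-fraction of users output a common value $y^\ast \in G_D$, and a standard DP stability-histogram (or propose-test-release) mechanism reliably outputs such a $y^\ast$ using $T \gtrsim \log(1/\delta)/(\eta\varepsilon)$ users, matching the stated user count. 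The technical crux is implementing this mechanism with only $\log(1/\eta)+\log(1/\varepsilon)+\log(1/\delta)+\tilde{O}(1)$ random bits: I would (a) restrict attention to a candidate set of size $O(1/\eta)$, namely the empirical frequent items of $\{y_i\}$, which contains $y^\ast$ with high probability; and (b) use discretized geometric or truncated Laplace noise of precision $\poly(\varepsilon,\delta)$, so the rounding error in privacy can be absorbed into $\delta$.

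For the reverse direction (DP to Stability), I would show that any $T$-user $(\varepsilon,\delta)$-DP algorithm $\mathcal{A}$ using $k$ random bits, with $\varepsilon \leq c/\sqrt{T\log T}$ and $\delta \leq c/T$, derandomizes into an $\Omega(2^{-k})$-globally stable algorithm. With only $k$ random bits, $\mathcal{A}(S;\cdot)$ is supported on at most $2^k$ points and thus admits a mode of weight $\geq 2^{-k}$; call this heavy hitter $y_S$. The stated constraints on $(\varepsilon,\delta)$ place us squarely in the ``DP implies generalization'' regime, so for $S,S' \sim D^T$ the output distributions $\mathcal{A}(S;\cdot)$ and $\mathcal{A}(S';\cdot)$ are close in total variation with high probability, which implies that the set of $\Omega(2^{-k})$-heavy-hitters of $\mathcal{A}(S;\cdot)$ is essentially the same across samples. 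Combined with a pigeonhole argument over the $\leq 2^{k+1}$ surviving heavy hitters and a lex-order tie-breaking rule, this yields a deterministic map $S \mapsto y_S$ whose global collision probability is $\Omega(2^{-k})$; the ``$+O(1)$'' slack in the bound absorbs any small union-bound losses.

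The main obstacle I anticipate is the randomness-efficient implementation of the DP heavy-hitter mechanism in Item 1: off-the-shelf DP primitives (exponential mechanism, Laplace histograms) consume essentially unbounded continuous randomness, so careful discretization of both the candidate output space and the noise distribution is required to preserve privacy, correctness, and the $T$-user bound simultaneously without blowing up the random-bit budget. A secondary, milder obstacle in Item 2 is invoking DP-based generalization in the \emph{user-level} setting rather than the standard sample-level one; since user-level DP on $T$ users with per-user subsamples of size $n/T$ is literally sample-level DP on the product domain $\mathcal{X}^{n/T}$, this should follow from existing generalization bounds applied at the user level, but requires some bookkeeping to verify the parameter regime matches the stated $\varepsilon \leq c/\sqrt{T\log T}$ and $\delta \leq c/T$ constraints.
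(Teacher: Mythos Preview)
Your plan for Item~1 is on the same track as the paper, though the paper executes randomness-efficiency differently and more cleanly. Rather than directly discretizing the noise in the DP heavy-hitter mechanism, the paper first constructs (without worrying about randomness at all) an $(\varepsilon,\delta/2)$-DP algorithm whose output support is \emph{always} bounded by $T = O(\log(1/\delta)/(\eta\varepsilon))$---this is arranged by seeding the DP Selection database with $c\log(1/\delta)/\varepsilon$ copies of a fixed dummy element so the output never falls outside the database. It then invokes a generic observation of Canonne--Su--Vadhan: any distribution of support size $T$ can be $\delta/2$-approximated in total variation using $\log T + \log(2/\delta)$ random bits while keeping the support contained in the original one. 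This two-step decomposition sidesteps the need to discretize any specific mechanism and makes the $\tilde O(1)$ term transparent.

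Your plan for Item~2 has a real gap. The constraints $\varepsilon \lesssim 1/\sqrt{T\log T}$, $\delta \lesssim 1/T$ do put you in the DP-generalization regime, but what they yield (via the Ghazi et al.\ result the paper cites) is only $(0.5,0.5,0.5)$-\emph{perfect generalization}: with probability $\geq 1/2$ over $S$, $\mathcal{A}(S;\cdot)$ is $(0.5,0.5)$-indistinguishable from the marginal $\mathcal{A}(\cdot,\cdot)$. This is far from TV closeness, and in particular says nothing about individual events of probability $\approx 2^{-k}$ when the additive slack is $0.5$. So the claim that ``the set of $\Omega(2^{-k})$-heavy-hitters of $\mathcal{A}(S;\cdot)$ is essentially the same across samples'' does not follow, and your mode-plus-tiebreaking construction breaks down. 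The paper's fix is to avoid tracking individual heavy hitters: take any single good $S$, let $L = \mathrm{Supp}(\mathcal{A}(S;\cdot))$ (so $|L|\leq 2^k$ and $\Pr[\mathcal{A}(S;\cdot)\in L]=1$), and apply the $(0.5,0.5)$-closeness to this \emph{one} event to conclude $\Pr[\mathcal{A}(\cdot,\cdot)\in L] \geq (1-0.5)e^{-0.5} = \Omega(1)$. Since $|L|\le 2^k$, the \emph{marginal} $\mathcal{A}(\cdot,\cdot)$ must have an $\Omega(2^{-k})$-heavy-hitter---which is exactly heavy-hitter global stability---and the paper's own equivalence (\Cref{thm:old-to-new-global}) then converts this to the deterministic form.
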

We remark that the parameter dependencies here follow exactly the same explanation as in \Cref{thm:intro-list-DP}, with $T=2^{C_{\text{Glob}}}\frac{c_1\log \frac{1}{\delta}}{\varepsilon}$ in (Item 1) appearing due to finding heavy hitters of $\mathcal{A}_{\text{global}}$, and the $(T,\varepsilon,\delta)$ dependency in (Item 2) needed to imply the corresponding DP algorithm has a heavy hitter.

% It is worth noting that there is no asymptotic loss in the conversion back and forth between global stability and DP in \Cref{thm:intro-list-DP-user}. Starting with an optimal globally stable learner, our transform gives a $2^{C_{\text{Glob}}}\frac{\log \frac{1}{\delta}}{\varepsilon}$-user level DP algorithm on $C_{\text{Glob}}+\log(1/\varepsilon)+\log(1/\delta)$ random bits. Transforming back requires setting $\varepsilon,\delta \lesssim 2^{-C_{\text{Glob}}}$, so we get a $3C_{\text{Glob}}$-globally stable learner in return, losing only constant factors.

\subsubsection{The Stable Complexity of PAC Learning} 
Having established our stability boosting theorems, we turn to the randomness complexity of binary classification. We focus on the standard \textbf{PAC Learning} model \cite{valiant1984theory,vapnik1971}. A PAC Learning classification problem consists of a \textit{data domain} $X$ and a \textit{hypothesis class} $H=\{h:X \to \{0,1\}\}$ of potential labelings of $\mathcal{X}$. Given a distribution $D$ over labeled samples $X \times \{0,1\}$, the \textit{classification error} of a hypothesis $h$ is
\[
err_D(h) \coloneqq \Pr_{(x,y) \sim D}[h(x) \neq y].
\]
An algorithm is said to (agnostically) PAC learn the class $(X,H)$ if for every $\alpha,\beta>0$ there exists $n=n(\alpha,\beta) \in \mathbb{N}$ such that given $n$ samples, $\mathcal{A}$ outputs an $\alpha$-optimal hypothesis with probability at least $1-\beta$:
\[
\forall D: \Pr_{S \sim D^n}[err_D(\mathcal{A}(S)) > \min_{h \in H} err_D(h) + \alpha] \leq \beta.
\]
In our framework, PAC Learning can be viewed as a sequence of statistical tasks $\{\mathcal{M}_\alpha\}$ parameterized by the error~$\alpha$, where $\mathcal{X}=X \times \{0,1\}$, $\mathcal{Y}$ is the set of all labelings of $X$, and $G_D$ is the set of $\alpha$-optimal labelings.

Due to its close connection with differential privacy, stability is quite well studied in the PAC setting. In \cite{bun2020equivalence}, the authors show that under the assumption that $\min_{h \in H} err_D(h)=0$ (called the `realizable setting'), the g-stable complexity of $\{\mathcal{M}_\alpha\}$ can be universally upper bounded by $2^{O(d)}$ for any class $(X,H)$ with finite Littlestone dimension $d$. In the error-dependent setting, the best known bound is of \cite{ghazi2021sample,GhaziKM21} who improve the g-stable complexity to $\poly(d)+O(\log(\frac{1}{\alpha}))$.

Contrary to the above, \cite{chase2024local} show the former type of error-independent bound is impossible in the agnostic setting, and ask whether an error-dependent bound like \cite{ghazi2021sample} can be extended to this case.\footnote{Again, we note \cite{chase2024local} is phrased in terms of `list-replicability', but \cite{chase2023replicability} prove the $\alpha$-dependent (log) list-size $C_{\text{List}}(\alpha)$ satisfies $C_{\text{Glob}}(\alpha) \leq C_{\text{List}}(\alpha) \leq C_{\text{Glob}}(\alpha/2)$, so it is asympotically equivalent to characterize $C_{\text{Glob}}(\alpha)$. We refer the reader to Section 5 of \cite{chase2023replicability} and Section 2.3.1 of \cite{chase2024local} for details and the formal definitions.}
We resolve this problem: not only is such a bound is possible, the complexity scales essentially as in the realizable setting up to a factor in the VC dimension.

% In the realizable setting, the list complexity of PAC learning is reasonably well understood \cite{bun2020equivalence,ghazi2021sample}, and in particular by a similar connection to differential privacy it is known that $(X,H)$ has finite list complexity if and only if the class has finite Littlestone dimension (a combinatorial parameter of $H$ characterizing both online and differentially private learning). In the agnostic case, \cite{chase2023replicability} show a surprising `separation' between DP and list complexity -- every infinite class must have list complexity scaling with the excess

% Critically, \cite{chase2023replicability} assume in their result that the list complexity may not depend on the excess error of the algorithm. They ask whether relaxing this constraint allows for finite stable complexity. Given \Cref{thm:intro-list-DP}, one might suspect

% We resolve this problem: not only is the (excess-risk dependent) list complexity finite for finite Littlestone classes, it scales at worst logarithmically in the error $\alpha^{-1}$.

\begin{theorem}[The Certificate Complexity of Agnostic Learning (\Cref{thm:stable-agnostic})]\label{thm:intro-agnostic}
    Let $(X,H)$ be a hypothesis class with Littlestone dimension $d$. Then $(X,H)$ has a better than $\frac{1}{2}$-replicable learner with
    \begin{enumerate}
        \item \textbf{Sample Complexity:} \[
        \exp(\poly(d))\poly(\alpha^{-1},\log(1/\beta))
        \]
        \item \textbf{Certificate Complexity:} 
        \[
        \poly(d)+O(VC(H)\log(\frac{1}{\alpha}))
        \]
    \end{enumerate}
    Conversely if $d=\infty$, then $C_{\text{Rep}}(\alpha)=\infty$ for any $\alpha < \frac{1}{2}$. In other words, there is no globally stable, DP, or replicable algorithm for $(X,H)$ better than random guessing.
\end{theorem}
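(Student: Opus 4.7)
The plan is to handle the positive and negative directions separately, with \Cref{thm:intro-parameter-free} as the unifying engine. For the upper bound, the strategy is to first construct an $\eta$-globally stable agnostic PAC learner with $\log(1/\eta)\leq \poly(d) + O(VC(H)\log(1/\alpha))$, and then apply \Cref{thm:intro-parameter-free} to obtain a better-than-$\frac{1}{2}$-replicable learner using $\lceil \log(1/\eta)\rceil+1$ random bits, matching the claimed certificate complexity. The sample complexity $\exp(\poly(d))\poly(1/\alpha,\log(1/\beta))$ then tracks the overhead of the underlying realizable stable subroutine combined with the $\poly(1/\alpha)$ samples needed for uniform convergence in the agnostic-to-realizable reduction.

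The stable agnostic learner itself decomposes into two composable stages. In Stage 1, a ``stable near-ERM'' draws a sample $S$ of size $n=\poly(d/\alpha)\log(1/\beta)$ and selects a labeling $\hat h|_S$ whose empirical error is within $\alpha/10$ of the optimum. The key observation is Sauer--Shelah: the number of distinct labelings of $S$ induced by $H$ is at most $(en/VC(H))^{VC(H)}\leq \exp(O(VC(H)\log(1/\alpha)))$, so the selection is effectively a discrete choice from a list of size $L=\exp(O(VC(H)\log(1/\alpha)))$. Pigeonhole forces some labeling to be output by any near-ERM with probability at least $1/L$ over random draws of $S$, and standard heavy-hitter extraction---estimate the output distribution via independent subsamples and return the empirical top---converts this into a \emph{deterministic} $(1/L)$-globally-stable procedure. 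Stage 2 then runs the \cite{ghazi2021sample,GhaziKM21} realizable globally stable learner, which uses $\poly(d)+O(\log(1/\alpha))$ stability bits, on the relabeled sample $(S,\hat h(S))$, which is realizable by $\hat h$. Composing the two stages multiplies the stability parameters, yielding the additive $\log(1/\eta)\leq \poly(d)+O(VC(H)\log(1/\alpha))$ bound.

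For the negative direction the plan is a short contradiction: if $C_{\text{Rep}}(\alpha)<\infty$ for some $\alpha<\frac{1}{2}$, then \Cref{thm:intro-parameter-free} gives $C_{\text{Glob}}(\alpha)<\infty$, hence a deterministic $\eta$-globally stable agnostic learner for some $\eta>0$; the equivalence between finite global stability and finite Littlestone dimension for PAC learning (\cite{bun2020equivalence,chase2023replicability,chase2024local}) then forces $d<\infty$, and the corresponding impossibility for DP follows via \Cref{thm:intro-list-DP-user}. The main obstacle is Stage 1 of the upper bound: a generic ERM is far too unstable, and the challenge is to design a selection whose log-stability scales only with $\log L=O(VC(H)\log(1/\alpha))$, rather than with $n$ or with the cardinality of $H$. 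Viewing the labeling-selection step as an independent statistical task and recursively invoking the heavy-hitter machinery underlying \Cref{thm:intro-parameter-free} is what makes the two stages compose \emph{additively} in certificate complexity rather than multiplicatively.
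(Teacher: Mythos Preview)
Your Stage~1 pigeonhole argument is where the proposal breaks. You claim that because Sauer--Shelah bounds the number of labelings of a fixed sample $S$ by $L=\exp(O(VC(H)\log(1/\alpha)))$, ``pigeonhole forces some labeling to be output by any near-ERM with probability at least $1/L$ over random draws of $S$.'' But the set of $L$ labelings is a function of $S$: different samples $S$ and $S'$ induce entirely different collections of patterns, and a labeling of $S$ is not even the same \emph{object} as a labeling of $S'$. Pigeonhole only applies when probability mass is distributed over a \emph{fixed} finite set; here the support of the near-ERM output moves with the sample, so nothing prevents the output from being a fresh hypothesis for every $S$ (think of thresholds on $[0,1]$: VC dimension $1$, yet a near-ERM can output a different threshold on every sample, with no single hypothesis having mass bounded away from zero). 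Consequently your heavy-hitter extraction in Stage~1 has nothing to extract, and the composition with Stage~2 never gets off the ground. A second, related issue: even if Stage~1 did output a fixed $\hat h$ with good probability, feeding the \emph{same} sample $S$ relabeled by a data-dependent $\hat h$ into the realizable learner does not give it i.i.d.\ input, so the guarantees of \cite{ghazi2021sample} do not directly apply.

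The paper avoids both traps by inverting the order of operations: rather than first stably selecting a labeling and then running the realizable learner, it runs the realizable list-stable learner of \cite{ghazi2021sample,GhaziKM21} over \emph{every} labeling of the unlabeled sample by $H$ simultaneously, across $T=\poly(d,\log(1/\beta))$ fresh unlabeled batches. Since one of those labelings is always $h_{\text{OPT}}$ (a fixed hypothesis, so the relabeled sample \emph{is} i.i.d.), the realizable learner's $\Omega(1/d)$-heavy-hitter $h_D$ appears $\Omega(T/d)$ times in the resulting multiset with probability $1-\beta/2$. Pruning by frequency and empirical error then cuts the surviving list to size $\exp(\poly(d))\cdot(d/\alpha)^{O(VC(H))}$ independent of $\beta$, which is where the $\poly(d)+O(VC(H)\log(1/\alpha))$ heavy-hitter bound comes from; \Cref{thm:old-to-new-global} and \Cref{thm:equiv} finish the conversion to certificate complexity. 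Your lower-bound sketch is fine and matches the paper's argument in spirit.
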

We remark if all one wants is heavy-hitter global stability, the sample complexity of \Cref{thm:intro-agnostic} can be improved to $\poly(d,\alpha^{-1},\log(1/\beta))$ while maintaining $(\poly(d)+O(VC(H)\log(\frac{1}{\alpha})))$-g-stable complexity. It is an interesting question whether $\poly(d,\alpha^{-1},\log(1/\beta))$ samples is achievable in the replicable case while maintaining good certificate complexity. This cannot be achieved using current sample-efficient methods which all rely on correlated sampling \cite{GhaziKM21,bun2023stability} and have certificate size scaling with $|H|$. Simultaneously achieving randomness and sample efficiency therefore seems to require new ideas in the theory of replicable algorithm design.

In recent independent work, \cite{blonda2025stability} also prove a variant of \Cref{thm:intro-agnostic} for global stability with $g$-stable complexity $\exp(d)+\poly(\alpha^{-1})$. Their work also addresses a second open problem of \cite{chase2024local} which we do not consider. See \Cref{sec:related-work} for further discussion.

\subsection{Discussion and Open Problems}\label{sec:discussion}
Before moving to the formal details, we take some space to further discuss the parameter restrictions in \Cref{thm:intro-list-DP} and \Cref{thm:intro-list-DP-user}, related open problems, and provide some further justification why counting internal random bits is sensible for statistical tasks where `external' randomness is also available via samples.

\paragraph{DP Complexity and the Pareto Frontier:} Recall both \Cref{thm:intro-list-DP} and \Cref{thm:intro-list-DP-user} require a variant of the following guarantee on the privacy parameters $(T,\varepsilon,\delta)$
\begin{equation}\label{eq:DP-Constraints}
\varepsilon \lesssim \frac{1}{\sqrt{T\log(T)}}, \delta \lesssim \frac{1}{T}
\end{equation}
where $T$ is the number of users (in the vanilla DP setting, $T=n$ is then just the number of samples). Our first main question is to what extent such a requirement is necessary. Can we characterize for what triples $(T,\varepsilon,\delta)$ the inequality $C_{\text{Glob}} \leq C_{DP}(T,\varepsilon,\delta)+O(1)$ holds?

Toward this end, it is worth briefly discussing why this constraint occurs, and to what extent it is normally achievable. Regarding the first, \Cref{eq:DP-Constraints} are exactly the constraints required to achieve a strong notion of stability called \textit{perfect generalization} \cite{CummingsLNRW16,BassilyF16}, which promises that on most input datasets $S$, $\mathcal{A}(S;\cdot)$ is actually statistically close to the distribution $\mathcal{A}(\cdot;\cdot)$. All known transforms between DP and replicability go through perfect generalization \cite{bun2023stability,kalavasis2023statistical} (we note prior transforms combine this with a method called correlated sampling which ruins the certificate size, so do not recover our results). At a high level, it would be interesting to give a transform between DP and list-replicability without going through perfect generalization, potentially bypassing this barrier. In some sense such a transformation is actually possible in the PAC Learning setting using the fact that Littlestone dimension characterizes learning, but this results in tower-type dependence in the certificate size.

Regarding the achievability of \Cref{eq:DP-Constraints}, in most cases such constraints are fairly mild. In fact, normally \Cref{eq:DP-Constraints} is hardly a barrier at all since it is always possible to \textit{amplify} a $T$-user $(O(1),1/\poly(T))$-DP algorithm to a $\frac{T}{\varepsilon}$-user $(\varepsilon,1/\poly(T))$-DP algorithm simply by applying the former on a random size-$T$ sub-sample of the latter's users \cite{balle2018privacy}. Setting $\varepsilon \ll \frac{1}{T}$ then satisfies \Cref{eq:DP-Constraints}. The catch here is that this process uses $T\log(1/\varepsilon)$ additional random bits, a heavy cost when known settings seem to only require \textit{logarithmically} many bits in $T$ \cite{canonne2024randomness}. This leads to an interesting question: is it possible to derandomize subsampling techniques in differential privacy? Can we amplify an $(O(1),1/\poly(T))$-DP algorithm to $(\varepsilon,1/\poly(T))$-DP using only $O(\log(T/\varepsilon))$ additional random bits?

\paragraph{Measuring Randomness in Statistics:} Certificate and DP complexity count the number of \textit{internal} random bits used by an algorithm solving a fixed statistical task $\mathcal{M}$. Unlike the empirical setting (as studied e.g.\ in \cite{canonne2024randomness}), algorithms in the statistical setting also have access to \textit{external} randomness in the form of i.i.d samples from the underlying data distribution $D$. It is reasonable to then ask: why do such algorithms need internal randomness at all, and (assuming they do) why does it make sense to measure it?

For certificate complexity, this question was naturally addressed in the seminal works of \cite{ImpLPS22,dixon2023list}. Recall that in replicability, the internal randomness of $\mathcal{A}$ plays a special role apart from the sample because it is \textit{shared} between the two runs of the algorithm. It is not hard to argue that most natural problems (namely any problem where there is a continuous path between two distributions $D$ and $D'$ with disjoint solution sets) require at least one shared random bit to achieve better than $1/2$ replication probability.\footnote{The argument is simply that if the algorithm is correct, it must be outputting different solutions on $D$ and $D'$, but then there is some interior distribution on the path which is non-replicable. See \cite{dixon2023list,ImpLPS22,chase2023replicability} for more detailed examples.} Another natural interpretation of the certificate complexity is as a measure of the `communication complexity' of a statistical task --- it is the number of bits that must be published publicly to ensure external parties can verify the result of the algorithm.

In differential privacy, the situation is more nuanced. In \cite{canonne2024randomness}, the authors focus on empirical tasks, meaning the dataset is fixed (not drawn i.i.d from some distribution) and the only source of randomness is internal. In this case it is obvious that any non-trivial DP algorithm must be randomized, and there is strong motivation to bound the extra number of random bits required since 1) clean randomness is expensive, and 2) prior implementations of DP in practice such as the U.S.\ Census used an astronomical number of clean random bits, thought to be upwards of 90 terabytes \cite{garfinkel2020randomness}. Understanding to what extent this cost can be mitigated is then a question of significant theoretical and practical interest.

In the statistical setting, we argue that despite our additional access to external randomness, any useful notion of differential privacy must still inherently rely only on internal randomness to protect user data. This is because differential privacy should always protect users in the \textit{worst-case}, even for \textit{average-case problems} studied in statistics. In particular, since the algorithm has no control over the sample oracle, relying on its output to ensure privacy leaves it open to attacks corrupting the oracle, or even to privacy failure stemming from benign issues such as light failure of the i.i.d assumption in real-world data (see e.g.\ discussion in \cite{goldwasser2022planting} regarding possible breaches in security from corruption of a learning algorithm's randomness). To avoid such scenarios, when defining differential privacy for statistical tasks we always apply privacy at the level of the \textit{fixed sample}, so the same motivation and context as in the empirical case holds.

\subsection{Further Related Work}\label{sec:related-work}

\paragraph{Stability and Replicability:}
Global stability was introduced in \cite{bun2020equivalence} to upper bound the sample complexity of private PAC Learning. Replicability was introduced independently in \cite{ImpLPS22} and \cite{GhaziKM21} as a relaxation of global stability, in the former towards addressing the crisis of replicability in science, and in the latter towards achieving upper bounds on the sample complexity of user-level private learning. Since these works, a great deal of effort has gone into understanding what statistical tasks admit replicable algorithms \cite{karbasi2023replicability,esfandiari2024replicable,eaton2024replicable,esfandiari2022replicable,komiyama2024replicability,hopkins2024replicability,kalavasis2024replicable}, as well as their connection to other notions of stability such as differential privacy \cite{GhaziKM21,bun2023stability,kalavasis2023statistical,moran2023bayesian,chase2023replicability}. Our work is most related to this latter line, especially \Cref{thm:intro-list-DP} and \Cref{thm:intro-list-DP-user} which rely on new randomness-efficient variants of the transforms introduced in these works.

\paragraph{Certificate and DP Complexity:}
Certificate complexity was introduced in \cite{dixon2023list}, where the authors prove essentially tight bounds for the basic task of estimating the bias of $d$ coins and learning classes via non-adaptive statistical queries. Related to our results, \cite{dixon2023list} observe (at least implicitly) the basic connection that any $\rho$-replicable algorithm on $\ell(\rho)$ bits has a $2^{-\ell(\rho)}$-heavy-hitter, a basic component of the (replicability $\to$ global stability) direction of \Cref{thm:intro-parameter-free}. They also use discretized rounding, which is the core component of the reverse direction, but focus in particular on $d$-dimensional space while we give a generic discretized rounding scheme based on access to a `weakly replicable' subroutine over any domain.

DP complexity was introduced recently in \cite{canonne2024randomness}, where the authors study the number of random bits required to perform counting queries under $(\varepsilon,\delta)$-differential privacy on a fixed empirical dataset. The authors lower bound techniques use similar methods to those used to bound certificate complexity in \cite{dixon2023list}, but they do not give any generic connection between the two.

\paragraph{Stability in Agnostic Learning:} In recent independent work, \cite{blonda2025stability} also resolve \cite{chase2024local}'s question on error-dependent global stability in agnostic PAC-learning, proving a variant of \Cref{thm:intro-agnostic} with randomness complexity $\exp(d)+\poly(\alpha^{-1})$. Beyond this, \cite{blonda2025stability} also prove impossibility for error-independent global stability when OPT is bounded above by some known constant $\gamma$, generalizing the error-independent stability impossibility result of \cite{chase2024local}. We do not consider the latter model in our work.
\subsection{Technical Overview}
In this section we overview of the main ideas in the proofs of \Cref{thm:intro-parameter-free}, \Cref{thm:intro-list-DP} and \Cref{thm:intro-list-DP-user}, and \Cref{thm:intro-agnostic} respectively. We refer the reader to \Cref{sec:stable-certificate}, \Cref{sec:DP}, and \Cref{sec:PAC-cert} for formal details.
\subsubsection{Global Stability and Certificate Complexity}\label{sec:pf-view-cert}
We start with the forward direction: given a $(\frac{1}{2}+\gamma)$-replicable algorithm $\mathcal{A}_{\text{Rep}}$ for $\mathcal{M}$ on $\ell=C_{\text{Rep}}$ random bits, we transform it into a deterministic $\eta$-globally stable algorithm for $\mathcal{M}$ for any $\eta < 2^{-\ell}$. Since the g-stable complexity of $\mathcal{M}$ is the infinum of $\log\frac{1}{\eta}$ for all achievable $\eta$-globally stable algorithms, this implies $C_{\text{Glob}} \leq C_{\text{Rep}}$.

Our first step is to use $\mathcal{A}_{\text{Rep}}$ to build a (randomized) algorithm with a nearly $2^{-\ell}$-heavy-hitter. We will then transform this algorithm into a deterministic one with nearly $2^{-\ell}$ replication probability. Toward this end, observe that by averaging for every distribution $D$ there exists a random string $r_D$ such that $\mathcal{A}_{\text{Rep}}$ has a `canonical solution' $h_D$ occurring probability at least $\frac{1}{2}+\gamma$:
\[
\Pr_{S \sim D^n}[\mathcal{A}_{\text{Rep}}(S;r_D)=h_D] \geq 1/2+\gamma.
\]
By running $\mathcal{A}_{\text{Rep}}(;r)$ roughly $\frac{\log(1/\tau)}{\gamma^2}$ times and taking the majority output, we therefore get an algorithm $\mathcal{A}_{\text{maj}}$ with a nearly $2^{-\ell}$-heavy hitter
\[
\Pr_{r\sim\{0,1\}^\ell, S \sim D^{n_2}}[\mathcal{A}_{\text{maj}}(S;r)=h_D] \geq 2^{-\ell}(1-\tau),
\]
since $r=r_D$ with probability $2^{-\ell}$, and conditioned on this event $\mathcal{A}_{\text{maj}}$ almost always outputs $h_D$ by Chernoff.

We now transform $\mathcal{A}_{\text{maj}}$ into a \textit{deterministic} algorithm with nearly $2^{-\ell}$ replication probability. This follows in two steps. First, we construct a new randomized algorithm $\mathcal{A}^{\text{HH}}_{\text{maj}}$ which runs $\mathcal{A}_{\text{maj}}$ $2^{O(\ell)}$ times (over fresh samples and randomness) and output the most common response. Standard concentration then ensures the output of $\mathcal{A}^{\text{HH}}_{\text{maj}}(S;r)$ is a nearly $2^{-\ell}$-heavy-hitter of $\mathcal{A}_{\text{maj}}$ with very high probability over $S$ and $r$. Second, we derandomize $\mathcal{A}^{\text{HH}}_{\text{maj}}(S;r)$ by taking $\mathcal{A}^{\text{HH-det}}_{\text{maj}}(S)$ to be the plurality response of $\mathcal{A}^{\text{HH}}_{\text{maj}}(S;r)$ over all random strings. A simple Markov-type argument shows that $\mathcal{A}^{\text{HH-det}}_{\text{maj}}(S)$ also almost always outputs a nearly $2^{-\ell}$-heavy-hitter of $\mathcal{A}_{\text{maj}}$ over the randomness of $S$, and since there are at most $2^{\ell}$ such heavy hitters this implies $\mathcal{A}^{\text{HH-det}}_{\text{maj}}(S)$ is (nearly) $2^{-\ell}$-globally stable as desired.

We remark that correctness is maintained throughout all the above transforms so long as our initial confidence $\beta \leq 2^{-O(\ell)}$ was sufficiently small, since then any heavy hitters of the original algorithm must have been correct in the first place, and we almost always output such a hypothesis.

To prove the reverse direction, we need to show how to amplify an $\eta$-globally stable algorithm $\mathcal{A}_{\text{global}}$ to a $\rho$-replicable one using $\log \frac{1}{\eta}+\log\frac{1}{\rho}$ random bits. The key is again to look at the set of heavy-hitters of $\mathcal{A}_{\text{global}}$. For $y \in \mathcal{Y}$, let $p(y)$ denote the probability $\mathcal{A}_{\text{global}}$ outputs $y$. Running $\mathcal{A}_{\text{global}}$ sufficiently many times, we get empirical estimates $\hat{p}(y)$ such that $|\hat{p}(y)-p(y)| < \frac{\gamma}{3}$ with very high probability for some small $\gamma \ll \poly(\rho\eta)$.

Fix $T \approx \frac{1}{\eta\rho}$ and consider the set of thresholds $\{\eta - \gamma, \eta - 2\gamma,\ldots,\eta-T\gamma\}$. Our final algorithm simply selects one of the $T$ thresholds $t$ at random, and outputs the first $y$ (with respect to some fixed order on $\mathcal{Y}$) satisfying $\hat{p}(y) \geq t$. 

We now argue the above procedure is $\rho$-replicable. To see this, observe that because $\mathcal{A}_{\text{global}}$ only has $\frac{1}{\eta}$ heavy hitters of weight more than $\eta-T\gamma$ (and moreover one of weight at least $\eta$), it must be the case that at most $\frac{1}{\eta}-1$ of these thresholds are within $\gamma/3$ of some $p(y)$ for any $y \in Y$. On the other hand, conditioned on the fact $|\hat{p}(y)-p(y)|<\gamma/3$, choosing any of the $T-\frac{1}{\eta}+1$ thresholds $t$ without such a nearby heavy hitter results in a completely replicable output, since the list of $y$ with $\hat{p}(y)>t$ is always the same. The probability of selecting such a threshold is $\frac{T-\frac{1}{\eta}+1}{T} > 1- \rho$, so taking the failure probability of our empirical estimates sufficiently small gives a $\rho$-replicable algorithm as desired. See \Cref{fig:glob-to-cert} for an example diagram of this procedure.

\begin{figure}
    \centering
    \hspace{-.3cm}\includegraphics[scale=.2]{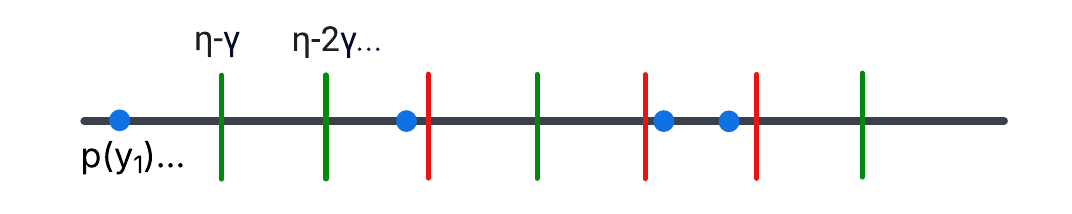}
    \caption{Thresholding procedure for $C_{\text{Glob}}=2$ and $T=7$. Blue dots denote the $4$ heavy hitters, one of which $p(y_1)$ is known to be far from any threshold. This leaves $4$ (green) thresholds with no nearby heavy-hitters out of $7$, so $\rho \approx 4/7 > \frac{1}{2}$, and $C_{\text{Rep}} \leq 3$.}
    \label{fig:glob-to-cert}
\end{figure}

Similar to before, correctness is also maintained as long as our original algorithm is $\beta \leq O(\eta)$ confident, as then every heavy hitter must be correct and the algorithm almost always outputs a heavy hitter of $\mathcal{A}_{\text{global}}$.

\subsubsection{Global Stability and Differential Privacy}\label{sec:pf-view-DP}
We now overview the main ideas behind \Cref{thm:intro-list-DP} and \Cref{thm:intro-list-DP-user} in more detail. We start with the forward direction: given an $\eta$-globally stable algorithm, we'd like to construct an $(\varepsilon,\delta)$-DP algorithm using roughly $\log(1/\eta)+\log(1/\varepsilon)+\log(1/\delta)$ random bits. The key to doing this is to first construct an $(\varepsilon,\delta/2)$-DP-algorithm $\mathcal{A}_{DP}$ satisfying the following properties
\begin{enumerate}
    \item Bounded Support: 
    \[
    \forall S: |\text{Supp}(\mathcal{A}_{DP}(S))| \leq \tilde{O}\left(\frac{1}{\eta\varepsilon}\right),
    \]
    \item Strong Correctness: 
    \[
    \Pr_{S}[\text{Supp}(\mathcal{A}_{DP}(S)) \subset G_D] \geq 1-\beta.
    \]
\end{enumerate}
In other words, the algorithm should \textit{always} have small support, and that support should almost always be \textit{completely} correct. The only issue is that $\mathcal{A}_{DP}$ might use too many random bits. We address this via an elegant observation of \cite{canonne2024randomness}: any distribution of support $T$ can be $\delta/2$-approximated (i.e.\ we can produce another distribution within $\delta/2$ in total variation distance) using only $\log(T)+\log(2/\delta)$ random bits and without adding any new elements to the support. Given the above conditions, this results in a $\beta$-confident $(\varepsilon,\delta)$-DP algorithm as desired.

Building on prior work, we construct our base DP algorithm using the `DP Selection' primitive of \cite{korolova2009releasing,bun2016simultaneous, BunDRS18}, which, roughly, given a dataset $M$, provides an $(\varepsilon,\delta)$-DP procedure to output some $y \in \mathcal{Y}$ that appears no fewer than $\frac{\log(1/\delta)}{\varepsilon}$ times less than the most frequent element in $M$. As discussed in the previous sections, given $\mathcal{A}_{\text{global}}$ we can build such a dataset $M$ by 1) building an algorithm that outputs one of $\mathcal{A}_{\text{global}}$'s heavy hitters with high probability 2) running this procedure $O(\frac{\log\frac{1}{\delta}}{\eta\varepsilon})$ times. To ensure bounded support of the algorithm over \textit{all} possible inputs, we additionally add $\log(1/\delta)/\varepsilon $ copies of some fixed `dummy hypothesis' $y \in \mathcal{Y}$ to the dataset, so DP selection always outputs something in $M$ by design. While this dummy hypothesis $y$ may not be correct, strong correctness still holds because the only way $y$ ends up in the support is if many non-heavy hitters lie in $M$ as well, a low probability event as long as the initial heavy-hitter subroutine succeeds with high probability. Finally, we remark the entire procedure is user-level private since swapping out a full sample to the heavy-hitter estimation procedure (one user) only changes one element in the database $M$.

In the reverse direction, we are given a $(T,\varepsilon,\delta)$-user-level DP algorithm $\mathcal{A}_{DP}$ on $\ell$ random bits which, by \cite{ghazi2024user}, we may also assume is $(.5,.5,.5)$-perfectly generalizing. We refer the reader to \Cref{def:perfect} for the exact definition, and for now note this implies the existence of a sample $S$ such that for all $\mathcal{O} \subset \mathcal{Y}$ we have $\Pr_r[\mathcal{A}_{DP}(S;\cdot) \in \mathcal{O}] \leq e^{1/2}\Pr_{S',r}[\mathcal{A}_{DP}(\cdot,\cdot) \in \mathcal{O}]+1/2$. Since $\mathcal{A}_{DP}(S)$ has support size at most $2^{\ell}$, setting $\mathcal{O}=\text{Supp}(\mathcal{A}_{DP}(S))$ (and therefore the LHS to $1$) implies $\Pr_{S',r}[A(\cdot,\cdot) \in \text{Supp}(\mathcal{A}_{DP}(S))] \geq \Omega(1)$, or equivalently, that $\mathcal{A}_{DP}$ has a $O(2^{-\ell})$-heavy-hitter. We may then use the same procedure as in \Cref{sec:pf-view-cert} to transform this into a deterministic $O(2^{-\ell})$-globally stable algorithm as desired.

\subsubsection{The Stability of Agnostic Learning}
Our agnostic globally stable learner is based on the agnostic-to-realizable reduction framework of \cite{hopkins2021realizable}, in particular a variant for replicable algorithms in \cite{bun2023stability}. In other words, starting with an $\eta$-globally stable learner $\mathcal{A}_{\text{PAC}}$ for the `realizable setting' (where it is assumed the underlying distribution is labeled by a hypothesis in $H$), we will build a globally stable agnostic learner $\mathcal{A}_{\text{Agn}}$ for arbitrary distributions. Our starting point is therefore the realizable setting, where we rely on the following result of \cite{ghazi2021sample}: every class $(X,H)$ with finite Littlestone dimension $d$ has an $\eta=2^{\poly(d)}\alpha^{-O(1)}$-globally stable learner on $n(\alpha,\beta)=\poly(d,\alpha,\log(1/\beta))$ samples in the realizable setting.

The core idea behind the reduction is simple: we draw a large unlabeled sample $S_U$ of size $n(\alpha,\beta)$ and run $\mathcal{A}_{\text{PAC}}$ across all possible labelings of $S_U$ in the class, then output a random hypothesis in the resulting set of outputs with low empirical error (tested across a fresh batch of labeled samples from the underlying agnostic distribution). This is an agnostic learner because the first stage always runs $\mathcal{A}_{\text{PAC}}$ over the labeled sample $(S_U,h_{OPT}(S_U))$ for some optimal hypothesis $h_{OPT}$ by design, and therefore with high probability has an output that is close to the best hypothesis in $H$. Moreover, because $\mathcal{A}_{\text{PAC}}$ is globally stable, there is even some \textit{fixed} good hypothesis $h_D$ close to $h_{OPT}$ that appears in the set with probability at least $\eta$, and therefore is given as the final output with probability $\eta/|S_U|^{O(VC(H))}$, where $VC(H)$ is the VC dimension and the divisor $|S_U|^{O(VC(H))}$ is an upper bound on the number of possible labelings of $S_U$ by $H$ (see \Cref{lemma:VC}).

This procedure almost works, but it has a core issue: to achieve confidence $\beta$, we have to set $|S_U|$ scaling with $\beta$, but then the global stability depends on $\beta$ as well which is not allowed. We fix this by running $\mathcal{A}_{\text{PAC}}$ only with confidence roughly $\eta$, ensuring at least its heavy hitter is an $\alpha$-optimal hypothesis. Unfortunately, this means we are no longer guaranteed the resulting set of outputs will have an $\alpha$-optimal hypothesis with high probability, breaking the argument. We fix this by running the procedure over $T=\frac{\log(1/\beta)}{\eta}$ independent unlabeled samples, which ensures the nearly-optimal heavy hitter $h_D$ occurs in the set with probability at least $1-\beta$. Naively we are then back at the same problem, since the output set has size $Tn(\alpha,\eta)^{O(VC)}$, but the key is to realize that $h_D$ not only occurs with high probability, it occurs \textit{at least $\Omega(\eta T)$ times} with high probability. This means we can prune all hypotheses from the resulting list that don't occur $\Omega(\eta T)$ times, cutting the total number down to at most $\frac{1}{\eta}n(\alpha,\eta)^{O(VC(H))}$. This results in an agnostic learner with a $\frac{1}{\eta}n(\alpha,\eta)^{O(VC(H))}$-heavy-hitter, which can be converted into a globally-stable learner by our prior procedure.

As written, the above has exponential sample complexity in the Littlestone dimension -- this can be fixed in the heavy-hitter setting by exploiting a stronger guarantee of \cite{ghazi2021sample}'s algorithm which in reality outputs a list of $\exp(\poly(d))\alpha^{-O(1)}$ hypotheses such that some $h_D$ appears in the list with probability at least $\Omega(\frac{1}{d})$. We can then perform the same procedure above but set $T=\poly(d,\beta)$ to get the same result with improved sample complexity.

\subsection*{Roadmap}
In \Cref{sec:prelims}, we give formal preliminaries on our setting and notions of stability. In \Cref{sec:global}, we prove our variant of global stability is equivalent to the classical heavy-hitter notion. In \Cref{sec:stable-certificate}, we prove the equivalence of g-stable and certificate complexity. In \Cref{sec:DP}, we give quantitative transforms between stable and DP complexity. In \Cref{sec:PAC-cert}, we bound the stable and certificate complexity of agnostic learning.

\section{Preliminaries}\label{sec:prelims}
We now cover our notions of stability and general setting in more formality. Most of our results are proved for the following standard and widely applicable notion of a statistical task (see e.g.\ \cite{bun2023stability}):
\begin{definition}[Statistical tasks]
    A statistical task $\mathcal{M}$ consists of: 
    \begin{enumerate}
        \item A data domain $\mathcal{X}$
        \item An output domain $\mathcal{Y}$
        \item For every distribution $D$ over $\mathcal{X}$, a subset $G_D \subset \mathcal{Y}$ of `accepted solutions'
    \end{enumerate}
    An \textbf{algorithm} solving $\mathcal{M}$ is a (possibly randomized) mapping $\mathcal{A}: \mathcal{X}^* \to \mathcal{Y}$ such that for all $\beta>0$ (the `\textbf{confidence}'), there is a function $n=n(\beta)$ (the `\textbf{sample complexity}') such that
    \[
    \forall D: \Pr_{S \sim D^n}[\mathcal{A}(S)\in G_D] \geq 1-\beta.
    \]
\end{definition}
We remark that in some statistical tasks (e.g.\ realizable PAC Learning), one only considers a restricted family of distributions $D$ over the data domain. All our results extend immediately to this generalized setting---we have chosen to focus on the distribution-free definition only for simplicity of exposition.
\subsection{The Stability Zoo}
We now overview the main notions of stability studied in this work. We remark that our definitions differ slightly from some prior works studying stability in \textit{learning}, which typically comes with an additional accuracy parameter. We discuss this distinction further in \Cref{sec:PAC}.

We start with the core notion of global stability, which measures the best replication probability achieved by any determinstic algorithm solving the task.
\begin{definition}[Global stability]\label{def:global-stability}
    Let $\mathcal{M}$ be a statistical task. An algorithm $\mathcal{A}$ solving $\mathcal{M}$ is said to be $\eta$-globally stable if for large enough $n$ and every distribution $D$
    \[
    \Pr_{S,S' \sim D^n}[\mathcal{A}(S) = \mathcal{A}(S')] \geq \eta.
    \]
    The \textit{g-stable complexity} of $\mathcal{M}$, denoted $C_{\text{Glob}}(\mathcal{M})$, is the infinum of $\log( \frac{1}{\eta})$ taken over all deterministic $\eta$-globally stable algorithms solving $\mathcal{M}$. If the set of such algorithms is empty, we write $C_{\text{Glob}}=\infty$.
\end{definition}
As mentioned previously, this differs from the `traditional' notion of global stability in \cite{bun2020equivalence}, which only requires $\mathcal{A}$ to have an $\eta$-heavy-hitter (and may use unshared internal randomness), a seemingly weaker requirement. We show these notions are fully equivalent in \Cref{sec:global}.

Global stability can never achieve replication probability beyond $1/2$ for any non-trivial statistical task \cite{chase2023replicability}. Motivated by this fact, \cite{ImpLPS22} introduced \textit{replicability}, allowing the use of shared internal randomness to boost replication success beyond the $\frac{1}{2}$ barrier.
\begin{definition}[Replicability]
     Let $\mathcal{M}$ be a statistical task. An algorithm $\mathcal{A}$ solving $\mathcal{M}$ is said to be $\rho$-replicable if for all distributions $D$:
    \[
    \Pr_{r,S,S' \sim D^n}[\mathcal{A}(S;r) = \mathcal{A}(S';r)] \geq 1-\rho.
    \]
\end{definition}
In \cite{dixon2023list}, the authors introduce \textit{certificate complexity} measuring the least number of shared random bits required to achieve $\rho$-replicability. Since we are interested in comparing certificate complexity to global stability, we will instead work mostly with a natural `parameter-free' version of this definition. In particular, we will study the number of random strings required to achieve replicability strictly better than $\frac{1}{2}$. This is the natural threshold at which many random strings are forced to have a unique `canonical' hypothesis, and can be easily amplified to arbitrary $\rho$.
\begin{definition}[Certificate Complexity]
        The \textbf{certificate complexity} of a statistical task $\mathcal{M}$, denoted $C_{\text{Rep}}(\mathcal{M})$, is the smallest number of shared random bits over which there exists a $>\frac{1}{2}$-replicable algorithm solving $\mathcal{M}$, i.e.\ some $\mathcal{A}$ satisfying:
        \[
        \Pr_{r \sim \{0,1\}^{C_{\text{Rep}}},S,S' \sim D^n}[\mathcal{A}(S;r)=\mathcal{A}(S';r)] > 1/2 
        \]
        If no such algorithm exists for any finite $C_{\text{Rep}}$, we write $C_{\text{Rep}}=\infty$.
\end{definition}
We remark that once one has achieved $> \frac{1}{2}$ replicability, \Cref{thm:intro-parameter-free} implies an amplification procedure to achieve any $\rho$-replicability parameter using an additional $\log \frac{1}{\rho}$ random bits. 

For the rest of the work, we usually omit $\mathcal{M}$ and just write $C_{\text{Glob}}$ and $C_{\text{Rep}}$ when clear from context.

\subsection{Differential Privacy}\label{sec:prelims-DP}
Global stability \cite{bun2020equivalence} and replicability \cite{GhaziKM21} (known as `pseudo-global stability' in this context) were both introduced as a tool to study the widely influential notion of \textit{differential privacy}, a powerful algorithmic guarantee promising an algorithm has similar output distributions over neighboring samples.
\begin{definition}[Differential Privacy (\cite{dwork2006calibrating,dwork2006our})]
    Let $\mathcal{M}$ be a statistical task, and $\varepsilon,\delta>0$. Two samples $S,S' \in (\mathcal{X}^n)$ are said to be neighboring if they differ in exactly one coordinate. An algorithm $\mathcal{A}$ solving $\mathcal{M}$ is said to be $(\varepsilon,\delta)$-differentially private if for any neighboring samples $S,S'$ and measurable events $\mathcal{O} \subset \mathcal{Y}$:
    \[
    \Pr[\mathcal{A}(S) \in \mathcal{O}] \leq e^{\varepsilon}\Pr[\mathcal{A}(S') \in \mathcal{O}]+\delta.
    \]
\end{definition}
We will sometimes use the notation $\mathcal{A}(S) \overset{(\varepsilon,\delta)}{=} \mathcal{A}(S')$ to denote the above closeness guarantee.

In early works \cite{bun2020equivalence,ghazi2021sample,GhaziKM21,ghazi2024user,bun2023stability}, global stability and its variants played a key role in bounding the sample complexity of differentially private PAC learning. Quite recently, \cite{canonne2024randomness} expanded this connection by using techniques developed for certificate complexity \cite{dixon2023list,woude2023geometry} to bound the number of random bits needed to achieve differential privacy, a parameter we'll call \textit{DP complexity}.
\begin{definition}[Parametrized DP Complexity]
    The $(n,\beta,\varepsilon,\delta)$-DP Complexity of a statistical task $\mathcal{M}$ is the smallest integer $C_{DP}(n,\beta,\varepsilon,\delta)$ such that there exists an $(\varepsilon,\delta)$-DP algorithm on $n$ samples solving $\mathcal{M}$ with confidence $\beta$ using only $C_{DP}(n,\beta,\varepsilon,\delta)$ random bits. If no such algorithm exists for any finite $C_{DP}(n,\beta,\varepsilon,\delta)$, we write $C_{DP}(n,\beta,\varepsilon,\delta)=\infty$.
\end{definition}
Unlike certificate complexity, it is not clear there is a natural `parameter-free' variant of DP complexity, especially since the concept frequently is used at many different scales of $(\varepsilon,\delta)$ in theory and practice, themselves dependent on $n$ and $\beta$. As such, we will focus mostly on parameter-dependent translations in this setting, and define the following `parametrized' variants of g-stable and certificate complexity.
\begin{definition}[Parametrized g-Stable Complexity]
    The $(n,\beta)$-g-stable complexity of a task $\mathcal{M}$, denoted $C_{\text{Glob}}(n,\beta)$, is $\log \frac{1}{\eta}$ where $\eta$ is the largest global stability achieved by any $\beta$-confident algorithm on $n$ samples solving $\mathcal{M}$. If no such algorithm exists, we write $C_{\text{Glob}}(n,\beta)=\infty$.
\end{definition}
\begin{definition}[Parametrized Certificate Complexity]
    The $(n,\beta)$-certificate complexity of a task $\mathcal{M}$, denoted $C_{\text{Rep}}(n,\beta)$, is the smallest number of random bits such that there is a $\beta$-confident, better than $\frac{1}{2}$-replicable algorithm on $n$ samples solving $\mathcal{M}$. If no such algorithm exists, we write $C_{\text{Rep}}(n,\beta)=\infty$.
\end{definition}
\subsection{User Level Privacy}
In \cite{GhaziKM21,bun2023stability}, the authors take advantage of the strong stability guarantees of replicability to build `\textit{user-level}' private algorithms \cite{levy2021learning}, a stronger notion of privacy in which each user is thought of as providing many data points and the goal is to protect against an adversary swapping out a single user's \textit{entire} dataset. Formally, the input dataset $S$ is actually split into subsets $S_1,\ldots,S_T$ coming from $T$-separate users, and we bound the algorithms deviation upon switching out one such $S_i$:

% As before, to compare DP complexity to other notions we need a parameter-free variant. Unlike replicability, it is not immediately clear what the right `parameter threshold' for $(\varepsilon,\delta)$ should be. We will choose our notion in such a way that it is quantitatively `equivalent' to the parametrized variant in the sense that it is typically easy to move between the two with only constant loss. This requires introducing a slightly finer-grained notion of differential privacy called \textit{user-level privacy}, which models when the data-set $S$ is actually split into subsets $S_1,\ldots,S_T$ coming from $T$-separate users. User-level DP bounds the algorithm's deviation upon swapping out an entire user data-set rather than a single example.
\begin{definition}[User-Level Differential Privacy]
    Let $\mathcal{M}$ be a statistical task, $n,T \in \mathbb{N}$ such that $T$ divides $n$, and call samples $(S_1,\ldots,S_{T}), (S'_1,\ldots,S'_{T}) \in (\mathcal{X}^{n/T})^T$ neighboring if $S'_j=S_j$ for all but one $j \in [T]$. We call an algorithm $\mathcal{A}$ solving $\mathcal{M}$ $T$-user $(\varepsilon,\delta)$-DP if $\mathcal{A}(S) \overset{(\varepsilon,\delta)}{=} \mathcal{A}(S')$ for all such neighboring data-sets $S,S'$ 
    % and measurable events $\mathcal{O} \subset \mathcal{Y}$:
    % \[
    % \Pr[\mathcal{A}(S) \in \mathcal{O}] \leq e^{\varepsilon}\Pr[\mathcal{A}(S') \in \mathcal{O}]+\delta
    % \]
\end{definition}
User-level DP allows us to tighten the connection between DP complexity and certificate complexity, giving a tight transformation between the two up to constants for reasonable regimes of $\varepsilon$ and $\delta$.
\begin{definition}[User-Level DP Complexity]
        The $(T,\varepsilon,\delta)$-user-level-DP Complexity of a statistical task $\mathcal{M}$ is the smallest integer $C_{DP}(T,\varepsilon,\delta)$ such that there exists a $T$-user $(\varepsilon,\delta)$-DP algorithm solving $\mathcal{M}$ using only $C_{DP}(T,\varepsilon,\delta)$ random bits. If no such algorithm exists for any finite $C_{DP}(T,\varepsilon,\delta)$, we write $C_{DP}(T,\varepsilon,\delta)=\infty$.
\end{definition}

\subsection{PAC Learning}\label{sec:PAC}
Having established close quantitative connections between globally stable, certificate, and DP complexity, our second focus is to concretely determine the randomness complexity of a fundamental statistical task: binary classification. We will focus on the standard PAC framework, in which we have a data domain $X$ and a \textit{hypothesis class} $H=\{h:X \to \{0,1\}\}$, a collection of possible labelings of $\mathcal{X}$. Let $D$ be a distribution over labeled samples $(X \times \{0,1\})$ and $h: X \to \{0,1\}$ a potential labeling. The error of $h$ under $D$ is
\[
err_D(h) = \Pr_{(x,y) \sim D}[h(x) \neq y].
\]
Similarly the empirical error of $h$ on a labeled sample $S$ is
\[
err_S(h) = \Pr_{(x,y) \in S}[h(x) \neq y].
\]
We write the best possible error achieved over $H$ as
\[
OPT_D \coloneqq \inf_{h \in H} err_D(h).
\]
We call a hypothesis $h$ $\alpha$-optimal if it achieves error at most $OPT+\alpha$. An algorithm is said to (agnostic) PAC learn the class $(\mathcal{X},H)$ if for every $\alpha,\beta>0$, on sufficiently many samples $\mathcal{A}$ outputs an $\alpha$-optimal hypothesis with probability at least $1-\beta$.
\begin{definition}[(Agnostic) PAC Learning]
    We say a class $(X,H)$ is PAC-Learnable if $\forall \alpha,\beta>0$ there exists $n=n(\alpha,\beta)$ and a (possibly randomized) algorithm $\mathcal{A}: \mathcal{X}^n \to P(\mathcal{X})$ satisfying
    \[
    \Pr_{S \sim D^n}[err_D(\mathcal{A}(S)) >OPT_D + \alpha] < \beta.
    \]
\end{definition}
Note that PAC-Learning, as formalized above, corresponds to an \textit{infinite sequence} of statistical tasks $\{\mathcal{M}^{(X,H)}(\alpha)\}_{\alpha>0}$ parametrized by the accuracy $\alpha$, namely where the data domain is $\mathcal{X} = X \times \{0,1\}$, the output space is all possible labelings $\mathcal{Y} = \{h: X \to \{0,1\}\}$, and the set of accepted solutions is $G_D \coloneqq \{h \in H: err_D(H) \leq OPT + \alpha\}$. With this in mind, we may therefore define the `error-dependent' stable/certificate/DP-complexity of a class $(X,H)$ as 
\[
C_{\text{Glob}}(\alpha) = C_{\text{Glob}}(\mathcal{M}(\alpha)),
\]
and likewise for certificate and DP complexity.

We emphasize this differs from prior works \cite{bun2020equivalence,chase2024local} that focus on g-stable complexity bounds that are \textit{independent} of the excess error $\alpha$. In \cite{bun2020equivalence}, the authors prove that any class with finite Littlestone dimension $d$ (we refer the reader to \cite{bun2020equivalence} for the exact combinatorial definition, which is not relevant for our work) has a universal upper bound on the g-stable complexity of $2^{O(d)}$ under the assumption that $\text{OPT}=0$. \cite{chase2024local} showed this result cannot extend in the full agnostic setting, and indeed that \textit{no} error-independent bound is possible for any infinite class. They ask whether it is possible to give an error-dependent bound, a question we resolve in the positive in \Cref{thm:intro-agnostic}.

Without stability constraints, the sample complexity of PAC learning is tightly controlled by a combinatorial parameter called the VC-dimension. Again the exact definition is not relevant to our work, but to answer \cite{chase2024local}'s question we will require the following classical lemma bounding the number of labelings of any subset of data as a function of the VC dimension.
\begin{lemma}[The Sauer-Shelah-Perles Lemma {\cite{vapnik1974theory,sauer1972density,shelah1972combinatorial}}]\label{lemma:VC}
    Let $(X,H)$ be a hypothesis class. For any $n \in \mathbb{N}$ and $S \subset X$ of size $n$, the number of labelings of $S$ by hypotheses in $H$ is at most $n^{O(VC(H))}$.
\end{lemma}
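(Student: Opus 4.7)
The plan is to prove the classical Sauer-Shelah-Perles bound in its standard form, which shows that if $d = VC(H)$ then the number of distinct labelings $|H|_S|$ of any size-$n$ set $S$ is at most $\sum_{i=0}^{d} \binom{n}{i} \leq \left(\frac{en}{d}\right)^d = n^{O(d)}$. I would present this as two conceptually separate steps: a combinatorial inequality (Pajor's lemma), and a standard estimate on the tail sum of binomial coefficients.

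First, I would prove the key combinatorial statement: for any finite family $F$ of subsets of a size-$n$ set $S$ (equivalently, any family of $\{0,1\}$-labelings of $S$), we have $|F| \leq |\{T \subseteq S : T \text{ is shattered by } F\}|$. Here $T$ is shattered by $F$ if every labeling of $T$ extends to some element of $F$. The cleanest proof I know is by induction on $|S|$: fix an element $x \in S$, split $F$ into traces on $S \setminus \{x\}$, and observe that sets shattered after restriction either remain shattered, or give rise to two incomparable labelings at $x$ which produces a new shattered set containing $x$. This bookkeeping argument is standard and takes only a few lines.

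Second, since $VC(H) = d$ means no set of size $d+1$ is shattered by $H$, the inductive inequality above yields
\[
|H|_S| \leq \sum_{i=0}^{d} \binom{n}{i}.
\]
To convert this into the stated $n^{O(d)}$ bound I would use the standard estimate $\sum_{i=0}^{d} \binom{n}{i} \leq \left(\frac{en}{d}\right)^d$ for $n \geq d$ (and handle the trivial $n < d$ case by the crude bound $2^n \leq 2^d$). This gives $|H|_S| \leq n^{O(d)}$ as required.

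I do not anticipate any real obstacle here—the result is classical, and the only mildly delicate step is the inductive argument establishing Pajor's lemma, which is purely combinatorial and completely self-contained. The rest is an elementary calculation on binomial sums.
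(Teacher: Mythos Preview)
Your proposal is correct and follows a standard approach to the Sauer--Shelah--Perles lemma. However, the paper does not actually prove this statement: it is cited as a classical result from \cite{vapnik1974theory,sauer1972density,shelah1972combinatorial} and used as a black box in the proof of \Cref{thm:stable-agnostic}, so there is no proof in the paper to compare against.
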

We note that the VC dimension is always at most the Littlestone dimension.

\section{Global Stability: Replication vs Heavy-Hitters}\label{sec:global}
In this section we prove that \Cref{def:global-stability} (replication global stability) is equivalent to the traditional randomized `heavy-hitters' variant studied in \cite{bun2020equivalence,chase2023replicability}. Given a distribution $D$ over some domain $X$, recall an element $x \in X$ is called an $\eta$-heavy-hitter if the measure of $x$ in $D$ is at least $\eta$. Traditional global stability promises the existence of a heavy hitter over the output distribution of the algorithm.
\begin{definition}[Heavy-Hitter Global stability {\cite{bun2020equivalence}}]
    Let $\mathcal{M}$ be a statistical task. An algorithm $\mathcal{A}$ solving $\mathcal{M}$ is said to be $\eta$-heavy-hitter globally stable if for large enough $n$ and every distribution $D$ there exists $h_D$ such that:
    \[
    \Pr_{S,S' \sim D^n}[\mathcal{A}(S)=h_D] \geq \eta.
    \]
    The \textit{HH-stable complexity} of $\mathcal{M}$, denoted $C_{\text{Glob-HH}}$ is the infinum of $\log( \frac{1}{\eta})$ taken over all (possibly randomized) $\eta$-heavy-hitter globally stable algorithms solving $\mathcal{M}$.
\end{definition}
In \cite{chase2023replicability}, the authors observe that HH-global stability and global stability as in \Cref{def:global-stability} differ by at most a quadratic factor. We prove they are actually the same quantity, a fact we will use later both to move between randomized and deterministic algorithms, and to move from heavy-hitter bounds to global stability.
\begin{theorem}[Global Stability vs HH-Global Stability]\label{thm:old-to-new-global}
    For any statistical task $\mathcal{M}$: $C_{\text{Glob-HH}} = C_{\text{Glob}}$
\end{theorem}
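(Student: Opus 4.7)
The plan is to establish the two inequalities $C_{\text{Glob-HH}} \leq C_{\text{Glob}}$ and $C_{\text{Glob}} \leq C_{\text{Glob-HH}}$ separately. The easy direction is essentially immediate from the definitions: for any deterministic $\eta$-globally stable algorithm $\mathcal{A}$, writing $p_D(y) := \Pr_{S \sim D^n}[\mathcal{A}(S) = y]$, we get $\eta \leq \Pr[\mathcal{A}(S) = \mathcal{A}(S')] = \sum_y p_D(y)^2 \leq (\max_y p_D(y)) \sum_y p_D(y) = \max_y p_D(y)$, so the same $\mathcal{A}$ is $\eta$-HH-globally stable, yielding $C_{\text{Glob-HH}} \leq C_{\text{Glob}}$.

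For the substantive direction $C_{\text{Glob}} \leq C_{\text{Glob-HH}}$, we are given a randomized $\eta$-HH-globally stable algorithm $\mathcal{A}$, which we may assume solves $\mathcal{M}$ with confidence $\beta \ll \eta$ (so that every $\eta$-heavy hitter automatically lies in $G_D$). For any $\eta' < \eta$ we plan to construct a deterministic $\eta'$-globally stable algorithm in three steps. First, \textbf{amplify} by letting $\mathcal{B}$ run $\mathcal{A}$ independently $T = \tilde O(\eta^{-2}\log(1/\gamma))$ times on fresh samples and fresh randomness, then output the empirical plurality; Chernoff plus a union bound over the (at most $1/\eta$) $\eta$-heavy hitters $H_D$ of $\mathcal{A}$ ensures that with probability at least $1 - \gamma$ over $(S,r)$, the output $\mathcal{B}(S;r)$ lies in $H_D$. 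Second, \textbf{derandomize} by setting $\mathcal{B}^{\text{det}}(S) := \arg\max_y \Pr_r[\mathcal{B}(S;r) = y]$; applying Markov to $\mathbb{E}_S \Pr_r[\mathcal{B}(S;r) \notin H_D] \leq \gamma$ shows that for all but a $2\gamma/\eta$ fraction of $S$, the law of $\mathcal{B}(S;\cdot)$ places at most $\eta/2$ mass outside $H_D$, which forces $\mathcal{B}^{\text{det}}(S) \in H_D$ (since some $y \in H_D$ must carry mass at least $(1-\eta/2)/|H_D| \geq (1-\eta/2)\eta > \eta/2$). Third, apply \textbf{Cauchy-Schwarz}: writing $q_D(y) := \Pr_S[\mathcal{B}^{\text{det}}(S) = y]$,
\[
\Pr[\mathcal{B}^{\text{det}}(S) = \mathcal{B}^{\text{det}}(S')] = \sum_y q_D(y)^2 \geq \sum_{y \in H_D} q_D(y)^2 \geq \frac{\left(\sum_{y \in H_D} q_D(y)\right)^2}{|H_D|} \geq (1 - 2\gamma/\eta)^2 \eta.
\]
Taking $\gamma$ sufficiently small gives replication probability at least $\eta'$, and since this holds for all $\eta' < \eta$, we obtain $C_{\text{Glob}} \leq \log(1/\eta)$ for every achievable $\eta$, hence $C_{\text{Glob}} \leq C_{\text{Glob-HH}}$.

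The main obstacle, and what prevents this from being a one-line argument, is the gap for deterministic algorithms between $\max_y p(y)^2$ and $\sum_y p(y)^2$: naive derandomization of a randomized heavy-hitter algorithm seemingly only gives global stability $\eta^2$ rather than $\eta$. The crucial observation is that any $\eta$-HH algorithm has at most $1/\eta$ heavy hitters, so once the derandomized output is forced to concentrate almost entirely on $H_D$, Cauchy-Schwarz on the distribution restricted to $H_D$ exactly recovers the missing factor of $\eta$. Correctness is preserved throughout because the initial $\beta \ll \eta$ guarantees every element of $H_D$ lies in $G_D$, and by the amplification step $\mathcal{B}^{\text{det}}$ outputs such an element with high probability over $S$.
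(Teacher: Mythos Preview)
Your proof is correct and follows essentially the same three-step approach as the paper: amplify to an algorithm whose output concentrates on the small set of heavy hitters, derandomize by passing to the argmax over internal randomness together with a Markov argument, and recover the lost factor of $\eta$ via Cauchy--Schwarz on the at-most-$1/\eta$-element support. One minor imprecision worth fixing: the empirical plurality need not land among the \emph{exact} $\eta$-heavy-hitters (an element of mass $\eta-10^{-100}$ could beat the guaranteed heavy hitter in finitely many trials), so $H_D$ should be defined with slack as the set of $(\eta-\epsilon)$-heavy-hitters; the paper does exactly this, and since you only target replication probability $\eta'<\eta$ the slack is absorbed for free.
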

\begin{proof}
    We remark that any $\eta$-globally stable algorithm is automatically $\eta$-heavy-hitter globally stable, so $C_{\text{Glob-HH}} \leq C_{\text{Glob}}$ is immediate. Thus it remains to prove $C_{\text{Glob}} \leq C_{\text{Glob-HH}}$.

    Write $\eta =  2^{-C_{\text{Glob-HH}}}$. For any $\gamma,\beta>0$, we are promised a $\beta$-correct, possibly randomized algorithm $\mathcal{A}$ such that for any distribution $D$ there exists a hypothesis $h_D$ satisfying
    \[
    \Pr_{r,S \sim D^n}[\mathcal{A}(S) = h_D] \geq \eta - \gamma
    \]
    Naively, $\mathcal{A}$ only has replication probability approaching $\eta^2$. To improve this, the idea is to build a new algorithm that outputs one of $\mathcal{A}$'s $(\eta-2\gamma)$-heavy-hitters with probability at least $1-\gamma$. By standard concentration bounds (see e.g.\ \cite{kontorovich2024distribution}) this can be done simply by running $\mathcal{A}$ on $O(\frac{\log(1/\gamma)}{\eta^2})$ fresh samples and outputting the most common hypothesis in the list (breaking ties arbitrarily). Toward this end, denote the list of $(\eta-2\gamma)$-heavy-hitters by $L_D$, and observe that for small enough $\gamma \leq O(\eta)$, $|L_D| \leq \frac{1}{\eta}$.

    Taking $\beta \leq O(\eta)$ sufficiently small, we can also ensure all elements of $L_D$ are correct solutions, i.e.\ that $L_D \subset G_D$. We have therefore arrived at an algorithm $\mathcal{A}_{\text{List}}$ with the following property: for any distribution $D$, there exists a list $L_D \subset G_D$ of size at most $\frac{1}{\eta}$ such that 
    \[
    \Pr_{r,S \sim D^n}[\mathcal{A}_{\text{List}}(S;r) \in L_D] \geq 1-\gamma.
    \]
    We note algorithms satisfying this guarantee are called `$|L_D|$-list-replicable' \cite{chase2023replicability,dixon2023list}.
    
    We are now almost done since conditioned on landing in $L_D$ in both runs, the replication probability of $\mathcal{A}_{\text{List}}$ is at least $\frac{1}{|L_D|} \geq \eta$, and the probability of being outside the list can be taken to arbitrarily small. The only issue is that $\mathcal{A}_{\text{List}}$ is randomized. Thus it is sufficient to argue we can de-randomize $\mathcal{A}_{\text{List}}$ at the cost of increasing the list-failure probability $\gamma$ to some corresponding $\gamma'$ also arbitrarily small.
    
    We do this simply by passing to the most likely output of $\mathcal{A}_{\text{List}}$. Namely define $\mathcal{A}_{\text{det}}$ by
    \[
    \mathcal{A}_{det}(S) \coloneqq \underset{y \in \mathcal{Y}}{\text{argmax}}\{\Pr[\mathcal{A}_{\text{List}}(S)=y]\},
    \]
    and note that $\mathcal{A}_{det}(S)$ can be easily computed by simply running $\mathcal{A}_{\text{List}}(S;r)$ for all choices of internal randomness $r$ and outputting the most common result (breaking ties arbitrarily). We claim:
    \[
    \Pr_S[\mathcal{A}_{det}(S) \in L_D] = \Pr_S\left[\underset{y \in Y}{\text{argmax}}\{\Pr[\mathcal{A}_{\text{List}}(S)=y]\} \in L_D\right] \geq 1-\frac{2\gamma}{\eta}.
    \]
    This follows from the observation that for any sample $S$ whose maximal output is not in $L_D$, the probability of outputting an element outside of $L$ is then at least $\frac{\eta}{2}$ (either the maximum probability, by assumption achieved by an element outside of $L_D$ is at least $\frac{\eta}{2}$ in which case we are done, or is at most $\frac{\eta}{2}$ in which case the list elements make up only $1/2$ the mass by assumption). Finally since we may take $\gamma$ arbitrarily small with respect to $\eta$, taking a large enough sample we may make $\mathcal{A}_{\text{det}}$ $\eta-\gamma'$ globally stable for any $\gamma'>0$, implying $C_{\text{Glob}} \geq \log \frac{1}{\eta} = C_{\text{Glob-HH}}$ as desired.
\end{proof}
It will also be useful to have a parametrized version of the above transform:
\begin{corollary}\label{cor:parametrized-old-to-new}
    For any statistical task $\mathcal{M}$ and $\eta,\beta>0$, given an $\eta$-HH globally stable, $\beta$-confident algorithm for $\mathcal{M}$ on $n$ samples, the transform in \Cref{thm:old-to-new-global} gives an $O(\eta)$-globally stable $\beta'$-confident algorithm for $\mathcal{M}$ on $n'$ samples, where $\beta' \leq O(\beta/\eta)$ and $n' \leq n\cdot \tilde{O}\left(\frac{\log \frac{1}{\beta}}{\eta^2}\right)$.
\end{corollary}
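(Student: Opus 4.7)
The plan is to directly track parameters through the transformation in \Cref{thm:old-to-new-global}, setting the free slack parameter $\gamma$ (which was sent to $0$ in the qualitative proof of the main theorem) to $\gamma = \beta$ so as to optimize the trade-off between sample complexity and confidence. First I would construct $\mathcal{A}_{\text{List}}$ by running $\mathcal{A}$ on $k = O(\log(1/\beta)/\eta^2)$ independent size-$n$ batches and returning the majority output; a standard Chernoff bound shows this output is an $(\eta - 2\beta)$-heavy hitter of $\mathcal{A}$ with probability at least $1-\beta$, giving total sample count $n' = n \cdot \tilde{O}(\log(1/\beta)/\eta^2)$ as advertised.

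Next I would verify correctness of the list $L_D$ of $(\eta - 2\beta)$-heavy hitters of $\mathcal{A}$: because $\mathcal{A}$ is $\beta$-confident, every $y \notin G_D$ satisfies $\Pr[\mathcal{A}(S) = y] \leq \beta$, so for $\beta < \eta/3$ no such $y$ can lie in $L_D$, yielding $L_D \subset G_D$ and $|L_D| \leq O(1/\eta)$. In particular $\mathcal{A}_{\text{List}}$ itself is already $\beta$-confident. I would then derandomize via $\mathcal{A}_{\text{det}}(S) \coloneqq \arg\max_y \Pr_r[\mathcal{A}_{\text{List}}(S;r) = y]$, and apply the same case analysis used in the original proof (splitting on whether the modal probability is above or below $\eta/4$) to conclude $\Pr_r[\mathcal{A}_{\text{List}}(S;r) \notin L_D] = \Omega(\eta)$ whenever $\mathcal{A}_{\text{det}}(S) \notin L_D$. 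Markov's inequality over $S$ then yields $\Pr_S[\mathcal{A}_{\text{det}}(S) \notin L_D] \leq O(\beta/\eta)$, which is exactly the claimed confidence bound $\beta' = O(\beta/\eta)$ since $L_D \subset G_D$. The global stability bound then follows from the power-mean inequality applied to the output distribution of $\mathcal{A}_{\text{det}}$: writing $p_y = \Pr_S[\mathcal{A}_{\text{det}}(S) = y]$, one has $\sum_y p_y^2 \geq \bigl(\sum_{y \in L_D} p_y\bigr)^2/|L_D| \geq (1 - O(\beta/\eta))^2 \cdot \Omega(\eta) = \Omega(\eta)$, valid in the regime $\beta = O(\eta)$.

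The main subtlety will be keeping the three parameters mutually compatible: $\gamma$ must satisfy $\gamma < (\eta - \beta)/2$ so that $L_D \subset G_D$, which forces $\beta \lesssim \eta$—exactly the regime in which the target bound $\beta' \leq O(\beta/\eta)$ is non-vacuous. The choice $\gamma = \beta$ both gives the cleanest statement and simultaneously matches the advertised $\tilde{O}(\log(1/\beta)/\eta^2)$ sample blowup and $O(\beta/\eta)$ confidence, with no additional slack that would degrade the constants in the $\Omega(\eta)$ stability.
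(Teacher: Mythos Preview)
Your proposal is correct and follows essentially the same approach as the paper: set the slack parameter in the proof of \Cref{thm:old-to-new-global} so that the list-failure probability $\gamma$ is comparable to $\beta$, then read off the sample complexity from the $O(\log(1/\gamma)/\eta^2)$ repetitions, the confidence from the Markov bound $\Pr_S[\mathcal{A}_{\text{det}}(S)\notin L_D]\le 2\gamma/\eta$, and the stability from $|L_D|\le O(1/\eta)$. The only cosmetic difference is that the paper sets $\gamma=O(\beta\eta)$ rather than $\gamma=\beta$; this yields the slightly stronger $\beta'\le O(\beta)$ (still within the stated $O(\beta/\eta)$), but your choice is equally valid for the corollary as written and makes the sample bound $O(\log(1/\beta)/\eta^2)$ marginally cleaner without the extra $\log(1/\eta)$ hidden in $\tilde O$.
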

\begin{proof}
    Note that the statement is trivial if $\beta > \eta$, and take $\gamma,\gamma' \leq O(\beta\eta)$ in the proof of \Cref{thm:old-to-new-global}. For such parameters $\mathcal{A}_{\text{det}}$ runs $\mathcal{A}$ at most $\tilde{O}\left(\frac{\log \frac{1}{\beta}}{\eta^2}\right)$ times, and has confidence $\beta' \leq 2\gamma/\eta \leq \beta/\eta$ (since it outputs an element in $L_D$ with at least this probability, which are correct by the assumption on $\beta$).
\end{proof}
\section{Global Stability and Certificate Complexity}\label{sec:stable-certificate}
In this section, we prove \Cref{thm:intro-parameter-free}, the (near) equivalence of g-stable and certificate complexity. We repeat the main result of the section here for convenience.
% \begin{theorem}[List Replicability vs Certificate Complexity]\label{thm:equiv}
%     For any statistical task $\mathcal{M}$:, 
%     \[
%     \log_2(L_{\mathcal{M}}^{-1}) \leq \ell_\mathcal{M} \leq \log(2L_\mathcal{M}^{-1}-1),
%     \]
%     where $\et\mathcal{A}_{\mathcal{M}}$ is the largest global stability parameter achievable for $\mathcal{M}$.
% \end{theorem}
\begin{theorem}[Global Stability vs Certificate Complexity]\label{thm:equiv}
    For any statistical task $\mathcal{M}$:
    \[
    C_{\text{Glob}} \leq C_{\text{Rep}} \leq C_{\text{Glob}}+1.
    \]
    Moreover, the number of random bits required to achieve $\rho$-replicability is at most $\lceil C_{\text{Glob}}+\log(1/\rho)\rceil $
\end{theorem}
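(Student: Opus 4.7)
The plan is to prove the two inequalities separately, with the ``moreover'' statement falling out of careful parameter tracking in the second direction.

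For $C_{\text{Glob}} \le C_{\text{Rep}}$, the plan is to start from a $(\tfrac{1}{2}+\gamma)$-replicable algorithm $\mathcal{A}_{\text{Rep}}$ on $\ell = C_{\text{Rep}}$ shared random bits and produce an $\eta$-HH-globally stable algorithm for any $\eta < 2^{-\ell}$; \Cref{thm:old-to-new-global} then converts this into a deterministic $\eta$-globally stable algorithm. By averaging over $r$, for every distribution $D$ there must exist some ``canonical string'' $r_D$ under which $\mathcal{A}_{\text{Rep}}(\cdot;r_D)$ produces a single hypothesis $h_D$ with probability at least $\tfrac{1}{2}+\gamma$ over the sample. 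The first step is to amplify this: running $\mathcal{A}_{\text{Rep}}(\cdot;r)$ on $\polylog$-many fresh samples for a fixed $r$ and taking majority yields an algorithm $\mathcal{A}_{\text{maj}}$ whose output on string $r_D$ is $h_D$ with probability $\ge 1-\tau$ by Chernoff, for any desired $\tau$. Since $r = r_D$ with probability $2^{-\ell}$, the resulting algorithm has $h_D$ as a $(2^{-\ell}(1-\tau))$-heavy hitter, and is therefore $(2^{-\ell}(1-\tau))$-HH-globally stable. Choosing $\tau$ arbitrarily small and $\beta$ small enough that every heavy hitter of $\mathcal{A}_{\text{Rep}}$ is in $G_D$ then gives $C_{\text{Glob}} \le \ell = C_{\text{Rep}}$.

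For the ``moreover'' bound (and hence $C_{\text{Rep}} \le C_{\text{Glob}}+1$), I would start from a deterministic $\eta = 2^{-C_{\text{Glob}}}$-globally stable algorithm $\mathcal{A}_{\text{global}}$ and implement the thresholding scheme in \Cref{fig:glob-to-cert}. Let $p(y) = \Pr_S[\mathcal{A}_{\text{global}}(S) = y]$; by running $\mathcal{A}_{\text{global}}$ on $\poly(1/(\eta\rho), \log(1/\beta))$ fresh samples, one obtains empirical estimates $\hat p(y)$ with $|\hat p(y) - p(y)| < \gamma/3$ simultaneously for all $y$ of noticeable mass, for a small parameter $\gamma \ll \eta\rho$. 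Fix a power-of-two $T = 2^{\lceil \log(1/\rho) \rceil}/\eta$ and the arithmetic grid of thresholds $\{\eta - \gamma, \eta - 2\gamma, \ldots, \eta - T\gamma\}$. The replicable algorithm uses $\lceil \log(1/\eta) + \log(1/\rho)\rceil$ random bits to select a threshold $t$ uniformly at random from this grid, then outputs the first $y$ in a fixed ordering of $\mathcal{Y}$ with $\hat p(y) \ge t$ (outputting a default symbol if the set is empty). Since $\mathcal{A}_{\text{global}}$ has a guaranteed mass-$\eta$ hitter, at most $1/\eta - 1$ additional outputs have $p(y) > \eta - T\gamma$, so at most $1/\eta - 1$ of the $T$ grid points lie within $\gamma/3$ of some $p(y)$. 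Conditioned on all empirical estimates being accurate to $\gamma/3$, every other threshold produces the same set $\{y : \hat p(y) \ge t\}$ on two independent runs, hence the same output. This yields replication probability at least $1 - (1/\eta - 1)/T - o(1) \ge 1 - \rho$. Setting $\rho < \tfrac{1}{2}$ corresponds to $T = 2 \cdot 2^{C_{\text{Glob}}}$ and $\lceil C_{\text{Glob}}\rceil + 1$ random bits, giving $C_{\text{Rep}} \le C_{\text{Glob}}+1$.

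The main obstacle I expect is \emph{parameter bookkeeping} rather than a new conceptual idea. In both directions we must verify that correctness survives each transformation: specifically, that $\beta$ is taken small enough relative to $\eta$ (respectively $2^{-\ell}$) so that every relevant heavy hitter of the intermediate algorithm lies in $G_D$, that the empirical estimate step concentrates uniformly to within $\gamma/3$ by a union bound, and that the amplified algorithms' sample complexities stay finite for every $\beta > 0$ as required by the definition of ``solving'' $\mathcal{M}$. A secondary but essential subtlety is the exact random-bit accounting in the thresholding scheme: choosing $T$ to be a power of two (and absorbing the rounding slack into the replication parameter $\rho$) is what produces the sharp $\lceil C_{\text{Glob}} + \log(1/\rho)\rceil$ bound and the clean additive $+1$ in the parameter-free inequality, rather than something like $+O(1)$.
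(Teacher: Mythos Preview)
Your proposal is correct and follows essentially the same argument as the paper: the lower bound via averaging to a canonical string, majority amplification, and the heavy-hitter-to-global-stability conversion (\Cref{thm:old-to-new-global}); the upper bound via the discretized random-threshold scheme of \Cref{fig:glob-to-cert}, with the same ``at most $1/\eta-1$ bad thresholds'' count. The only cosmetic differences are that the paper takes $T=\lceil (2^{C_{\text{Glob}}}-1)/\rho\rceil$ and handles the power-of-two rounding in a footnote, and that (since $C_{\text{Glob}}$ is an infimum) it starts from a $(2^{-C_{\text{Glob}}}-\tau)$-globally stable algorithm rather than one achieving exactly $\eta=2^{-C_{\text{Glob}}}$; your parameter bookkeeping is otherwise the same.
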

We start with the easy direction: certificate complexity to global stability. It is easy to see that any better-than-half replicable algorithm on $\ell$-bits is automatically $2^{-\ell-1}$-globally stable (albeit randomized), since there must exist some $r \in \{0,1\}^\ell$ which has $>\frac{1}{2}$ replication probability. This can be amplified to probability $2^{-\ell}-\eta$ for any $\eta>0$ simply by running the algorithm many times and taking majority.

\begin{proof}[Proof of \Cref{thm:equiv} (Lower Bound)]
Recall by \Cref{thm:old-to-new-global}, it is enough to prove for any $\tau>0$ the existence of a \textit{randomized} algorithm $\mathcal{A}$ solving $\mathcal{M}$ such that for any distribution $D$
\[
\Pr_{S,S' \sim D, r,r'}[\mathcal{A}(S;r)=\mathcal{A}(S';r')] \geq 2^{-C_{\text{Rep}}} - \tau
\]
where $r$ and $r'$ are independent random strings.
% (namely such an algorithm is $(2^{-C_{\text{Rep}}} - \tau)$-heavy-hitter globally stable).

Now by assumption there exists a $(\frac{1}{2}+\gamma)$-replicable algorithm solving $\mathcal{M}$ using $C_{\text{Rep}}$ random bits. By averaging, we then have that for any distribution $D$, there exists a string $r_D \in \{0,1\}^{C_{\text{Rep}}}$ such that
\[
\Pr_{S,S' \sim D^n}[\mathcal{A}(S;r)=\mathcal{A}(S';r)] \geq \frac{1}{2}+\gamma
\]
and therefore that there exists a `canonical hypothesis' $h_D$ such that
\[
\Pr_{S,S' \sim D^n}[\mathcal{A}(S;r)=h_D] \geq \frac{1}{2}+\gamma
\]
Given this, consider the `majority amplified' algorithm $\mathcal{A}_{\text{maj-}T}$ on $nT$ samples and the same internal randomness defined as
\[
\mathcal{A}_{\text{maj-}T}(S_1,\ldots,S_T; r) = \text{plurality}\{\mathcal{A}(S_i;r)\}
\]
breaking ties arbitrarily. Taking $T=O\left(\frac{\log \frac{1}{\tau}}{\gamma^2}\right)$ large enough, Chernoff promises
\[
\Pr_{S \sim D^{nT}}[\mathcal{A}_{\text{maj-}T}(S;r) = h_D] \geq 1-\frac{\tau}{2},
\]
so the total collision probability over two runs on independent strings is at least
\begin{align*}
    \Pr[\mathcal{A}(S;r)=\mathcal{A}(S';r')] &\geq \Pr[r=r'=r_D]\cdot\Pr[\mathcal{A}(S;r_D)=\mathcal{A}(S';r_D)~|~r=r'=r_D]\\
    & \geq 2^{-C_{\text{Rep}}}(1-\tau)
\end{align*}
as desired.
\end{proof}

We now prove the reverse direction, which follows from a careful discretization of \cite{ImpLPS22}'s random thresholding method. We first give the algorithm in pseudocode.

\begin{algorithm}[H]

\KwResult{$(\rho - \tau')$-replicable, $\beta$-confident algorithm on $\lceil \frac{2^{C_{\text{Glob}}}-1}{\rho} \rceil$ random strings}
\nonl \textbf{Input:} $(2^{-C_{\text{Glob}}}-\tau)$-globally stable and $O(2^{-C_{\text{Glob}}})$-confident algorithm $\mathcal{A}$, total ordering $\phi$ of $\mathcal{Y}$.\\
\nonl \textbf{Parameters:} 
\begin{itemize}
    \item Threshold number $T=\lceil \frac{2^{C_{\text{Glob}}}-1}{\rho} \rceil$
    \item Heavy-Hitter parameter $\eta = 2^{-C_{\text{Glob}}}$
    \item Threshold offset $ \tau \ll \gamma \ll \eta$
    \item Amplification parameter $N(\gamma,\beta,\tau')$
\end{itemize}
\nonl \textbf{Algorithm:}\\
\begin{enumerate}
    \item Run $\mathcal{A}$ across $N$ fresh samples $S \sim D^n$ and let $\hat{p}_h$ denote the empirical density of each $h \in \mathcal{Y}$
    \item Select $i \in [T]$ uniformly at random, and let
    \[
    H_i \coloneqq \{h: \hat{p}_h \geq \eta-i\gamma\}
    \]
\end{enumerate}
\textbf{Return} $\underset{H_i}{\text{argmin}} \ \hat{p}_h$ (breaking ties via $\phi$) or $\bot$ if $H_i=\varnothing$
 \caption{Global Stability $\to$ Certificate Complexity}
\label{alg:global-to-cert}
 
\end{algorithm}

\begin{proof}[Proof of \Cref{thm:equiv} (Upper Bound)]
    We prove a slightly stronger result: for any $\tau'>0$, there is a $(\rho-\tau')$-replicable learner using $T=\lceil \frac{2^{C_{\text{Glob}}}-1}{\rho}\rceil$ random strings. Achieving $\rho>\frac{1}{2}+\frac{1}{2^{O(C_{\text{Glob}})}}$ can therefore be done in $C_{\text{Glob}}+1$ random bits, and general $\rho$ may be achieved using $\lceil C_{\text{Glob}}+\log(1/\rho) \rceil$ bits as desired.\footnote{Formally, we note there is some discrepancy here in sampling a random threshold for $T$ not a power of $2$ when $r$ is a bit-string. However, in the argument below increasing the number of random strings/thresholds to the nearest power of 2 only helps replicability, so this is not an actual issue.}

    To prove the stronger statement, first observe that by standard concentration inequalities (see e.g.\ \cite{kontorovich2024distribution}), for large enough $T_1 \leq O(\frac{\log(\frac{1}{\beta\tau'})}{\gamma^2})$ with probability at least $1-\beta\tau'$ all empirical estimates $\hat{p}_h$ in Step (1) of \Cref{alg:global-to-cert} satisfy 
    \begin{equation}\label{eq:empirical-est}
            |\hat{p}_h-p| < \gamma/3.
    \end{equation}
    Condition on \Cref{eq:empirical-est} occurring. Choosing $\tau \leq O(\gamma) \leq O(T^{-1})$ sufficiently small, there are at most $\frac{1}{\eta}$ hypotheses of empirical weight greater than $\eta- T\gamma$.
    % Taking $T_1$ sufficiently large, there are therefore also at most $\frac{1}{\eta}$ hypotheses with empirical measure greater than $\eta- (\frac{2}{\eta}-1)\gamma$ except with some negligible probability $\beta$. 
    Since $\mathcal{A}$ must have an $(\eta-\tau)$-heavy hitter, there also exists at least one $\hat{p} \geq \eta-\tau-\gamma/3 > \eta-2\gamma/3$, so the set of empirical heavy hitters $H_i$ (defined in \Cref{alg:global-to-cert} Step 2) is non-empty for any $i$ and \Cref{alg:global-to-cert} will not output $\bot$.

Now consider the set of $T$ thresholds $\{\eta-\gamma,\eta-2\gamma,\ldots, \eta-T\gamma\}$.
% \footnote{Note that taking any $\ell>\frac{2}{\eta}-1$ thresholds for the right setting of parameters only improves replicability.} 
By assumption, at most $\frac{1}{\eta}-1$ of these thresholds have a hypothesis with true weight within $\gamma/3$ of their value. Thus choosing a random threshold, the probability we select one with no nearby hypothesis is at least $1-\frac{\frac{1}{\eta}-1}{T}$. Conditioned on selecting such a threshold $t$ and \Cref{eq:empirical-est}, the set of hypotheses with empirical measure greater than $t$ is always the same, so the algorithm always outputs the smallest such element in the list according to $\phi$ as desired.
% Moreover, given we select such a threshold $t$, consider the set $L_t$ of hypotheses in $L_D$ with empirical measure at least $t$. First, observe $L_t$ is non-empty, since it contains at least the promised hypothesis with true measure at least $\frac{1}{L}-\gamma/3$. Second, $L_t$ is replicable by our conditioning on the accuracy of $\hat{p}_h$. 
In total, it follows the algorithm is $\left(\frac{2^{C_{\text{Glob}}}-1}{\lceil \rho^{-1}(2^{C_{\text{Glob}}}-1) \rceil}-\tau'\right)$-replicable

Finally, since the algorithm is assumed to be $O(2^{-C_{\text{Glob}}})$-confident, any true $\Omega(\eta)$-heavy hitter must also be correct. Conditioned on \Cref{eq:empirical-est} \Cref{alg:global-to-cert} always outputs such a heavy hitter, so the final algorithm is $\beta$-confident as well.
\end{proof}
We note that in the setting of achieving strictly greater than $\frac{1}{2}$ replicability, the sample overhead in \Cref{thm:equiv} can be taken as at worst $2^{O(C_\text{Glob})}$, since all parameters $\tau,\tau',\gamma$ can be set to $2^{-O(C_\text{Glob})}$ and still achieve the desired replicability. We will use this fact in proving \Cref{thm:intro-agnostic}.
% I conjecture that the above bound is tight for the coin problem:
% \begin{conjecture}[Coin Problem]
%     Estimating the bias of a single coin requires $|R|\geq 3$ for $\rho>\frac{1}{2}$
% \end{conjecture}
% The coin problem has list-size $2$, so this would separate list size and randomness. It is possible that for $\rho=\frac{1}{2}$ the notions coincide. It would be interesting to see whether this extends to the $d$-coin problem:
% \begin{conjecture}[$d$-Coin Problem]
%     Estimating the bias of $d$ coins requires $|R|\geq 2(d+1)-1$ for $\rho>\frac{1}{2}$
% \end{conjecture}
\section{Stability and Differential Privacy}\label{sec:DP}
We now prove \Cref{thm:intro-list-DP} and \Cref{thm:intro-list-DP-user} connecting global stability and the randomness complexity of differential privacy. We restate the results in more formality.
\begin{theorem}[Stability Boosting (DP)]\label{thm:DP-list}
    There exists a universal constant $c>0$ such that for any statistical task $\mathcal{M}$:
    \begin{enumerate}
        \item \textbf{(Stability to DP):} $C_{DP}(n,\beta,\varepsilon,\delta) \leq C_{\text{Glob}}(n',\beta')+\log(1/\varepsilon)+\log(1/\delta)+\log\log(1/\delta)+O(1)$
        \\
        \\
        \vspace{.2cm}
        \noindent \hspace{-1cm}for any $n \geq n'\cdot O(\frac{2^{3C_{\text{Glob}}(n',\beta')}\log\frac{1}{\beta'}\log\frac{1}{\delta}}{\varepsilon})$, $\beta \geq 2^{C_{\text{Glob}}(n',\beta')}\frac{\log \frac{1}{\delta}}{c\varepsilon}(\beta'2^{C_{\text{Glob}}(n',\beta')+1})^{O(\frac{\log\frac{1}{\delta}}{\varepsilon})}$, and
        \item \textbf{(DP to Stability):} $C_{\text{Glob}}(n,\beta) \leq C_{DP}(n',\beta',\varepsilon,\delta) + O(1)$
    \end{enumerate}
    for any $\varepsilon \leq \frac{c}{\sqrt{n'\log(n')}}$ and $\delta \leq \frac{c}{n'}$, $n \geq n' \cdot \tilde{O}\left(2^{2C_{DP}(n',\beta',\varepsilon,\delta)}\log \frac{1}{\beta'}\right)$, and $\beta \geq O(\beta'2^{C_{DP}(n',\beta',\varepsilon,\delta)})$.
\end{theorem}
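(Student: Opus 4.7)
\textbf{Proof proposal for \Cref{thm:DP-list}.}

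The plan for the forward direction (Item 1, stability to DP) is to first construct a base $(\varepsilon,\delta/2)$-DP algorithm $\mathcal{A}_{DP}$ with bounded support and strong correctness, then apply a discretization step to shave the random-bit count. Starting from an $\eta$-globally stable algorithm $\mathcal{A}_{\text{global}}$ (with $\eta=2^{-C_{\text{Glob}}(n',\beta')}$), I would first invoke \Cref{thm:old-to-new-global}/\Cref{cor:parametrized-old-to-new} to obtain an algorithm that with high probability returns an $\eta$-heavy-hitter of the original. Running this roughly $k=O(\log(1/\delta)/(\eta\varepsilon))$ times on fresh (user-level) subsamples produces a multiset $M \in \mathcal{Y}^k$ of candidate outputs, padded with $\log(1/\delta)/\varepsilon$ copies of a fixed dummy $y_0$ to guarantee nonemptiness. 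I then apply the DP Selection primitive of \cite{korolova2009releasing,BunDRS18} to $M$, which is $(\varepsilon,\delta/2)$-DP and returns (with probability $1-\delta/2$) an element of $M$ whose count is within $O(\log(1/\delta)/\varepsilon)$ of the maximum. Since each heavy-hitter of $\mathcal{A}_{\text{global}}$ contributes $\approx \eta k$ copies to $M$, the support size of the output is at most $\tilde O(1/(\eta\varepsilon))$ on every input, and strong correctness holds with probability $1-\beta'$ because the only way a wrong answer escapes is if many heavy-hitter estimations fail.

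With bounded support $T = \tilde O(1/(\eta\varepsilon))$ and strong correctness in hand, I would then apply the discretization observation of \cite{canonne2024randomness}: any distribution on a support of size $T$ can be $\delta/2$-approximated in total variation using only $\log(T)+\log(2/\delta)$ random bits, without introducing new support elements. Applying this to $\mathcal{A}_{DP}(S;\cdot)$ for each $S$ preserves strong correctness exactly and degrades privacy by at most $\delta/2$ (by a standard group-privacy/TV argument), yielding an $(\varepsilon,\delta)$-DP algorithm on $\log(T)+\log(1/\delta) = C_{\text{Glob}}(n',\beta')+\log(1/\varepsilon)+\log(1/\delta)+\log\log(1/\delta)+O(1)$ random bits. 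The resource blowup exactly matches the claim: each of the $k$ subroutine calls uses $n'$ samples and fails with probability $\beta'$, so by a union bound the claimed sample and confidence bounds follow.

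For the reverse direction (Item 2), I would invoke the DP-to-perfect-generalization result of \cite{ghazi2024user}: under the hypothesis $\varepsilon\lesssim 1/\sqrt{n'\log n'}$ and $\delta\lesssim 1/n'$, any such $\mathcal{A}_{DP}$ is $(1/2,1/2,1/2)$-perfectly generalizing. By definition this produces (a high-probability set of) samples $S$ such that for every measurable $\mathcal{O}\subset\mathcal{Y}$,
\[
\Pr_r[\mathcal{A}_{DP}(S;r)\in\mathcal{O}] \le e^{1/2}\Pr_{S',r}[\mathcal{A}_{DP}(S';r)\in\mathcal{O}] + \tfrac12.
\]
Since $\mathcal{A}_{DP}(S;\cdot)$ uses only $k=C_{DP}(n',\beta',\varepsilon,\delta)$ random bits, $|\text{Supp}(\mathcal{A}_{DP}(S))|\le 2^k$. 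Setting $\mathcal{O}=\text{Supp}(\mathcal{A}_{DP}(S))$ makes the LHS equal to $1$, forcing $\Pr_{S',r}[\mathcal{A}_{DP}(S';r)\in\mathcal{O}]\ge (1-1/2)/e^{1/2}=\Omega(1)$. By averaging, some specific $y\in\mathcal{O}$ has probability at least $\Omega(2^{-k})$, i.e.\ $\mathcal{A}_{DP}$ has an $\Omega(2^{-k})$-heavy-hitter. Correctness of this heavy-hitter follows because $\beta'$ is below the heavy-hitter threshold. Feeding this into \Cref{cor:parametrized-old-to-new} produces a deterministic $\Omega(2^{-k})$-globally stable algorithm with sample and confidence overhead matching the stated bounds.

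The main obstacle is Item 1: the naive pipeline (heavy-hitter estimation plus DP selection) is completely randomness-inefficient, and the nontrivial step is realizing that the bounded-support and strong-correctness guarantees are exactly what is needed to compose with the Canonne-Su-Vadhan discretization without breaking either privacy or correctness. Item 2 is conceptually cleaner but relies crucially on the perfect-generalization extraction trick, which requires the parameter regime $\varepsilon\lesssim 1/\sqrt{n'\log n'}$, $\delta\lesssim 1/n'$.
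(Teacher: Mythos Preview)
Your proposal is correct and matches the paper's proof essentially line-for-line: Item 1 is exactly \Cref{lemma:list-to-DP} (heavy-hitter list construction, dummy-padded DP Selection, then the \cite{canonne2024randomness} discretization of \Cref{lemma:less-random}), and Item 2 is the perfect-generalization extraction via \Cref{thm:DP-to-PG} followed by \Cref{cor:parametrized-old-to-new}. The only slip is bookkeeping in Item 1: each of the $k$ subroutine calls uses $n'\cdot O(\eta^{-2}\log(1/\beta'))$ samples (from the heavy-hitter conversion, giving the $2^{3C_{\text{Glob}}}$ factor), and the stated confidence $\beta$ arises not from a plain union bound but from the probability that $\Omega(\log(1/\delta)/\varepsilon)$ independent list-calls simultaneously miss the heavy-hitter set, yielding the exponent $(\beta' 2^{C_{\text{Glob}}+1})^{O(\log(1/\delta)/\varepsilon)}$.
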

\begin{theorem}[Stability Boosting (User-Level DP)]\label{thm:list-DP-user}
        There exist universal constants $c_1,c_2>0$ such that for any statistical task $\mathcal{M}$:
    \begin{enumerate}
        \item \textbf{(Stability to DP):} $C_{DP}\left(2^{C_{\text{Glob}}}\frac{c_1\log \frac{1}{\delta}}{\varepsilon},\varepsilon,\delta\right) \leq C_{\text{Glob}}+\log(1/\varepsilon)+\log(1/\delta)$
        \item \textbf{(DP to Stability):} $C_{\text{Glob}} \leq C_{DP}(T,\varepsilon,\delta) + O(1)$
    \end{enumerate}
    where the latter holds for any $(T,\varepsilon,\delta)$ satisfying $\varepsilon \leq \frac{c_2}{\sqrt{T\log(T)}}$ and $\delta \leq \frac{c_2}{T}$.
\end{theorem}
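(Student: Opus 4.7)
The plan is to follow the blueprint sketched in Section 1.5.2, combining a heavy-hitter amplification of $\mathcal{A}_{\text{global}}$ with the DP Selection primitive for the forward direction, and a perfect-generalization argument for the reverse.

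For the forward direction (Item 1), I would start from a deterministic $\eta$-globally stable algorithm $\mathcal{A}_{\text{global}}$ with $\eta = 2^{-C_{\text{Glob}}}$ and first build a randomized subroutine $\mathcal{A}_{\text{HH}}$ that outputs an $\Omega(\eta)$-heavy-hitter of $\mathcal{A}_{\text{global}}$ with high probability by running $\mathcal{A}_{\text{global}}$ roughly $\tilde{O}(\eta^{-2})$ times and taking plurality, exactly as in \Cref{thm:old-to-new-global}. I then invoke $\mathcal{A}_{\text{HH}}$ once per user on $T = \Theta(\eta^{-1}\varepsilon^{-1}\log(1/\delta))$ users to form a multiset $M$, and pad $M$ with $\Theta(\log(1/\delta)/\varepsilon)$ copies of a fixed dummy output $y_\bot$. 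Applying $(\varepsilon,\delta/2)$-DP Selection \cite{BunDRS18} to $M$ returns an element of $M$ deterministically (thanks to the padding), so on every input the algorithm's output distribution has support at most $|M| \leq \tilde{O}(\eta^{-1}\varepsilon^{-1})$. Strong correctness follows because $y_\bot$ can only beat a true heavy-hitter in $M$ when many non-heavy-hitter votes appear, a Chernoff-controllable event. Swapping a single user changes at most one element of $M$, so DP Selection inherits user-level $(\varepsilon,\delta/2)$-DP.

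The above procedure uses many more random bits than allowed, so I would then invoke the discretization observation of \cite{canonne2024randomness}: any distribution with support size $s$ can be $(\delta/2)$-approximated in total variation using $\lceil \log s\rceil + \lceil \log(2/\delta)\rceil$ random bits without adding new support elements. Applied input-by-input to the composed algorithm above, this costs an additional $\delta/2$ in the privacy budget while preserving both the support bound and strong correctness, giving a total bit count of $\log(1/\eta) + \log(1/\varepsilon) + \log(1/\delta) + O(1) = C_{\text{Glob}} + \log(1/\varepsilon) + \log(1/\delta) + O(1)$. For the reverse direction (Item 2), given a $T$-user $(\varepsilon,\delta)$-DP algorithm $\mathcal{A}_{DP}$ on $\ell$ random bits, the DP-to-perfect-generalization transform of \cite{ghazi2024user} promises that under the stated constraints $\varepsilon \leq c_2/\sqrt{T\log T}$ and $\delta \leq c_2/T$, $\mathcal{A}_{DP}$ is $(1/2,1/2,1/2)$-perfectly generalizing. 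This furnishes a sample $S$ with $\Pr_r[\mathcal{A}_{DP}(S;r)\in\mathcal{O}] \leq e^{1/2}\Pr_{S',r}[\mathcal{A}_{DP}(S';r)\in\mathcal{O}] + 1/2$ for every measurable $\mathcal{O}$. Taking $\mathcal{O} = \text{Supp}(\mathcal{A}_{DP}(S))$ forces the LHS to $1$ and yields $\Pr_{S',r}[\mathcal{A}_{DP}(S';r) \in \text{Supp}(\mathcal{A}_{DP}(S))] = \Omega(1)$; since $|\text{Supp}(\mathcal{A}_{DP}(S))| \leq 2^\ell$, averaging exhibits an $\Omega(2^{-\ell})$-heavy-hitter. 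Feeding this into \Cref{cor:parametrized-old-to-new} produces a deterministic $\Omega(2^{-\ell})$-globally stable algorithm, giving $C_{\text{Glob}} \leq \ell + O(1) = C_{DP}(T,\varepsilon,\delta) + O(1)$.

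The main obstacle I anticipate is the randomness-efficiency of Item 1: naively composing the $\tilde{O}(\eta^{-2})$ heavy-hitter amplifier, the $T$ independent per-user runs of $\mathcal{A}_{\text{HH}}$, and DP Selection consumes far more than the target $\log(1/\eta) + \log(1/\varepsilon) + \log(1/\delta)$ bits. Making the CSV discretization actually deliver the stated bit count requires applying it to the \emph{composed} algorithm on each fixed input, so the bounded-support hypothesis holds deterministically and not merely in expectation, and carefully splitting the $\delta$ budget ($\delta/2$ for DP Selection, $\delta/2$ for the TV approximation) so that user-level privacy, the correctness guarantee inherited from the $\mathcal{A}_{\text{HH}}$ Chernoff analysis, and the randomness bound all hold simultaneously.
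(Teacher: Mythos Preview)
Your proposal is correct and follows essentially the same approach as the paper: Item 1 is proved via \Cref{lemma:list-to-DP} (build the heavy-hitter/list subroutine, feed $T$ independent runs plus dummies into DP Selection to get bounded support and strong correctness, then apply the \cite{canonne2024randomness} discretization with $\eta=\delta/2$), and Item 2 is proved exactly as you describe via \Cref{thm:DP-to-PG} and \Cref{cor:parametrized-old-to-new}. Your anticipated obstacle and its resolution---applying the discretization to the composed algorithm so that bounded support holds on every input, and splitting $\delta$ evenly between DP Selection and the TV approximation---is precisely what the paper does.
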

Both \Cref{thm:DP-list} and \Cref{thm:list-DP-user} follow as corollaries of the same underlying stability to DP and DP to stability transformations. We start by proving the forward direction: a randomness efficient stability-to-DP transform. The key to achieving low randomness is the following useful observation of \cite{canonne2024randomness}: any distribution with small support can be approximately sampled using few random bits.
\begin{lemma}[{\cite[Lemma 2.10 (rephrased)]{canonne2024randomness}}]\label{lemma:less-random}
    For any randomized algorithm $M: X^n \to Y$ and $\eta>0$, there exists an algorithm $M'$ using $\max_{x}\log(|\text{Supp}(M(x))|) + \log(1/\eta)$ random bits such that for every input $x$, $M'$ satisfies:
    \begin{enumerate}
        \item \textbf{(Closeness):} $d_{TV}(M(x),M'(x)) \leq \eta$
        \item \textbf{(Subset Support):} $\text{Supp}(M'(x)) \subseteq \text{Supp}(M(x))$
    \end{enumerate}
\end{lemma}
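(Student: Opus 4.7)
The plan is to approximate each distribution $M(x)$ by inverse-CDF sampling against a uniform grid on $[0,1]$ of resolution $\eta/T$, where $T = \max_x |\text{Supp}(M(x))|$. Sampling a uniform integer from a grid of size $T/\eta$ costs exactly the stated number of random bits (up to one bit of rounding), and the resulting distribution will automatically be supported on $\text{Supp}(M(x))$ and within TV distance $\eta/2$ of $M(x)$.

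In more detail, for each input $x$, fix an arbitrary enumeration $y_1, \ldots, y_{T_x}$ of $\text{Supp}(M(x))$ (where $T_x \leq T$) and let $F_x(i) = \sum_{j \leq i} \Pr[M(x) = y_j]$ be the associated CDF. On input $x$, the algorithm $M'$ computes $F_x$, draws a uniform integer $r \in \{1, 2, \ldots, N\}$ for $N = \lceil T/\eta \rceil$, and outputs the unique $y_i$ with $F_x(i-1) < r/N \leq F_x(i)$. Drawing $r$ requires only $\lceil \log N \rceil \leq \log T + \log(1/\eta) + 1$ uniform random bits, matching the claimed randomness budget up to the constant absorbed in the statement.

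The subset-support condition is immediate: $y_i$ can be output only when the half-open interval $(F_x(i-1), F_x(i)]$ contains some grid point $r/N$, which forces $F_x(i) > F_x(i-1)$ and hence $\Pr[M(x) = y_i] > 0$. For closeness, $\Pr[M'(x) = y_i]$ equals the number of grid points in $(F_x(i-1), F_x(i)]$ divided by $N$, which differs from the true mass $N(F_x(i)-F_x(i-1))/N$ by at most $1/N = \eta/T$. Summing these per-element deviations over the at most $T$ nonzero indices and dividing by two yields $d_{TV}(M(x), M'(x)) \leq \eta/2 \leq \eta$, as desired.

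The argument itself is elementary, so the main thing to be careful about is parameter tracking: $T$ must be taken as the worst case across all inputs $x$ so that the single grid resolution $1/N$ uniformly controls per-element rounding error regardless of which input is received, and one must confirm the rounding from $\lceil \log(T/\eta) \rceil$ fits inside the tolerated $O(1)$ slack. A minor technical remark, not a real obstacle but worth flagging, is that $M'$ needs oracle access to the probabilities $\Pr[M(x) = y_i]$ rather than only to samples from $M(x)$; this is automatic when $M$ itself uses a bounded number of random bits (just enumerate over its random string), which is the regime relevant to every application of the lemma in the paper.
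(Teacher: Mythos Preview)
Your argument is correct and is the standard inverse-CDF discretization; the paper does not give its own proof of this lemma but simply cites it from \cite{canonne2024randomness}, so there is no in-paper proof to compare against. The one cosmetic point worth tightening is that drawing a uniform integer from $\{1,\ldots,N\}$ with $\lceil \log N\rceil$ bits requires $N$ to be a power of two; simply set $N = 2^{\lceil \log T + \log(1/\eta)\rceil}$ from the outset, which only shrinks the per-element rounding error and makes the bit count exactly match the statement (up to the implicit ceiling already present in a non-integer quantity like $\log T + \log(1/\eta)$).
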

The result will now follow if we can prove a stability-to-DP transformation with two key properties
\begin{enumerate}
    \item For every $S$, the support of $\mathcal{A}(S)$ is small
    \item With high probability over $S$, $\text{Supp}(\mathcal{A}(S)) \subset G_D$,
\end{enumerate}
where we recall $G_D$ is the set of correct solutions for the task $\mathcal{M}$. The stronger correctness property is needed since applying \Cref{lemma:less-random} might otherwise ruin the correctness of our algorithm.

To build such a transformation, we will use the following DP Selection algorithm also used as the main subroutine in the standard stability-to-DP transformation of prior works \cite{bun2020equivalence,GhaziKM21,bun2023stability}.
\begin{theorem}[DP Selection \cite{korolova2009releasing,bun2016simultaneous, BunDRS18}, as stated in {\cite{bun2023stability}}]\label{thm:DP-Selection}
There exists some $c>0$ such that for every $\varepsilon,\delta>0$ and $m \in \mathbb{N}$, there is an ($\varepsilon,\delta$)-DP algorithm that on input $S \in \mathcal{\mathcal{X}}^m$, outputs with probability $1$ an element $x \in X$ that occurs in $S$ at most $\frac{c\log \delta^{-1}}{\varepsilon}$ fewer times than the mode of $S$.
\end{theorem}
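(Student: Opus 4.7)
The plan is to construct the claimed $(\varepsilon,\delta)$-DP selection mechanism via the classical approach of combining noise addition on histogram counts with a stability-based filtering step, adapted to handle the unbounded output domain. The underlying score function is the count $q(x, S) = |\{i : S_i = x\}|$, which has sensitivity $1$ under neighboring-sample changes, but since $|X|$ may be infinite we cannot directly apply pure-DP mechanisms like the exponential mechanism on all of $X$.

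The key idea is to restrict attention to a data-dependent candidate set $L_S = \{x : q(x,S) \geq T\}$ for a threshold $T = C \log(1/\delta)/\varepsilon$ with $C$ a suitable absolute constant. If $L_S = \varnothing$, then the mode of $S$ itself has count at most $T$, and so any fixed element of $X$ automatically satisfies the required utility guarantee (being trivially within $T$ of the mode), and we output such an element. Otherwise, we apply a pure $(\varepsilon/2)$-DP selection mechanism, e.g., Report-Noisy-Max or the exponential mechanism, on the finite set $L_S$ with scores $q(\cdot, S)$. Any element output through this branch has count in $[T, \mathrm{mode}(S)]$, hence within $\mathrm{mode}(S) - T$ of the mode \emph{with probability one}, matching the stated utility.

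The main obstacle is the privacy analysis, since the candidate set $L_S$ itself depends on $S$. I would handle this in two steps: (i) for neighboring $S, S'$, the symmetric difference $L_S \triangle L_{S'}$ contains at most one element, since a single coordinate change shifts at most one count across the threshold $T$; and (ii) this boundary element is chosen with probability at most $\delta$ once $T$ is calibrated appropriately, by adding Laplace$(1/\varepsilon)$ noise to the threshold comparison in a propose-test-release style so that the set $L_S$ is only revealed through a channel with $\delta$-failure. Combining the stability of $L_S$ with the pure DP of the selection on the common part then yields overall $(\varepsilon, \delta)$-DP. The technical subtlety lies in calibrating $T$, the selection mechanism's utility constant, and the privacy cost of the threshold check simultaneously, so that the final pipeline is $(\varepsilon,\delta)$-DP while outputting, with probability one, an element whose count is within $c \log(1/\delta)/\varepsilon$ of the mode.
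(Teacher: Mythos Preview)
The paper does not prove this statement: it is quoted as a black box from prior work (Korolova et al., Bun--Nissim--Stemmer, Bun--Dwork--Rothblum--Steinke) and used as a subroutine in \Cref{lemma:list-to-DP}. So there is no ``paper's proof'' to compare against, and your task is really to reproduce a known construction.

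Your outline has the right flavor but contains genuine gaps. First, the claim that $|L_S \triangle L_{S'}| \leq 1$ is false under the swap model used here: replacing one coordinate can decrement one count and increment another, pushing \emph{two} elements across the threshold. Second, and more seriously, the assertion that ``the boundary element is chosen with probability at most $\delta$'' does not follow from your setup. If the mode itself has count near $T$ --- say $L_{S'} = \{a,b\}$ both with count exactly $T$ --- the exponential mechanism selects each with probability $1/2$, not $\delta$; yet $b$ may have probability $0$ under $L_S$. Third, patching this with propose--test--release introduces a failure branch (the ``unstable'' case) in which you must output something, and whatever you output there will generically violate the \emph{probability-one} utility guarantee the theorem demands. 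PTR-style arguments naturally give $(1-\delta)$-utility, not probability-$1$ utility.

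The standard route to probability-one accuracy is different: one uses report-noisy-max with \emph{bounded-support} noise (e.g.\ truncated Laplace or truncated exponential with range $O(\log(1/\delta)/\varepsilon)$), applied only to elements appearing in $S$. Bounded noise makes the utility deterministic --- the winner's true count is automatically within twice the noise range of the mode --- while the truncation is what converts the pure-DP guarantee of report-noisy-max into $(\varepsilon,\delta)$-DP. The infinite-domain issue is then handled by a stable-histogram argument: an element absent from $S$ has count $0$ in $S$ and at most $1$ in any neighbor $S'$, so its presence in the candidate set and its chance of winning can be absorbed into the $\delta$ term. If you want to carry out a self-contained proof, this is the structure to follow rather than a hard-threshold candidate set with PTR.
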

We can now state and prove our bounded support, strong correctness stability-to-DP transform:
\begin{lemma}\label{lemma:list-to-DP}
    Let $\mathcal{M}$ be a statistical task, $\eta,\varepsilon,\delta>0$, and $T = O\left(\frac{\log(1/\delta)}{\eta\varepsilon}\right)$. Given an $\eta$-globally stable algorithm $\mathcal{A}$ for $\mathcal{M}$ on $n=n(\beta)$ samples, there exists a $T$-user, $\beta'$-confident, $(\varepsilon,\delta)$-DP algorithm $\mathcal{A}_{DP}$ on $n'=n(\varepsilon,\delta,\beta)$ samples satisfying:
    \begin{enumerate}
        \item \textbf{Bounded Support:} $\forall S: |\text{Supp}(\mathcal{A}_{DP}(S;))| \leq T$
        \item \textbf{Strong Correctness:} $\Pr_{S \sim D^n}[\text{Supp}(\mathcal{A}_{DP}(S;)) \subset G_D] > 1-\beta'$
    \end{enumerate}
    for $\beta' \leq T(\frac{2\beta}{\eta})^{O(\frac{\log \frac{1}{\delta}}{\varepsilon})}$ and $n' \leq n(\beta)\cdot O\left(\frac{\log(1/\beta)\log(1/\delta)}{\eta^3\varepsilon}\right)$
\end{lemma}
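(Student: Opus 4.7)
The plan is to construct $\mathcal{A}_{DP}$ by applying the DP Selection primitive (\Cref{thm:DP-Selection}) to a carefully designed multiset: the outputs of many independent invocations of a \emph{deterministic} version of $\mathcal{A}$, plus enough copies of a fixed dummy hypothesis to guarantee the output support is always bounded. Bounded support will then follow because DP Selection always outputs an element of its input multiset, privacy will be immediate at the user level because each user controls only a single multiset element, and strong correctness will reduce to two Chernoff-type estimates on how often the canonical heavy hitter $h_D$ appears and how often a ``bad'' (non-$G_D$) output slips in.

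Concretely, I would first apply \Cref{cor:parametrized-old-to-new} to derandomize $\mathcal{A}$, obtaining a deterministic $\Omega(\eta)$-globally stable, $O(\beta/\eta)$-confident algorithm $\mathcal{A}'$ on $\tilde n = n(\beta)\cdot\tilde O(\log(1/\beta)/\eta^2)$ samples, and (using \Cref{thm:old-to-new-global}) a canonical heavy hitter $h_D\in G_D$ with $\Pr_S[\mathcal{A}'(S)=h_D]\geq\Omega(\eta)$ (correctness of $h_D$ uses $\beta/\eta\ll\eta$). Setting $T=\lceil c_0\log(1/\delta)/(\eta\varepsilon)\rceil$ and $T'=\lceil c\log(1/\delta)/\varepsilon\rceil+1$ for the constant $c$ from \Cref{thm:DP-Selection} and a sufficiently large $c_0$, I would split the input $S=(S_1,\ldots,S_T)\in(\mathcal{X}^{\tilde n})^T$ into $T$ user-subsamples, form the multiset
\[
M(S) \;=\; \{\mathcal{A}'(S_1),\ldots,\mathcal{A}'(S_T)\}\cup\{\underbrace{y^*,\ldots,y^*}_{T'\text{ copies}}\}
\]
for an arbitrary fixed $y^*\in\mathcal{Y}$, and output $\mathcal{A}_{DP}(S)=\mathrm{DPSelect}(M(S);\varepsilon,\delta)$.

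Privacy and bounded support are then immediate: switching any one user changes a single entry of $M(S)$, so \Cref{thm:DP-Selection} yields $T$-user $(\varepsilon,\delta)$-DP; and since DP Selection always returns an element of $M(S)$, $|\mathrm{Supp}(\mathcal{A}_{DP}(S))|\leq T+1$ deterministically in $S$. For strong correctness, let $E_1$ be the event that $h_D$ appears at least $\Omega(\eta T)$ times among $\{\mathcal{A}'(S_i)\}$ and $E_2$ the event that fewer than $k=c\log(1/\delta)/\varepsilon$ of the $\mathcal{A}'(S_i)$ lie outside $G_D$. Independence of the $S_i$ and determinism of $\mathcal{A}'$ give $\Pr[\bar E_1]\leq\exp(-\Omega(\eta T))=\delta^{\Omega(c_0/\varepsilon)}$ by Chernoff, and $\Pr[\bar E_2]\leq\binom{T}{k}(O(\beta/\eta))^k\leq T\cdot(O(\beta/\eta))^{c\log(1/\delta)/\varepsilon}$ by a union bound over $k$-subsets. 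On $E_1\cap E_2$ the mode of $M(S)$ has count at least $\Omega(\eta T)$, the dummy contributes only $T'$ copies, and all bad outputs together contribute fewer than $k$ copies; choosing $c_0$ large enough that $\Omega(\eta T)-c\log(1/\delta)/\varepsilon$ strictly exceeds both $T'$ and $k$ ensures neither the dummy nor any bad element lies within the \Cref{thm:DP-Selection} selection window of the mode, so $\mathrm{Supp}(\mathcal{A}_{DP}(S))\subseteq G_D$.

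The main obstacle is threading the parameters: the dummy count must be large enough to dominate \Cref{thm:DP-Selection}'s selection window on \emph{adversarial} inputs (guaranteeing uniform bounded support) while being small enough to be dominated by $h_D$ on \emph{typical} inputs (preserving strong correctness). This is exactly what forces $T=\Theta(\log(1/\delta)/(\eta\varepsilon))$ and $T'=\Theta(\log(1/\delta)/\varepsilon)$, and makes the gap $\Omega(\eta T)-\Theta(\log(1/\delta)/\varepsilon)$ the quantity to manage. The stated sample bound $n'\leq n(\beta)\cdot O(\log(1/\beta)\log(1/\delta)/(\eta^3\varepsilon))$ then equals $T\cdot\tilde n$, and the stated $\beta'\leq T(2\beta/\eta)^{O(\log(1/\delta)/\varepsilon)}$ comes from adding $\Pr[\bar E_1]$ (negligible for $c_0$ large) to $\Pr[\bar E_2]$.
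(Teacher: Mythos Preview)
Your construction is essentially the paper's: preprocess $\mathcal{A}$ into a mode-of-many-runs subroutine, use it to build a multiset of $T=O(\log(1/\delta)/(\eta\varepsilon))$ outputs plus $\Theta(\log(1/\delta)/\varepsilon)$ dummy copies, and run DP Selection. Two small points are worth flagging. First, the detour through \Cref{cor:parametrized-old-to-new} is unnecessary: the input $\mathcal{A}$ is already deterministic, and the paper instead observes that the same mode-of-many-runs construction is \emph{list-replicable}, i.e.\ its output lies in a fixed list $L\subset G_D$ of size at most $2/\eta$ with probability $1-\beta$ (not $1-O(\beta/\eta)$). This framing removes your event $E_1$ entirely: if fewer than $k$ of the $T$ outputs fall outside $L$, pigeonhole forces some element of $L$ to appear at least $(T-k)\eta/2$ times, so no separate Chernoff bound is needed and the per-user failure probability stays at $\beta$ rather than $O(\beta/\eta)$. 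Second, your inequality $\binom{T}{k}\leq T$ is false in this regime (here $T/k\approx 1/\eta$); the correct estimate $\binom{T}{k}\leq (eT/k)^k=(O(1/\eta))^k$ does still recover the lemma's bound once absorbed into the $O(\cdot)$ in the exponent, but the step as written does not go through.
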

\begin{proof}
    By assumption we are given access to a $\beta$-confident algorithm $\mathcal{A}$ on $n=n(\beta)$ samples that has an $\eta$-heavy hitter $h^*$. In order to ensure strong correctness, we will first need to transform $\mathcal{A}$ into a so-called `list-replicable' algorithm $\mathcal{A}_{\text{List}}$, that is one such that there exists a small list $L \subset G_D$ of correct hypotheses such that for a large enough $m$:
    \[
    \Pr_{S \sim D^m}[\mathcal{A}_{\text{List}}(S) \in L] \geq 1-\beta
    \]
    Assume $\beta \leq \eta/2$ (else the Lemma statement is trivial as $\beta'>1$). Then every $\eta/2$-heavy-hitter of $\mathcal{A}$ must lie in $G_D$. We build $\mathcal{A}_{\text{List}}$ by simply running $\mathcal{A}$ on $O\left(\frac{\log(\frac{1}{\beta})}{\eta^2}\right)$ fresh samples and outputting the most common result. By standard concentration \cite{kontorovich2024distribution}, the probability that the output of this procedure is not an $\eta/2$-heavy hitter of $\mathcal{A}$ (of which there are at most $\frac{2}{\eta}$) is at most $\beta$ as desired, and the algorithm uses at most $m=O(n\frac{\log(1/\beta)}{\eta^2})$ samples.

    We will now use $\mathcal{A}_{\text{List}}$ to generate a dataset of output hypotheses on which to apply DP selection. More formally, fix some arbitrary `dummy' output $y \in \mathcal{Y}$ (we use this to ensure the end support is bounded) and consider the following procedure generating a dataset to run DP-Selection:
    \begin{enumerate}
        \item Draw $T=O\left(\frac{\log(1/\delta)}{\eta\varepsilon}\right)$ independent size-$m$ samples from $D$
        \item Add the output of $\mathcal{A}_{\text{List}}$ on every sample in $T$ to the dataset
        \item Add $\frac{c\log \frac{1}{\delta}}{\varepsilon}$ copies of $y$ for $c$ as in \Cref{thm:DP-Selection}.
    \end{enumerate}
    Finally, define $\mathcal{A}_{DP}(S)$ to be the algorithm that outputs the result of DP-Selection on the above dataset.
    
    We first analyze correctness. Note it is enough to argue that with probability at least $1-\beta'$, the support is contained in $L$, since $L \subset G_D$. By \Cref{thm:DP-Selection}, the only way the support of $\mathcal{A}_{DP}(S)$ contains a hypothesis outside $L$ is if $\mathcal{A}_{\text{List}}$ outputs $h \notin L$ in Step 2 at least $O(\log(1/\delta)/\varepsilon)$ times.\footnote{Formally there may be some other measure 0 set of outputs given the exact statement of \Cref{thm:DP-Selection}, but these can be safely removed with no loss in parameters since we are in the approximate DP setting.} Since the samples are independent, the probability of this occurring is at most
    \[
    \sum\limits_{j=O(\log(1/\delta)/\varepsilon)}^T{T \choose j}\beta^{j} \leq T\cdot \left(\frac{2\beta}{\eta}\right)^{O(\log(1/\delta)/\varepsilon)},
    \]
    as desired.

    It is left to bound the (user-level) differential privacy and support. It is clear the algorithm is $T$-user $(\varepsilon,\delta)$-DP by construction, taking each user's data to be a full sample given to $\mathcal{A}_{\text{List}}$ in Step 2. Furthermore, because the generated dataset always has a hypothesis appearing more than $\frac{c\log\frac{1}{\delta}}{\varepsilon}$ times by construction, \Cref{thm:DP-Selection} promises the output lies in the constructed dataset which has size at most $T+1$ as desired.
\end{proof}
The forward direction of both \Cref{thm:DP-list} and \Cref{thm:list-DP-user} are now essentially immediate:
\begin{proof}[Proof of \Cref{thm:DP-list} and \Cref{thm:list-DP-user} (Item 1)]
    Write $C=C_{Glob}(n',\beta')$ for notational simplicity, and let $\mathcal{A}$ be the promised $2^{-C+1}$-globally stable algorithm. By \Cref{lemma:list-to-DP}, we can convert $\mathcal{A}$ into a $T$-user $(\varepsilon,\delta/2)$-DP algorithm $\mathcal{A}_{DP}$ on $n$ samples with $T \leq O\left(2^{C} \cdot \frac{\log(1/\delta)}{\varepsilon}\right)$ such that for all $S$, $|\text{Supp}(\mathcal{A}(S;\cdot))| \leq T$, and with probability at least $1-\beta$, $\text{Supp}(\mathcal{A}_{DP}(S;\cdot)) \subset G_D$. Applying \Cref{lemma:less-random} with $\eta=\delta/2$ then gives the desired result, where correctness is maintained since the output of the algorithm on any sample $S$ where $\text{Supp}(\mathcal{A}_{DP}(S;\cdot)) \subset G_D$ remains entirely inside $G_D$ by the subset support property.
\end{proof}
We now move on to the reverse direction of both results, which follows from the equivalence of approximate differential privacy with another strong notion of stability known as \textit{perfect generalization} \cite{CummingsLNRW16,BassilyF16,bun2023stability,ghazi2024user}. 
\begin{definition}[Perfect Generalization (\cite{CummingsLNRW16,BassilyF16})]\label{def:perfect}
Fix $\beta,\varepsilon,\delta>0$. An algorithm $\mathcal{A}: \mathcal{X}^n \to \mathcal{Y}$ is called $(\beta,\varepsilon,\delta)$-perfectly-generalizing if for any distribution $D$ over $\mathcal{X}$, there exists a `canonical distribution' $\text{SIM}_D$ s.t.
\[
\Pr_{S \sim D^n}[\mathcal{A}(S) \overset{(\varepsilon,\delta)}{=} \text{SIM}_D] \geq 1-\beta.
\]
\end{definition}
In \cite{ghazi2024user}, the authors show any sufficiently DP algorithm is automatically perfectly generalizing.
\begin{theorem}[{\cite[Theorem 31]{ghazi2024user}} (rephrased)]\label{thm:DP-to-PG}
    There exists a universal constant $c>0$ such that if $\mathcal{A}$ is $T$-user $(\frac{c}{\sqrt{T\log(T)}},\frac{c}{T})$-DP, then $\mathcal{A}$ is $(.5,.5,.5)$-perfectly generalizing. Moreover $\text{SIM}_D$ can be taken to be the distribution $\mathcal{A}(\cdot,\cdot)$, taken over samples and internal randomness.
\end{theorem}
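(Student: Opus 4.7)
The plan is to establish perfect generalization via the standard privacy-loss martingale concentration argument, tailored to the user-level setting. Setting $\text{SIM}_D := \mathcal{A}(\cdot,\cdot)$ (as suggested by the theorem statement) reduces the task to showing that with probability at least $1/2$ over $S \sim D^T$, for every measurable $\mathcal{O} \subset \mathcal{Y}$,
\[
\Pr_r[\mathcal{A}(S;r) \in \mathcal{O}] \leq e^{1/2}\Pr_{S',r}[\mathcal{A}(S';r) \in \mathcal{O}] + 1/2.
\]

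The core tool is the privacy-loss random variable: for each event $\mathcal{O}$, let
\[
L_\mathcal{O}(S) = \log\frac{\Pr_r[\mathcal{A}(S;r)\in\mathcal{O}]}{\E_{S'\sim D^T}\Pr_r[\mathcal{A}(S';r)\in\mathcal{O}]}.
\]
By $T$-user $(\varepsilon,\delta)$-DP, swapping one user's subsample in $S$ changes the numerator by at most a factor of $e^\varepsilon$ off a $\delta$-exceptional slice of the output space. Viewing $L_\mathcal{O}(S)$ as a Doob martingale along a filtration that reveals the $T$ users one at a time, the increments are bounded by $\varepsilon$ outside a per-step $\delta$-exception, so a McDiarmid/Azuma-style concentration bound (with a union bound across the $T$ exceptional steps) yields
\[
\Pr_S\bigl[L_\mathcal{O}(S) > t\bigr] \leq \exp\!\left(-\tfrac{t^2}{2T\varepsilon^2}\right) + T\delta.
\]
Setting $t = 1/2$, the hypotheses $\varepsilon \leq c/\sqrt{T\log T}$ and $\delta \leq c/T$ each drive one of these two terms below a small constant, delivering the required $(1/2,1/2)$-closeness for any fixed $\mathcal{O}$.

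The main obstacle is that perfect generalization demands indistinguishability \emph{uniformly} over $\mathcal{O}$ for the \emph{same} $S$, not merely pointwise in $\mathcal{O}$, and $\mathcal{Y}$ is not assumed discrete so a naive union bound over events is unavailable. The resolution, following the Bassily--Freund and Cummings--Ligett--Nissim--Roth--Waggoner framework, is to collapse the uniform requirement to a single $S$-dependent worst-case statistic: define $\mathcal{O}^*(S) := \{y : \Pr_r[\mathcal{A}(S;r) = y] > e^{1/2}\Pr_{S',r}[\mathcal{A}(S';r) = y] \}$ and observe that the DP guarantee controls the entire distribution of the privacy loss (not only its value on one preselected event), so the martingale concentration applies directly to the total mass $\Pr_r[\mathcal{A}(S;r) \in \mathcal{O}^*(S)]$. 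Because the target parameters $(1/2,1/2,1/2)$ are loose, constants in the concentration step need not be optimized, and the advertised regime for $(\varepsilon,\delta)$ suffices. The second `Moreover' clause is immediate from the choice of simulator.
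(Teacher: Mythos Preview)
The paper does not prove this theorem: it is quoted as a black-box result from \cite{ghazi2024user}, with only the remark that the user-level statement follows from the item-level one by regarding $T$-user DP as ordinary DP over the enlarged input space $\mathcal{X}^{n/T}$. So there is no ``paper's own proof'' to compare against; your plan instead sketches the argument underlying the cited result.

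Your high-level plan (privacy-loss martingale along the user filtration, Azuma-type concentration, then the Bassily--Freund/CLNRW device for uniformity over events) is indeed the standard route to results of this type. One point deserves more care, however. The concentration bound you state,
\[
\Pr_S\bigl[L_\mathcal{O}(S) > t\bigr] \leq \exp\!\left(-\tfrac{t^2}{2T\varepsilon^2}\right) + T\delta,
\]
is derived for a \emph{fixed} event $\mathcal{O}$, and you then assert it ``applies directly'' to $\Pr_r[\mathcal{A}(S;r)\in\mathcal{O}^*(S)]$. But $\mathcal{O}^*(S)$ depends on $S$, so this is not a substitution into the fixed-event bound. What the cited works actually do is run the martingale argument on the \emph{output-level} privacy loss $L(y;S)=\log\frac{p_S(y)}{p_{\text{SIM}}(y)}$, obtaining a tail bound on $\Pr_{y\sim\mathcal{A}(S)}[L(y;S)>t]$ (equivalently, a hockey-stick divergence bound) that holds for typical $S$; the uniform-over-$\mathcal{O}$ statement then follows from the standard characterization of $(\varepsilon,\delta)$-indistinguishability in terms of this tail. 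Your sketch gestures at this but conflates the two martingales. If you intend to flesh this out rather than cite it, that distinction is where the real work lies.
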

We remark that as written \cite[Theorem 31]{ghazi2024user} is only stated for standard (item-level) DP, but the result follows immediately from viewing the user-level DP guarantee as standard DP over input space $\mathcal{X}^{n/T}$. Combined with \Cref{thm:old-to-new-global}, it is then elementary to move from DP to global stability:
\begin{proof}[Proof of \Cref{thm:DP-list} and \Cref{thm:list-DP-user} (Item 2)]
    The proof is exactly the same for \Cref{thm:DP-list} and \Cref{thm:list-DP-user}, with the exception that we do not need to handle sample and confidence decay in the latter. We therefore argue just the former.
    
    By assumption, we are given a $\beta'$-confident $(\varepsilon,\delta)$-DP algorithm $\mathcal{A}$ on $\ell=C_{DP}(n',\beta',\varepsilon,\delta)$ random bits and $n'$ samples whose privacy parameters meet the requirements of \Cref{thm:DP-to-PG}. $\mathcal{A}$ is therefore $(.5,.5,.5)$-perfectly generalizing with respect to $\mathcal{A}(\cdot,\cdot)$. Fix an input sample $S$ where $\mathcal{A}(S,\cdot)$ is $(.5,.5)$-close to $\mathcal{A}(\cdot,\cdot)$. Write $L=\text{Supp}(\mathcal{A}(S,\cdot))$, and observe we have the trivial bound $|L| \leq 2^{\ell}$. On the other hand, distributional closeness implies
    \[
    1=\Pr[\mathcal{A}(S,\cdot) \in L] \leq e^{1/2}\Pr_{S \sim D^n,r}[\mathcal{A}(S,r) \in L] + \frac{1}{2},
    \]
    and re-arranging, that
    \[
    \Pr_{S \sim D^n,r}[\mathcal{A}(S,r) \in L] \geq \frac{1}{2e^{1/2}}.
    \]
    Thus $\mathcal{A}$ has a $\frac{1}{2|L|e^{1/2}}$-heavy-hitter. We may now apply our parametrized heavy-hitter to globally stable conversion (\Cref{cor:parametrized-old-to-new}) to convert $\mathcal{A}$ to a $\beta'$-confident deterministic globally stable algorithm on $n=n'\cdot \tilde{O}(\frac{\log(1/\beta)}{|L|^2})$ samples for $\beta \leq O(\beta' |L|$) as desired.
    % . Finally, we can easily transform $\mathcal{A}$ into an $O(L)$-list-replicable algorithm by running $\mathcal{A}$ $O(L\log(1/\beta))$ times and outputting the empirically most frequent hypothesis. Namely by standard concentration bounds, the probability a hypothesis with true frequency less than $O(1/L)$ occurs as the mode is at most $\beta$, meaning with probability $1-\beta$ the output lies in the at most $O(L)$ heavy hitters of $\mathcal{A}$ as desired.
\end{proof}
\section{The Stable Complexity of Agnostic Learning}\label{sec:PAC-cert}
In this section we bound the certificate complexity of PAC-Learning, resolving a main open question of \cite{chase2024local} (stated in the equivalent language of global-stability/list-replicability). We restate the theorem here for convenience.

\begin{theorem}[Certificate Complexity of Agnostic Learning]\label{thm:stable-agnostic}
    Let $(X,H)$ be a hypothesis class of Littlestone dimension $d$. There exists a $\beta$-confident better than $\frac{1}{2}$-replicable learner with
    \begin{enumerate}
        \item Sample Complexity: $m(\alpha,\beta) \leq \exp(\poly(d))\poly(\alpha,\log(1/\beta))$
        \item Certificate Complexity: $C_{\text{Rep}} \leq \poly(d)+O(VC(H)\log\frac{d}{\alpha})$
    \end{enumerate}
Moreover, if $d=\infty$, $C_{\text{Rep}}=\infty$, i.e.\ there is no replicable or globally stable learner for $H$.
\end{theorem}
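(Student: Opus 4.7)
The plan is to reduce to the realizable setting, where \cite{ghazi2021sample} provides an $\eta$-heavy-hitter globally stable PAC learner $\mathcal{A}_{\text{PAC}}$ for $(X,H)$ with $\eta \geq 2^{-\poly(d)}\alpha^{O(1)}$ on $n=\poly(d,1/\alpha,\log(1/\beta'))$ samples at confidence $\beta'$. Given any $\eta'$-heavy-hitter agnostic learner, \Cref{cor:parametrized-old-to-new} yields a deterministic $\Omega(\eta')$-globally stable agnostic learner, and \Cref{thm:equiv} then converts it into a $>\tfrac{1}{2}$-replicable learner using $\log(1/\eta')+O(1)$ random bits. It thus suffices to build an $\eta'$-heavy-hitter agnostic learner with $\log(1/\eta')=\poly(d)+O(VC(H)\log(d/\alpha))$ and $\beta$-confidence.

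To construct it, I first fix parameters so that $\mathcal{A}_{\text{PAC}}$ achieves realizable accuracy $\alpha/4$ at confidence $\beta'=\eta/4$, guaranteeing any $\eta/2$-heavy hitter of $\mathcal{A}_{\text{PAC}}$ is $(\alpha/4)$-accurate for its realizable target. Draw $T=O(\log(1/\beta)/\eta)$ independent unlabeled samples $S_U^{(1)},\ldots,S_U^{(T)}$ from the $X$-marginal of $D$, each of size $n_U=\poly(d,1/\alpha)$. For each $t$, enumerate all at most $n_U^{O(VC(H))}$ labelings of $S_U^{(t)}$ realizable by $H$ (by \Cref{lemma:VC}), run $\mathcal{A}_{\text{PAC}}$ on each labeled sample, and collect the outputs into a multiset $L$. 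Let $h^* \in H$ be $(\alpha/4)$-optimal for $D$; the labeling $h^*(S_U^{(t)})$ is always enumerated, so each batch implicitly invokes $\mathcal{A}_{\text{PAC}}$ on the realizable distribution induced by $h^*$. Global stability yields a canonical $h_D$ that is $(\alpha/4)$-close to $h^*$, hence $(\alpha/2)$-optimal for $D$, and occurs in the $t$th batch with probability $\geq\eta$. A Chernoff bound gives that $h_D$ appears at least $\eta T/2$ times in $L$ except with probability $\beta/2$. Prune $L$ to hypotheses of frequency $\geq\eta T/2$: the pruned list $L'$ contains $h_D$ and has $|L'|\leq O(n_U^{O(VC(H))}/\eta)$. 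Finally, use a fresh labeled validation sample of size $\tilde{O}((VC(H)+\log|L'|)/\alpha^2)$ to discard any hypothesis whose empirical error exceeds the list minimum by more than $\alpha/2$; uniform convergence over $L'$ ensures only $\alpha$-optimal hypotheses survive with probability $\geq 1-\beta/2$. Output a uniformly random survivor: this is an $\Omega(\eta/n_U^{O(VC(H))})$-heavy-hitter agnostic learner, giving $\log(1/\eta')=\log(1/\eta)+O(VC(H)\log n_U)=\poly(d)+O(VC(H)\log(d/\alpha))$ as required.

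The sample count is dominated by $Tn_U=\poly(d,1/\alpha,\log(1/\beta))$ unlabeled draws plus the amplification blowup of \Cref{thm:equiv}, which multiplies by $2^{O(C_{\text{Glob}})}=\exp(\poly(d))$, yielding $\exp(\poly(d))\poly(1/\alpha,\log(1/\beta))$ overall. For the converse, when $d=\infty$ the impossibility results of \cite{bun2020equivalence,chase2024local} rule out any nontrivial globally stable learner for $(X,H)$, so by \Cref{thm:equiv} no $>\tfrac{1}{2}$-replicable (and by \Cref{thm:DP-list} no sufficiently DP) learner exists either.

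The main obstacle I anticipate is the circular dependence between the external confidence $\beta$ and the internal heavy-hitter parameter $\eta$. A naive reduction runs $\mathcal{A}_{\text{PAC}}$ once with confidence $\beta$, forcing $n_U$ to scale with $\log(1/\beta)$, which pollutes the heavy-hitter weight $\eta/n_U^{O(VC(H))}$ and makes $C_{\text{Rep}}$ grow with $\log\log(1/\beta)$, blowing up the target bound. The fix is to decouple $\eta$ and $\beta$: run $\mathcal{A}_{\text{PAC}}$ at the weak confidence $\beta'=\Theta(\eta)$ that is independent of $\beta$, then amplify confidence externally via $T=O(\log(1/\beta)/\eta)$ independent unlabeled batches followed by frequency pruning. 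Tuning the pruning threshold so that $L'$ is small enough for uniform convergence while still provably containing $h_D$ is the key parameter-balancing step.
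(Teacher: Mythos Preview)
Your proposal is correct and follows essentially the same approach as the paper: run the realizable globally-stable learner of \cite{ghazi2021sample} over all $H$-labelings of several independent unlabeled batches, prune by frequency to remove the dependence of the list size on $\beta$, prune by validation error, output a random survivor, and finally convert to a $>\tfrac{1}{2}$-replicable learner via the global-stability/certificate equivalence. You also correctly identify the decoupling of the inner confidence $\beta'$ from the outer confidence $\beta$ as the key obstacle, and your fix (weak inner confidence $\beta'=\Theta(\eta)$ plus $T$-fold repetition and frequency pruning) is exactly what the paper does.

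Two minor points. First, your claim that $Tn_U=\poly(d,1/\alpha,\log(1/\beta))$ is off: with $\eta=2^{-\poly(d)}\alpha^{O(1)}$ and $T=O(\log(1/\beta)/\eta)$, the unlabeled sample count is already $\exp(\poly(d))\poly(1/\alpha,\log(1/\beta))$, not polynomial in $d$. This does not hurt the theorem as stated (the final bound is $\exp(\poly(d))$ anyway due to the conversion blowup in \Cref{thm:equiv}), but the paper instead invokes the stronger \emph{list-stable} guarantee of \cite{ghazi2021sample,GhaziKM21} (\Cref{thm:list-realizable}), whose heavy-hitter weight is $\Omega(1/d)$ rather than $2^{-\poly(d)}$; this lets the paper take $T=\poly(d,\log(1/\beta))$ and obtain a $\poly(d,1/\alpha,\log(1/\beta))$-sample heavy-hitter learner \emph{before} the replicability conversion, which is the improvement noted after \Cref{thm:intro-agnostic}. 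Second, your converse citations are imprecise: \cite{bun2020equivalence} proves the positive direction (finite Littlestone $\Rightarrow$ globally stable) and \cite{chase2024local} addresses error-\emph{independent} agnostic bounds, neither of which directly yields impossibility for $d=\infty$. The paper's chain is: replicable agnostic $\Rightarrow$ replicable realizable $\Rightarrow$ boost to arbitrary $\alpha$ via \cite{ImpLPS22} $\Rightarrow$ DP realizable $\Rightarrow$ contradiction with \cite{alon2019private}. You have the right structure but should anchor the impossibility in \cite{alon2019private}.
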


Our proof is inspired by \cite{bun2023stability}'s version of \cite{hopkins2021realizable}'s realizable-to-agnostic reduction for replicable learners. \cite{bun2023stability} cannot be used directly due to its substantial reliance on additional randomness (indeed they actually have randoness scaling not only with confidence $\beta$, but with the size of the concept class $H$ which may be infinite in our case). Nevertheless, we build off the core idea, which uses the following realizable-case `list-stable' learner of  \cite{ghazi2021sample,GhaziKM21}:\footnote{We remark this is not quite as stated in \cite{GhaziKM21}, which only claims exponential dependence on $\alpha$. However, polynomial dependence as stated here is immediate from \cite{ghazi2021sample}}
\begin{theorem}[Realizable List-Stable Learning {\cite{ghazi2021sample,GhaziKM21}}]\label{thm:list-realizable}
    Let $(X,H)$ be a hypothesis class of Littlestone dimension $d$. There exists a algorithm on $n(\alpha,\beta) \leq \poly(d,\alpha,\log(1/\beta))$ samples which outputs a list of $\alpha$-accurate hypotheses of size at most $\exp(\poly(d))\alpha^{-O(1)}$ with probability at least $1-\beta$. Moreover, for every distribution $D$, the list has a $\Omega(1/d)$-heavy-hitter, i.e.\ some hypothesis $h_D$ such that
    \[
    \Pr_{S}[h_D \in \mathcal{A}(S)] \geq \Omega\left(\frac{1}{d}\right).
    \]
\end{theorem}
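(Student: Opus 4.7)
The plan is to prove the upper bound via an agnostic-to-realizable reduction in the spirit of \cite{hopkins2021realizable,bun2023stability}, lifting the realizable list-stable learner $\mathcal{A}_{\text{real}}$ of \Cref{thm:list-realizable} into an agnostic heavy-hitter globally stable learner, then invoking \Cref{cor:parametrized-old-to-new} and \Cref{thm:equiv} to convert into the desired $>\tfrac{1}{2}$-replicable algorithm. Write $\eta=\Omega(1/d)$ for the heavy-hitter probability guaranteed by \Cref{thm:list-realizable} and let $n_0 = \poly(d,\alpha^{-1})$ be its sample complexity when run at confidence $\eta$. The reduction draws $T$ independent unlabeled samples $S_U^{(1)},\ldots,S_U^{(T)}$ of size $n_0$; for every $t\in[T]$ and every $H$-realizable labeling $\ell:S_U^{(t)}\to\{0,1\}$ (at most $n_0^{O(VC(H))}$ such labelings by \Cref{lemma:VC}) it runs $\mathcal{A}_{\text{real}}$ on $(S_U^{(t)},\ell)$, aggregates all output hypotheses into a multiset $L$, prunes $L$ to those of multiplicity $\geq \Omega(\eta T)$, and outputs a uniformly random survivor whose empirical error on a fresh labeled evaluation batch is within $\alpha/2$ of the minimum.

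Correctness hinges on the ``OPT labeling'' of each unlabeled sample. Fix a near-optimal $h_{\text{OPT}}\in H$ with $err_D(h_{\text{OPT}}) \leq OPT_D + \alpha/4$. When $\ell = h_{\text{OPT}}|_{S_U^{(t)}}$, the input to $\mathcal{A}_{\text{real}}$ is drawn from a distribution realizable by $h_{\text{OPT}}$, so by \Cref{thm:list-realizable} its canonical heavy-hitter $h_D$ (whose realizable error is at most $\alpha/4$, and hence agnostic excess error at most $\alpha/2$) lands in the output list with probability $\geq \eta$ per outer run. Setting $T = O(\log(1/\beta)/\eta)$ and applying Chernoff over the $T$ independent repetitions ensures $h_D$ appears in $L$ at least $\Omega(\eta T)$ times with probability $\geq 1-\beta/3$, hence survives pruning. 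Since $|L| \leq T \cdot n_0^{O(VC(H))} \cdot \exp(\poly(d))\alpha^{-O(1)}$, the pruned set has size $|L'| \leq |L|/\Omega(\eta T) \leq n_0^{O(VC(H))}\cdot\exp(\poly(d))\alpha^{-O(1)}$, so the composite algorithm has a heavy-hitter of weight $\Omega(1/|L'|)$ and
\begin{equation*}
C_{\text{Glob-HH}} \;\leq\; \log|L'| \;\leq\; \poly(d) + O(VC(H)\log(d/\alpha)).
\end{equation*}

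Applying \Cref{cor:parametrized-old-to-new} de-randomizes this into a deterministic globally stable learner of the same quality, and \Cref{thm:equiv} boosts the result into a $>\tfrac{1}{2}$-replicable learner, yielding $C_{\text{Rep}} \leq \poly(d) + O(VC(H)\log(d/\alpha)) + 1$. Total sample cost is $Tn_0$ unlabeled plus a $\poly(\alpha^{-1},\log(1/\beta))$ labeled evaluation batch, further inflated by the $2^{O(C_{\text{Glob}})}\polylog(1/\beta)$ overhead of the two conversions, giving $\exp(\poly(d))\poly(\alpha^{-1},\log(1/\beta))$ overall. For the converse ($d=\infty$), \Cref{thm:equiv} would reduce any finite-$C_{\text{Rep}}$ learner to a deterministic globally stable one, which by the chain of \Cref{thm:DP-list} (stability to approximate DP) and the characterization of approximate-DP PAC learnability by finite Littlestone dimension \cite{bun2020equivalence} forces $d<\infty$; contrapositively, $d=\infty$ rules out any better-than-$\tfrac{1}{2}$ replicable (or globally stable) learner.

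The main obstacle is decoupling $\mathcal{A}_{\text{real}}$'s internal confidence from the target agnostic confidence $\beta$. Setting the internal confidence to $\beta$ directly would couple $n_0$ to $\beta$ and inflate the VC term of the certificate bound by a $\log\log(1/\beta)$ factor, contradicting the $\beta$-uniformity required by \Cref{thm:intro-agnostic}. The multiplicity-pruning device precisely breaks this dependence: it runs $\mathcal{A}_{\text{real}}$ at $\beta$-independent confidence $\eta$, then upgrades the per-run $\eta$-probability of producing $h_D$ into a $(1-\beta)$-probability \emph{survival} event via Chernoff over $T\approx\log(1/\beta)/\eta$ independent repeats, without ever letting $\beta$ leak into the list size (and hence the heavy-hitter weight).
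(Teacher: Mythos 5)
There is a fundamental mismatch here: the statement you were asked to prove is \Cref{thm:list-realizable} itself --- the existence of a \emph{realizable-case} list-stable learner with $\poly(d,\alpha,\log(1/\beta))$ sample complexity, list size $\exp(\poly(d))\alpha^{-O(1)}$, and an $\Omega(1/d)$-heavy-hitter in the output list. Your proposal does not prove this. Instead, it takes \Cref{thm:list-realizable} as a given black box (``lifting the realizable list-stable learner $\mathcal{A}_{\text{real}}$ of \Cref{thm:list-realizable}'') and uses it to derive the agnostic certificate-complexity bound, i.e.\ \Cref{thm:stable-agnostic}. As an argument for the stated theorem this is circular: the very first step assumes the conclusion. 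The paper itself does not prove \Cref{thm:list-realizable} either --- it imports it from \cite{ghazi2021sample,GhaziKM21}, where the construction is of an entirely different character (a tournament over runs of the Standard Optimal Algorithm for online learning on carefully subsampled sequences, which is where the Littlestone dimension, the $\exp(\poly(d))$ list size, and the $\Omega(1/d)$ heavy-hitter probability actually come from). None of that machinery appears in your write-up, so nothing in the proposal establishes the realizable-case guarantee.

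For what it is worth, the argument you did write is essentially the paper's proof of \Cref{thm:stable-agnostic}: the unlabeled-sample trick, running $\mathcal{A}_{\text{real}}$ over all $n_0^{O(VC(H))}$ realizable labelings, repeating over $T\approx \log(1/\beta)/\eta$ independent draws, and pruning by multiplicity $\Omega(\eta T)$ to decouple the list size from $\beta$ are all exactly the steps in \Cref{sec:PAC-cert}, and your accounting of the resulting heavy-hitter weight and certificate complexity is correct for that theorem. But it answers the wrong question. To address the statement actually posed, you would need to either reproduce the construction of \cite{ghazi2021sample} (or at least sketch why finite Littlestone dimension yields a list learner with the stated quantitative parameters in the realizable setting), or explicitly note that the statement is a citation whose proof lies outside the present paper.
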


Given the above, the core idea is to run this list-stable learner over all possible labelings of a large enough unlabeled data sample and return a commonly appearing element with high accuracy over the agnostic distribution. This results in a randomized learner with a good heavy hitter, which can then be converted to a globally stable learner by \Cref{thm:old-to-new-global} and finally a replicable learner with good certificate complexity by \Cref{thm:equiv}.

\begin{proof}[Proof of \Cref{thm:stable-agnostic}]
    Fix $\beta' = \exp(\poly(d))\alpha^{-O(1)}$ to be sufficiently smaller than the list size from \Cref{thm:list-realizable} and let $\mathcal{A}$ be the corresponding promised learner. Consider the following process generating a list of potential hypotheses from $\mathcal{A}$:
    \begin{enumerate}
        \item Draw $T=\poly(d,\log(1/\beta))$ unlabeled samples of size $n(\alpha/8,\beta')$ from $D$, denoted $S^{(i)}_U$.
        \item Run $\mathcal{A}$ over all labelings of each $S^{(i)}_U$ in the class and denote the resulting hypothesis (multi)set as
        \[
        C(S_U) \coloneqq \{ \mathcal{A}(S^{(i)}_U,h(S^{(i)}_U))~:~ h \in H, i \in [T]\}
        \]
    \end{enumerate}   
Note that by \Cref{lemma:VC}, the size of $C(S_U)$ as a multiset is at most 
\[
|C(S_U)| \leq Tn(\alpha/8,\beta')^{O(VC(H))}\exp(\poly(d)) \leq T\poly(d,\alpha^{-O(1)})^{VC(H)}\exp(\poly(d)).
\]
The idea is now that $C(S_U)$ should contain many copies of a heavy hitter from the output list of some optimal labeling $h_{OPT}$ (which furthermore must be $\alpha/2$-correct by our confidence assumption). The only issues are 1) $C(S_U)$ may also contain many bad hypotheses, and 2) $|C(S_U)|$ currently depends on $T$ (which depends on the confidence $\beta$), so is too large for our purposes. We can fix both of these problems via a pruning step that removes any high error or non-heavy-hitter hypotheses. Denote by $w(h)$ the number of times $h$ appears in $C(S_U)$, and consider the following procedure:
\begin{enumerate}
        \item Draw a new labeled sample $S_L$ of $\frac{\log |C(S_U)| + \log \frac{1}{\beta}}{\alpha^2}$ samples
        \item Remove any hypothesis $h \in C(S_U)$ with high error or low empirical probability:
        \[
        \text{Pruned}(C(S_U)) \coloneqq \left\{h \in C(S_U): err_{S_L}(h) \leq \min_{h' \in C(S_U)} err_{S_L}(h') + \frac{3}{4}\alpha~\text{and}~ w(h) \geq c'\frac{T}{d}\right\}
        \]
        \item Output a random hypothesis in $\text{Pruned}(C(S_U))$
\end{enumerate}
where $c'>0$ is some constant to be chosen. Note that now, by construction, $\text{Pruned}(C(S_U))$ has at most $\poly(d/\alpha)^{VC(H)}\exp(\poly(d))$ hypotheses since at most this many can appear $\Omega(T/d)$ times.

We now argue correctness and (heavy-hitter) global stability of the procedure (certificate complexity will then follow from \Cref{thm:old-to-new-global} and \Cref{thm:equiv}). By standard uniform convergence bounds, we have that with probability at least $1-\beta/2$ the empirical and true error of every $h \in C(S_U)$ are close:
\[
|err_D(h)-err_{S_L}(h)| \leq \frac{\alpha}{8}.
\]
Conditioned on the above, correctness follows as long as $C(S_U)$ actually contains a good hypothesis:
\[
\min_{h' \in C(S_U)} err_{S_L}(h') \leq \text{OPT}_D+\frac{\alpha}{8}
\]
since pruning then only keeps hypotheses with empirical error at most $\text{OPT}+\frac{7}{8}\alpha$ and therefore of true error at most $\alpha$ as desired.

In fact we will argue not just that this event occurs with probability at least $1-\beta/2$, but that there exists a \textit{fixed} $\alpha/8$-optimal hypothesis $h_D$ that appears with this probability. Since our final algorithm outputs a random hypothesis in $\text{Pruned}(C(S_U))$, this implies both our desired accuracy and global stability. To see this, fix some optimal hypothesis $h_{OPT}$ in the class, and observe that the original set $C(S_U)$ contains $T$ independent runs of $\mathcal{A}$ on samples labeled by $h_{OPT}$ (since we run over all labelings). $\mathcal{A}$ is promised to have an $\alpha/8$-optimal $\Omega(1/d)$-heavy-hitter $h_D$ with respect to the distribution $D_X \times h_{OPT}$ where the marginal over data is given by $D$ and the labeling is given by $h_{OPT}$. As a result $h_D$ itself is $\frac{1}{8}\alpha$-optimal with respect to the true distribution $D$. Moreover, since $T=\poly(d,\log(1/\beta))$, a standard Chernoff bound implies $h_D$ appears at least $c'T/d$ times with probability at least $1-\beta/2$ for the appropriate choice of constant $c'>0$, completing the argument.

The above analysis promises the given algorithm is $\beta$-correct and has an $\Omega(\exp(\poly(d))\alpha^{-O(1)})$-heavy hitter. We can convert this into the desired replicable learner using \Cref{thm:old-to-new-global} and \Cref{thm:equiv} with sample overhead at worst polynomial in the stability parameter as desired.

    It is left to show that when $d=\infty$, there is no replicable (equivalently, globally stable) learner with better than $\frac{1}{2}$ error. To see this, first observe that a replicable agnostic learner with better than $\frac{1}{2}$ error implies a replicable learner in the realizable case with better than $\frac{1}{2}$ error. By \cite{ImpLPS22}, this can be boosted to a replicable realizable learner with arbitrary error $\alpha$, then transformed to a differentially private one by standard stability-to-privacy reductions (e.g.\ \cite{bun2020equivalence} or the transforms presented in this work). The existence of this final learner then violates the impossibility of private realizable learning of infinite Littlestone classes \cite{alon2019private}, completing the proof.
\end{proof}
\section*{Acknowledgements}
We thank Zachary Chase for many helpful discussions on list stability and differential privacy, especially for referring us to \cite{ghazi2021sample}'s list-stability bound with polynomial dependence in excess error in the realizable regime, and suggesting how to make list-replicable learners deterministic.

\bibliographystyle{amsalpha}  
\bibliography{references} 

\newcommand{\etalchar}[1]{$^{#1}$}
\providecommand{\bysame}{\leavevmode\hbox to3em{\hrulefill}\thinspace}
\providecommand{\MR}{\relax\ifhmode\unskip\space\fi MR }
% \MRhref is called by the amsart/book/proc definition of \MR.
\providecommand{\MRhref}[2]{%
  \href{http://www.ams.org/mathscinet-getitem?mr=#1}{#2}
}
\providecommand{\href}[2]{#2}
\begin{thebibliography}{GGKM21}

\bibitem[ALMM19]{alon2019private}
Noga Alon, Roi Livni, Maryanthe Malliaris, and Shay Moran, \emph{Private pac learning implies finite littlestone dimension}, Proceedings of the 51st Annual ACM SIGACT Symposium on Theory of Computing, 2019, pp.~852--860.

\bibitem[BBG18]{balle2018privacy}
Borja Balle, Gilles Barthe, and Marco Gaboardi, \emph{Privacy amplification by subsampling: Tight analyses via couplings and divergences}, Advances in neural information processing systems \textbf{31} (2018).

\bibitem[BDRS18]{BunDRS18}
Mark Bun, Cynthia Dwork, Guy~N. Rothblum, and Thomas Steinke, \emph{Composable and versatile privacy via truncated {CDP}}, Proceedings of the 50th Annual {ACM} {SIGACT} Symposium on Theory of Computing, {STOC} 2018, Los Angeles, CA, USA, June 25-29, 2018 (Ilias Diakonikolas, David Kempe, and Monika Henzinger, eds.), {ACM}, 2018, pp.~74--86.

\bibitem[BF16]{BassilyF16}
Raef Bassily and Yoav Freund, \emph{Typicality-based stability and privacy}, CoRR \textbf{abs/1604.03336} (2016).

\bibitem[BGH{\etalchar{+}}23]{bun2023stability}
Mark Bun, Marco Gaboardi, Max Hopkins, Russell Impagliazzo, Rex Lei, Toniann Pitassi, Jessica Sorrell, and Satchit Sivakumar, \emph{Stability is stable: Connections between replicability, privacy, and adaptive generalization}, arXiv preprint arXiv:2303.12921 (2023).

\bibitem[BGHH25]{blonda2025stability}
Ari Blonda, Shan Gao, Hamed Hatami, and Pooya Hatami, \emph{Stability and list-replicability for agnostic learners}, arXiv preprint arXiv:2501.05333 (2025).

\bibitem[BLM20]{bun2020equivalence}
Mark Bun, Roi Livni, and Shay Moran, \emph{An equivalence between private classification and online prediction}, 2020 IEEE 61st Annual Symposium on Foundations of Computer Science (FOCS), IEEE, 2020, pp.~389--402.

\bibitem[BNS16]{bun2016simultaneous}
Mark Bun, Kobbi Nissim, and Uri Stemmer, \emph{Simultaneous private learning of multiple concepts}, Proceedings of the 2016 ACM Conference on Innovations in Theoretical Computer Science, 2016, pp.~369--380.

\bibitem[CCMY24]{chase2024local}
Zachary Chase, Bogdan Chornomaz, Shay Moran, and Amir Yehudayoff, \emph{Local borsuk-ulam, stability, and replicability}, Proceedings of the 56th Annual ACM Symposium on Theory of Computing, 2024, pp.~1769--1780.

\bibitem[CLN{\etalchar{+}}16]{CummingsLNRW16}
Rachel Cummings, Katrina Ligett, Kobbi Nissim, Aaron Roth, and Zhiwei~Steven Wu, \emph{Adaptive learning with robust generalization guarantees}, Proceedings of the 29th Conference on Learning Theory, {COLT} 2016, New York, USA, June 23-26, 2016 (Vitaly Feldman, Alexander Rakhlin, and Ohad Shamir, eds.), {JMLR} Workshop and Conference Proceedings, vol.~49, JMLR.org, 2016, pp.~772--814.

\bibitem[CMY23]{chase2023replicability}
Zachary Chase, Shay Moran, and Amir Yehudayoff, \emph{Replicability and stability in learning}, arXiv preprint arXiv:2304.03757 (2023).

\bibitem[CSV24]{canonne2024randomness}
Cl{\'e}ment~L Canonne, Francis~E Su, and Salil~P Vadhan, \emph{The randomness complexity of differential privacy}.

\bibitem[DKM{\etalchar{+}}06]{dwork2006our}
Cynthia Dwork, Krishnaram Kenthapadi, Frank McSherry, Ilya Mironov, and Moni Naor, \emph{Our data, ourselves: Privacy via distributed noise generation}, Advances in Cryptology-EUROCRYPT 2006: 24th Annual International Conference on the Theory and Applications of Cryptographic Techniques, St. Petersburg, Russia, May 28-June 1, 2006. Proceedings 25, Springer, 2006, pp.~486--503.

\bibitem[DMNS06]{dwork2006calibrating}
Cynthia Dwork, Frank McSherry, Kobbi Nissim, and Adam Smith, \emph{Calibrating noise to sensitivity in private data analysis}, Theory of cryptography conference, Springer, 2006, pp.~265--284.

\bibitem[DPWV23]{dixon2023list}
Peter Dixon, A~Pavan, Jason~Vander Woude, and NV~Vinodchandran, \emph{List and certificate complexities in replicable learning}, arXiv preprint arXiv:2304.02240 (2023).

\bibitem[EHKS24]{eaton2024replicable}
Eric Eaton, Marcel Hussing, Michael Kearns, and Jessica Sorrell, \emph{Replicable reinforcement learning}, Advances in Neural Information Processing Systems \textbf{36} (2024).

\bibitem[EKK{\etalchar{+}}22]{esfandiari2022replicable}
Hossein Esfandiari, Alkis Kalavasis, Amin Karbasi, Andreas Krause, Vahab Mirrokni, and Grigoris Velegkas, \emph{Replicable bandits}, arXiv preprint arXiv:2210.01898 (2022).

\bibitem[EKM{\etalchar{+}}24]{esfandiari2024replicable}
Hossein Esfandiari, Amin Karbasi, Vahab Mirrokni, Grigoris Velegkas, and Felix Zhou, \emph{Replicable clustering}, Advances in Neural Information Processing Systems \textbf{36} (2024).

\bibitem[GGKM21]{ghazi2021sample}
Badih Ghazi, Noah Golowich, Ravi Kumar, and Pasin Manurangsi, \emph{Sample-efficient proper pac learning with approximate differential privacy}, Proceedings of the 53rd Annual ACM SIGACT Symposium on Theory of Computing, 2021, pp.~183--196.

\bibitem[GKK{\etalchar{+}}24]{ghazi2024user}
Badih Ghazi, Pritish Kamath, Ravi Kumar, Pasin Manurangsi, Raghu Meka, and Chiyuan Zhang, \emph{User-level differential privacy with few examples per user}, Advances in Neural Information Processing Systems \textbf{36} (2024).

\bibitem[GKM21]{GhaziKM21}
Badih Ghazi, Ravi Kumar, and Pasin Manurangsi, \emph{User-level differentially private learning via correlated sampling}, Advances in Neural Information Processing Systems 34: Annual Conference on Neural Information Processing Systems 2021, NeurIPS 2021, December 6-14, 2021, virtual (Marc'Aurelio Ranzato, Alina Beygelzimer, Yann~N. Dauphin, Percy Liang, and Jennifer~Wortman Vaughan, eds.), 2021, pp.~20172--20184.

\bibitem[GKVZ22]{goldwasser2022planting}
Shafi Goldwasser, Michael~P Kim, Vinod Vaikuntanathan, and Or~Zamir, \emph{Planting undetectable backdoors in machine learning models}, 2022 IEEE 63rd Annual Symposium on Foundations of Computer Science (FOCS), IEEE, 2022, pp.~931--942.

\bibitem[GL20]{garfinkel2020randomness}
Simson~L Garfinkel and Philip Leclerc, \emph{Randomness concerns when deploying differential privacy}, Proceedings of the 19th Workshop on Privacy in the Electronic Society, 2020, pp.~73--86.

\bibitem[HIK{\etalchar{+}}24]{hopkins2024replicability}
Max Hopkins, Russell Impagliazzo, Daniel Kane, Sihan Liu, and Christopher Ye, \emph{Replicability in high dimensional statistics}, arXiv preprint arXiv:2406.02628 (2024).

\bibitem[HKLM22]{hopkins2021realizable}
Max Hopkins, Daniel~M. Kane, Shachar Lovett, and Gaurav Mahajan, \emph{Realizable learning is all you need}, Conference on Learning Theory, 2-5 July 2022, London, {UK} (Po{-}Ling Loh and Maxim Raginsky, eds.), Proceedings of Machine Learning Research, vol. 178, {PMLR}, 2022, pp.~3015--3069.

\bibitem[ILPS22]{ImpLPS22}
Russell Impagliazzo, Rex Lei, Toniann Pitassi, and Jessica Sorrell, \emph{Reproducibility in learning}, {STOC} '22: 54th Annual {ACM} {SIGACT} Symposium on Theory of Computing, Rome, Italy, June 20 - 24, 2022 (Stefano Leonardi and Anupam Gupta, eds.), {ACM}, 2022, pp.~818--831.

\bibitem[KIYK24]{komiyama2024replicability}
Junpei Komiyama, Shinji Ito, Yuichi Yoshida, and Souta Koshino, \emph{Replicability is asymptotically free in multi-armed bandits}, arXiv preprint arXiv:2402.07391 (2024).

\bibitem[KKL{\etalchar{+}}24]{kalavasis2024replicable}
Alkis Kalavasis, Amin Karbasi, Kasper~Green Larsen, Grigoris Velegkas, and Felix Zhou, \emph{Replicable learning of large-margin halfspaces}, arXiv preprint arXiv:2402.13857 (2024).

\bibitem[KKMN09]{korolova2009releasing}
Aleksandra Korolova, Krishnaram Kenthapadi, Nina Mishra, and Alexandros Ntoulas, \emph{Releasing search queries and clicks privately}, Proceedings of the 18th international conference on World wide web, 2009, pp.~171--180.

\bibitem[KKMV23]{kalavasis2023statistical}
Alkis Kalavasis, Amin Karbasi, Shay Moran, and Grigoris Velegkas, \emph{Statistical indistinguishability of learning algorithms}, International Conference on Machine Learning, PMLR, 2023, pp.~15586--15622.

\bibitem[KP24]{kontorovich2024distribution}
Aryeh Kontorovich and Amichai Painsky, \emph{Distribution estimation under the infinity norm}, arXiv preprint arXiv:2402.08422 (2024).

\bibitem[KVYZ23]{karbasi2023replicability}
Amin Karbasi, Grigoris Velegkas, Lin Yang, and Felix Zhou, \emph{Replicability in reinforcement learning}, Advances in Neural Information Processing Systems \textbf{36} (2023), 74702--74735.

\bibitem[LSA{\etalchar{+}}21]{levy2021learning}
Daniel Levy, Ziteng Sun, Kareem Amin, Satyen Kale, Alex Kulesza, Mehryar Mohri, and Ananda~Theertha Suresh, \emph{Learning with user-level privacy}, Advances in Neural Information Processing Systems \textbf{34} (2021), 12466--12479.

\bibitem[MSS23]{moran2023bayesian}
Shay Moran, Hilla Schefler, and Jonathan Shafer, \emph{The bayesian stability zoo}, Advances in Neural Information Processing Systems \textbf{36} (2023), 61725--61746.

\bibitem[Sau72]{sauer1972density}
Norbert Sauer, \emph{On the density of families of sets}, Journal of Combinatorial Theory, Series A \textbf{13} (1972), no.~1, 145--147.

\bibitem[She72]{shelah1972combinatorial}
Saharon Shelah, \emph{A combinatorial problem; stability and order for models and theories in infinitary languages}, Pacific Journal of Mathematics \textbf{41} (1972), no.~1, 247--261.

\bibitem[Val84]{valiant1984theory}
Leslie~G Valiant, \emph{A theory of the learnable}, Proceedings of the sixteenth annual ACM symposium on Theory of computing, ACM, 1984, pp.~436--445.

\bibitem[VC71]{vapnik1971}
V.N. Vapnik and A.Ya. Chervonenkis, \emph{On the uniform convergence of relative frequencies of events to their probabilities.}, Theory of Probability and Its Applications (1971).

\bibitem[VC74]{vapnik1974theory}
Vladimir Vapnik and Alexey Chervonenkis, \emph{Theory of pattern recognition}, 1974.

\bibitem[WDP{\etalchar{+}}23]{woude2023geometry}
Jason~Vander Woude, Peter Dixon, A~Pavan, Jamie Radcliffe, and NV~Vinodchandran, \emph{Geometry of rounding: Near optimal bounds and a new neighborhood sperner's lemma}, arXiv preprint arXiv:2304.04837 (2023).

\end{thebibliography}
\appendix
\section{Amplifying Replicability}\label{app:amplify}
A particularly useful property of replicability is that it can be easily amplified. In the main body, we implicitly gave a randomness-efficient procedure to amplify better than $\frac{1}{2}$ replicability to any $\rho$ based on converting to and from global stability. Here, we give a randomness inefficient but sample-efficient transform with the added benefit of starting from any starting replicability parameter $\nu$. The transform is similar to that in \cite{kalavasis2023statistical} for so-called `TV-indistinguishable' algorithms, but has improved sample complexity with respect to $\nu$.
\begin{lemma}[Generic Amplification of Replicability]
    Fix $\nu,\beta>0$, and let $\mathcal{A}$ be an $(1-\nu)$-replicable, $\beta$-correct algorithm on $n=n(\nu,\beta)$ samples. For any $\rho>0$, there exists an efficient blackbox procedure amplifying $\mathcal{A}$ to a $\rho$-replicable, $\beta'$-correct algorithm $A'$ on $\tilde{O}\left(n\cdot \frac{\log(1/\beta)}{\rho^2\nu^2}\right)$ samples for $\beta' \leq \tilde{O}\left(\beta\frac{\log(1/\beta)}{\rho^2\nu^3}\right)$.
\end{lemma}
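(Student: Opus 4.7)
The plan adapts the random-threshold amplification underlying \Cref{alg:global-to-cert} to weakly replicable algorithms, using shared randomness both to isolate a ``good'' seed for $\mathcal{A}$ and to convert the resulting conditional distribution into a replicable output. The starting observation is that $(1-\nu)$-replicability of $\mathcal{A}$ is equivalent to $\E_r[\|p_r\|_2^2] \geq \nu$, where $p_r(h) \coloneqq \Pr_S[\mathcal{A}(S;r) = h]$. By Markov's inequality, at least a $\nu/2$-fraction of the shared strings $r$ are ``good'' in the sense that $\|p_r\|_2^2 \geq \nu/2$, and each such distribution admits a heavy hitter of weight $\geq \sqrt{\nu/2}$ together with at most $O(1/\nu)$ hypotheses of weight $\Omega(\nu)$---exactly the structure exploited by the thresholding argument in the proof of \Cref{thm:equiv}.

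I would construct $\mathcal{A}'$ by parsing the shared randomness as a list of candidate seeds $r_1, r_2, \ldots$, a seed-selection threshold $t_0$ drawn uniformly from $[\tfrac{1}{4}\sqrt{\nu},\tfrac{1}{2}\sqrt{\nu}]$, an output threshold $t$, and a random total order $\phi$ on $\mathcal{Y}$. Using fresh samples the algorithm estimates $\max_h \hat{p}_{r_i}(h)$ for $i = 1, 2, \ldots$ and keeps the first candidate $r_{i^*}$ whose empirical maximum exceeds $t_0$. A further batch of fresh samples is used to estimate $\hat{p}_{r_{i^*}}$ to $\ell_\infty$-accuracy $\gamma = O(\rho\sqrt{\nu})$, after which the algorithm outputs the $\phi$-minimal hypothesis $h$ with $\hat{p}_{r_{i^*}}(h) \geq t$---exactly the thresholding step of \Cref{alg:global-to-cert} applied to a $\sqrt{\nu/2}$-heavy-hitter distribution.

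For the analysis I would bound the two replication steps independently. In the first step (agreement on $i^*$), standard concentration gives $\gamma$-accurate estimates for each candidate, and random choice of $t_0$ in an interval of length $\Theta(\sqrt{\nu})$ guarantees that the fraction of seeds whose true maximum lies within $O(\gamma)$ of $t_0$ is at most $\rho/2$, so both runs select the same $i^*$ with probability $\geq 1 - \rho/2$. In the second step, the inner thresholding replicates with probability $\geq 1 - \rho/2$ by the analysis of \Cref{thm:equiv}. Correctness is preserved because any seed $r$ with $\Pr_S[\mathcal{A}(S;r) \notin G_D] \ll \sqrt{\nu}$ admits only correct heavy hitters; a union bound across the candidates and empirical estimates absorbs the total failure probability into the claimed $\beta'$.

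The main obstacle will be coupling the two runs in the selection step, since they draw independent fresh samples when estimating each $\max_h \hat{p}_{r_i}(h)$. The random-threshold trick from \Cref{alg:global-to-cert} adapts to exactly this setting: provided $\gamma$ is small compared to the length of the $t_0$-interval, only a $\rho/2$-fraction of $t_0$-values are ``boundary'' (straddled by some empirical maximum across the two runs), so the two runs agree on $i^*$ whenever $t_0$ avoids this boundary. The final sample count decomposes as (candidates scanned) $\times$ (samples per candidate) plus (samples for the inner thresholding); balancing the two phases and pushing all empirical accuracies down to $\gamma = \tilde{\Theta}(\rho\nu)$ yields $\tilde{O}(n\log(1/\beta)/(\rho\nu)^{2})$ as claimed.
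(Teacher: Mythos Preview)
Your two-phase construction differs substantially from the paper's proof, which is a short reduction: draw $K=O(\log(1/\rho)/\nu)$ seeds, form the list-outputting algorithm $M(S)=\{\mathcal{A}(S;r_i):i\in[K]\}$ (list size $K$, stability $\Omega(\nu)$), and invoke \cite[Theorem~6.7]{bun2023stability} as a black box to get $\rho/2$-replicability in $\tilde O(\log(1/\beta)/(\rho^2\nu^2))$ runs of $M$. Your approach is more self-contained but has two gaps.

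First, the heavy-hitter bound is misstated: $\|p_r\|_2^2\geq\nu/2$ yields only $\max_h p_r(h)\geq\|p_r\|_2^2\geq\nu/2$, not $\sqrt{\nu/2}$ (the uniform distribution on $2/\nu$ atoms is tight). Since $\nu/2<\tfrac14\sqrt\nu$ for small $\nu$, even good seeds can have true max below your $t_0$-interval and the scan may never terminate; the interval must be taken as $[\nu/4,\nu/2]$.

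Second, and more seriously, the sample count does not close at $\tilde O(n/(\rho\nu)^{2})$. With the corrected interval of length $\Theta(\nu)$ and $K=\tilde O(1/\nu)$ candidates, the random-threshold argument for replicable seed selection requires $K\cdot 2\gamma/\Theta(\nu)\leq\rho/2$, i.e.\ $\gamma=\tilde O(\rho\nu^2)$, not $\tilde O(\rho\nu)$. The inner step has the same constraint: \Cref{alg:global-to-cert} with heavy-hitter weight $\eta\approx\nu/2$ needs $T\approx 1/(\eta\rho)$ thresholds inside an interval of length $O(\eta)$, forcing spacing $\gamma=O(\rho\eta^2)=O(\rho\nu^2)$. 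Each $\gamma$-accurate empirical estimate then costs $\tilde O(n/\gamma^2)=\tilde O(n/(\rho^2\nu^4))$ samples, and summing over $K$ candidates gives $\tilde O(n/(\rho^2\nu^5))$---three powers of $\nu$ off. Your construction would establish a weaker version of the lemma, but matching the stated $\nu^{-2}$ dependence appears to genuinely require the list-to-replicable machinery inside \cite{bun2023stability} rather than direct thresholding.
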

\begin{proof}
    Since $\mathcal{A}$ is $(1-\nu)$-replicable, by Markov's inequality there must be at least a $\nu/2$-fraction of $\mathcal{A}$'s random strings for which the distribution $\mathcal{A};r)$ has an $O(\nu/2)$-heavy-hitter. Draw a set $T$ of $O\left(\frac{\log(1/\rho)}{\nu}\right)$ random strings. The probability that at least one drawn string has a $O(\nu/2)$-heavy-hitter is at least $1-(1-\nu/2)^{O(\log(1/\rho)/\nu)} \geq 1-\rho/2$.

    Condition on this event and consider the algorithm which, given a sample $S$, outputs a list containing the result of $\mathcal{A}$ on $S$ across all random strings in $T$:
    \[
    M(S) \coloneqq \{ \mathcal{A}(S;r) : r \in T\}
    \]
    By assumption, $M$ satisfies the following properties, together called ``list-stability'' \cite{GhaziKM21,bun2023stability}:
    \begin{enumerate}
        \item $M(S)$ Outputs a list of size $|T|$
        \item $\exists h$ s.t.\ $h \in M(S)$ with probability at least $O(\nu)$.
        \item With probability at least $\beta'=|T|\beta$, all $h \in T$ are correct.
    \end{enumerate}
    We now appeal to \cite[Theorem 6.7]{bun2023stability}, which gives a $\rho/2$-replicable procedure that runs $M$ on
    \[
    m=\tilde{O}\left(\frac{\log(1/\beta)}{\rho^2\nu^2}\right)
    \]
    independent size-$n$ samples,\footnote{Here $\tilde{O}$ hides (at most cubic) logarithmic factors in $\rho$ and $\nu$.} and outputs some hypothesis lying in one the resulting $M(S)$ with probability at least $1-\beta$. By a union bound, all such hypotheses are correct except with probability $|T|m\beta$, and the entire procedure is $\rho$-replicable as desired.
\end{proof}

\end{document}